\patchcmd{\hyper@makecurrent}{%
    \ifx\Hy@param\Hy@chapterstring
        \let\Hy@param\Hy@chapapp
    \fi
}{%
    \iftoggle{inappendix}{
        \@checkappendixparam{chapter}%
        \@checkappendixparam{section}%
        \@checkappendixparam{subsection}%
        \@checkappendixparam{subsubsection}%
        \@checkappendixparam{paragraph}%
        \@checkappendixparam{subparagraph}%
    }{}%
}{}{\errmessage{failed to patch}}
\newcommand*{\@checkappendixparam}[1]{%
    \def\@checkappendixparamtmp{#1}%
    \ifx\Hy@param\@checkappendixparamtmp
        \let\Hy@param\Hy@appendixstring
    \fi
}
\apptocmd{\appendix}{\toggletrue{inappendix}}{}{\errmessage{failed to patch}}
\definecolor{darkred}{rgb}{0.55, 0.0, 0.0}
\definecolor{darkblue}{rgb}{0.0, 0.0, 0.55}
\definecolor{lightgreen}{RGB}{217,242,217}
\definecolor{lightred}{RGB}{245,216,216}
\definecolor{lightyellow}{RGB}{250,242,204}
\newcolumntype{L}{>{\raggedright\arraybackslash}X}
\DeclareMathOperator*{\argmin}{argmin}
\DeclareMathOperator*{\argmax}{argmax}
\DeclareMathOperator*{\argsup}{argsup}
\newcommand{\leg}[1]{#1^{\dagger}}  
\newcommand{\possexp}{\mathbb{E}^{\star}}  
\newcommand{\mode}[1]{#1^{\star}}  
\newcommand{\loss}{\hat{\ell}}  
\newcommand{\given}{\,|\,}  
\newcommand{\lr}[3]{\left#1#2\right#3}
\newcommand{\rb}[1]{\lr{(}{#1}{)}}
\renewcommand{\sb}[1]{\lr{[}{#1}{]}}
\newcommand{\cb}[1]{\lr{\{}{#1}{\}}}
\newtheorem{theorem}{Theorem}
\newtheorem{proposition}{Proposition}
\theoremstyle{remark}
\newtheorem{remark}{Remark}
\theoremstyle{definition}
\newtheorem{example}{Example}
\newcommand{\graycell}[1]{\cellcolor{gray!10}#1}
\title{Maxitive Donsker-Varadhan Formulation for Possibilistic Variational Inference
}
\author{
Jasraj Singh\thanks{Equal contribution.} \\
Nanyang Technological University \\
\And Shelvia Wongso$^*$ \\
Nanyang Technological University
\AND Jeremie Houssineau \\
Nanyang Technological University \\
\And Badr-Eddine Ch\'erief-Abdellatif \\
Sorbonne Universit\'e
}
\date{}
\begin{document}

\maketitle









\begin{abstract}
    Variational inference (VI) is a cornerstone of modern Bayesian learning, enabling approximate inference in complex models. However, its formulation depends on expectations and divergences defined through high-dimensional integrals, often rendering analytical treatment impossible and necessitating heavy reliance on approximations. Possibility theory, an imprecise probability framework, allows us to directly model epistemic uncertainty instead of relying on a subjective interpretation of probabilities. While this framework provides robustness and interpretability under sparse or imprecise information, adapting VI to the possibilistic setting requires rethinking core concepts such as divergences, which presuppose additivity. In this work, we develop a principled formulation for performing possibilistic VI by establishing a maxitive analogue of the classical Donsker-Varadhan formulation. The resulting framework enables us to derive a learning rule for possibilistic VI with exponential-family candidates and practical update rules for neural-network training, giving rise to a family of optimizers termed \texttt{CBOpt}. Finally, we demonstrate that \texttt{CBOpt}\footnote{An anonymized implementation is available at \href{https://anonymous.4open.science/r/cbopt-80D1}{https://anonymous.4open.science/r/cbopt-80D1} for review purposes; the de-anonymized repository will be released upon publication.} achieves competitive performance on both in-domain and out-of-domain image classification tasks.
\end{abstract}

\section{Introduction}

Variational inference (VI) has become a cornerstone of modern Bayesian learning, enabling approximate inference in complex models that were previously intractable. For many years, purely Bayesian methods were hampered by computational cost, but advances in Monte Carlo and especially variational approximations have brought them within reach \citep{blei2017vi-review,salimans2015mcmc-vi-bridge}. More recently, variational learning methods have been shown to perform well even at large scale and with little additional computational overhead \citep{shen2024variational}. This motivates further development of variational learning methods, given their ability to naturally provide uncertainty estimates. In VI, one typically posits a simple (parametric) variational family $\mathcal{Q}$ and then finds the probability distribution $q \in \mathcal{Q}$ that best approximates the true posterior $
\mode{q}_{\mathrm{add}}$ by maximizing the evidence lower bound (ELBO). 
This framework underlies many tools today, such as variational autoencoders \citep{kingma2022autoencodingvariationalbayes} and Bayesian neural networks \citep{blundell2015weight-uncertainty,kingma2015reparameterization-trick}, and blurs the line between tractable and intractable Bayesian analyses.

Introduced by \citet{zadeh1978fuzzy}, possibility theory, an imprecise probability framework, offers a complementary approach for modelling uncertainty. Unlike probability, possibility does not require additivity: events have a degree of plausibility ($\leq 1$) and a dual \textit{necessity}, with logical rules based on min/max operators. This allows handling epistemic uncertainty naturally. For example, the analogue of probability distributions, referred to as \emph{possibility functions}, can easily be made fully uninformative \citep{hieu2025decoupling}, a difficult endeavour in probability theory. 
The flexibility and scalability of possibility theory makes it suitable for challenging applications such as control problems under uncertainty \citep{xue2025orbit} and point process-based inference \citep{houssineau2021linear}.

A key challenge arises when we try to adapt variational methods to this possibilistic setting. Standard VI relies on notions like Shannon entropy, which do not directly carry over: for example, the ELBO includes an entropy term, but there is no obvious ``possibilistic entropy'' analogous to the Shannon entropy of a probability distribution \citep{shannon1949mathematical}. Similarly, divergences like Kullback–Leibler (KL) require additive measures \citep{kullback1951kld}. As a result, traditional VI objectives cannot be used out-of-the-box. Nonetheless, recent work has sought approximations in the possibilistic context. For instance, \citet{cella2025vi-im} developed a variational-like approximation for inferential models \citep{martin2013im}, and use a Monte Carlo-based strategy to search over the chosen family of possibility functions. In that spirit, our goal is to formulate a VI framework for general possibilistic models: we define an objective that aligns a tractable candidate possibility function with the (target) posterior possibility function, and which can be optimized in lieu of maximizing a probabilistic ELBO.

\paragraph{Contributions.} In this work, we present a novel maxitive analogue of the classical Donsker–Varadhan (DV) formulation (\autoref{thm:maxitive_DV}), which admits two dual variational characterizations whose optima are possibility functions that respectively lower- and upper-bound the possibilistic posterior. The associated objective, the consistency bound (CBO), is built on a maxitive pseudo-divergence that plays a role analogous to the KL divergence in standard VI. Building on this formulation, we develop a practical framework for possibilistic VI within a special class of exponential-family possibility functions, highlighting their structural parallels with the probabilistic case and giving rise to special mathematical structures. From this framework, we derive a learning rule that leads to a new family of optimizers, \texttt{CBOpt}. We empirically validate these methods on both in-domain and out-of-domain image classification tasks, demonstrating competitive predictive performance together with reliable uncertainty estimates. These developments provide a foundation for extending variational reasoning to possibility theory, offering new avenues for inference under epistemic uncertainty. Components of this framework, specifically the maxitive pseudo-divergence and the maxitive posterior under the  uninformative prior, have already been leveraged by \citet{ni2026possibilistic} to derive a second-order predictor for image classification; the underlying theory, including the maxitive DV duality, the dual CBO formulations, and the exponential-family treatment, is developed in full here for the first time.

\paragraph{Outline.} The remainder of the paper is organized as follows. \autoref{sec:primer-possibility} reviews the necessary background on possibility theory, and \autoref{sec:probabilistic-vi} recalls the classical Donsker–Varadhan formulation and its connection to Bayesian variational inference. \autoref{sec:possibilistic-vi} establishes our maxitive DV analogue and the dual CBO formulations. \autoref{sec:ExpFam} develops the exponential-family treatment and derives the learning rule underlying \texttt{CBOpt}. \autoref{sec:experiments} reports our empirical evaluation, and \autoref{sec:discussion} concludes with a discussion of limitations and future directions. Additional related work and notation are provided in \autoref{app:related} and \autoref{app:notation}, respectively.

\section{Primer on Possibility Theory}
\label{sec:primer-possibility}

Possibility theory provides a dedicated representation of epistemic uncertainty where, like probabilities, each event is assigned a degree of possibility between $0$ and $1$ and, unlike probabilities, the main operation on possibilities is optimization. In possibility theory, random variables are replaced by \emph{uncertain variables} and operations like conditioning and marginalization are defined similarly but with a maximum or supremum rather than a sum or integral, as stated more formally in \autoref{app:possibilityTheory}.

The expected value for a possibility function $f$ is $\possexp_{f}[\bm{\theta}] \doteq \argmax_{\theta \in \Theta} f(\theta)$, which is a set in general, and satisfies $\possexp_f[T(\bm{\theta})] = T( \possexp_f[\bm{\theta}] )$ for any mapping $T$, a property shared with the maximum likelihood estimate. When $\possexp_{f}[\bm{\theta}]$ is a singleton $\{\mode{\theta} \}$, we do not make a distinction between this singleton and the element $\mode{\theta}$. If $f$ is twice differentiable at $\mode{\theta}$, we can define the precision as
$\mathcal{I}_f( \bm{\theta} ) \doteq \possexp_f[ -\nabla^2 \log f(\bm{\theta}) ] = -\nabla^2 \log f(\theta)|_{\theta = \mode{\theta}}$, where the operator $\nabla^2$ is the Hessian and \emph{precision} is to be understood as an inverse covariance matrix. The \emph{normal possibility function} with expected value $\mu$ and covariance matrix $\Sigma \succ 0$ is defined as
$f(\theta) = \overline{\mathrm{N}}\left(\theta;\mu,\Sigma \right) \doteq \exp( -\tfrac{1}{2} (\theta-\mu)^{\intercal}\Sigma^{-1}(\theta-\mu) )$. It verifies $\possexp_f[\bm{\theta}] = \mu$ and $\mathcal{I}_f(\bm{\theta}) = \Sigma^{-1}$, as is leveraged in the Gaussian (Laplace) approximation. 

We define the set $\mathcal{F}(\Theta) \doteq \{ f : \Theta \to [0,1] \; : \; \sup_{\theta \in \Theta} f(\theta) = 1 \}$ of possibility functions over $\Theta$. As opposed to the set $\mathcal{P}(\Theta)$ of probability distributions, $\mathcal{F}(\Theta)$ is a pre-ordered set when equipped with the partial order $\preceq$ defined as
$f \preceq g \iff f(\theta) \leq g(\theta), \forall \theta \in \Theta$. 
There is a greatest element in $\mathcal{F}(\Theta)$, namely the function equal to $1$ everywhere, which we denote by $\bm{1}$. For any subset $\mathcal{G}$ of $\mathcal{F}(\Theta)$, $\max \mathcal{G}$ denotes the maximal element of $\mathcal{G}$. The minimal element $\min \mathcal{G}$ can also be considered when it exists. 
\autoref{tab:notations} summarizes other key notations used in this paper.

\section{Bayesian Inference and the Donsker-Varadhan Variational Formula}
\label{sec:probabilistic-vi}

We first provide a review of the probabilistic variational formulation of Bayesian inference and its variational approximations. We consider the problem of learning about an unknown parameter $\theta_0$ in a set $\Theta$, which can be thought of either as the true parameter in a statistical procedure or as the solution of an optimization problem. In this context, it is usual to alternate between
\begin{enumerate*}[label=\roman*)]
    \item the optimization viewpoint where the main objects are a loss $\ell$ and a regularizer $R$ on $\Theta$, and
    \item the inference viewpoint where the main objects are a likelihood $L \propto \exp(- \ell)$, that is, $L(\theta) \propto \exp(- \ell(\theta))$ for all $\theta \in \Theta$, and a prior $\pi$ with density $\propto \exp(- R)$ with respect to Lebesgue's measure.
\end{enumerate*}
This leads to the (generalized) Bayesian posterior $q^{\star}_{\textnormal{add}}$ for the regularized loss $\loss \doteq \ell + R$ with respect to $\pi$ as
\begin{equation}
q^{\star}_{\textnormal{add}}(\mathrm{d}\theta) \;\doteq\; \dfrac{\exp(-\ell(\theta))\,\pi(\mathrm{d}\theta)}{\int \exp(-\ell(\theta'))\pi(\mathrm{d}\theta')} = \dfrac{\exp\left(-\loss(\theta)\right)\mathrm{d}\theta}{\int \exp\left(-\loss(\theta')\right) \mathrm{d}\theta'}.
\end{equation}

We start with the following classical variational formula, known since at least the 1950s (see e.g.\ \citet[Exercise 8.28]{kullback1959information} for the finite case), generally attributed in the general setting to \citet{donsker1983vi}, and later rediscovered in statistics by \citet{zellner1988optimal}. We refer the interested reader to \citet[Page~160]{catoni2004} for a proof.

\begin{theorem}[Donsker and Varadhan's variational formula]
\label{additive_DV_general}
Let $(\Theta,\mathcal{T})$ be a measurable space and let $\nu$ be a probability measure on $\Theta$.
For any measurable function $h:\Theta\to\mathbb{R}$ such that $\int e^{h} \mathrm{d}\nu<+\infty$, we have
\begin{equation}
\log \int e^{h} \mathrm{d}\nu
= \sup_{\rho\in\mathcal{P}(\Theta)}\left\{  \int h \mathrm{d}\rho - \mathrm{KL}\left(\rho\Vert\nu\right)\right\},
\end{equation}
where $\mathcal{P}(\Theta)$ denotes the set of probability measures on $\Theta$ and
$\mathrm{KL}(\rho\Vert\nu)$ is the Kullback--Leibler divergence of $\rho$ with respect to $\nu$, with the convention $\infty-\infty=-\infty$. Moreover, the supremum on the right-hand side is achieved exactly at the Gibbs measure $\nu_h$ whose density with respect to $\nu$ is
\begin{align}
\frac{\mathrm{d}\nu_h}{\mathrm{d}\nu}(\theta)
\doteq \frac{e^{h(\theta)}}{\int e^{h} \mathrm{d}\nu} .
\end{align}
\end{theorem}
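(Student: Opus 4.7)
The plan is to reduce the variational formula to Gibbs' inequality by rewriting the difference $\int h\, d\rho - \mathrm{KL}(\rho\|\nu)$ in terms of the Kullback--Leibler divergence to the candidate optimizer $\nu_h$. Let $Z \doteq \int e^{h} d\nu$, which is finite and positive by assumption (positivity follows from $e^h > 0$), so $\nu_h$ with density $d\nu_h/d\nu = e^{h}/Z$ is a well-defined probability measure equivalent to $\nu$.

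Next, I would split the supremum into two cases. First, for any $\rho \in \mathcal{P}(\Theta)$ with $\rho \ll \nu$ (equivalently, $\rho \ll \nu_h$), the chain rule for Radon--Nikodym derivatives gives, $\rho$-almost everywhere,
\begin{equation}
\log \frac{d\rho}{d\nu_h}(\theta) \;=\; \log \frac{d\rho}{d\nu}(\theta) - h(\theta) + \log Z.
\end{equation}
Integrating against $\rho$ yields the key identity
\begin{equation}
\mathrm{KL}(\rho \Vert \nu_h) \;=\; \mathrm{KL}(\rho \Vert \nu) - \int h\, d\rho + \log Z,
\end{equation}
so that
\begin{equation}
\int h\, d\rho - \mathrm{KL}(\rho \Vert \nu) \;=\; \log Z - \mathrm{KL}(\rho \Vert \nu_h) \;\leq\; \log Z,
\end{equation}
by Gibbs' inequality $\mathrm{KL}(\rho\Vert\nu_h) \geq 0$. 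Equality holds iff $\rho = \nu_h$, which establishes both the formula and the attainment at the Gibbs measure. Second, if $\rho$ is not absolutely continuous with respect to $\nu$, then $\mathrm{KL}(\rho \Vert \nu) = +\infty$; the convention $\infty - \infty = -\infty$ makes the objective equal $-\infty$, so such $\rho$ never achieve the supremum.

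The only subtle point is ensuring the identity above is meaningful: one must check that $\int h\, d\rho$ is well-defined in $[-\infty, +\infty]$ and that the three terms can legitimately be rearranged. When $\rho \ll \nu$ and $\mathrm{KL}(\rho\Vert\nu) < \infty$, the positive part of $h$ is controlled by $\int e^h\,d\nu < \infty$ together with the variational inequality $\int h_+\, d\rho \leq \log\int e^{h_+} d\nu + \mathrm{KL}(\rho\Vert\nu)$, so $\int h\, d\rho$ is well-defined and the rearrangement is legal; when $\mathrm{KL}(\rho\Vert\nu) = +\infty$ the convention resolves the remaining case. This bookkeeping around integrability is the one nontrivial obstacle, but it is a standard measure-theoretic verification rather than a conceptual difficulty.
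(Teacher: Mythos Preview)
The paper does not supply its own proof of this theorem: it is stated as a classical result and the reader is referred to Catoni (2004, p.~160) for a proof. Your argument is the standard one found in such references---rewrite the objective as $\log Z - \mathrm{KL}(\rho\Vert\nu_h)$ and invoke nonnegativity of KL---and it is correct.

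One minor remark: the integrability bound you quote, $\int h_+\,d\rho \le \log\int e^{h_+}\,d\nu + \mathrm{KL}(\rho\Vert\nu)$, is itself the Donsker--Varadhan inequality applied to $h_+$, so citing it here is mildly circular. A self-contained fix is to use the Fenchel--Young inequality $ab \le e^{a} + b\log b - b$ pointwise with $a=h_+$ and $b=d\rho/d\nu$, then integrate against $\nu$; this gives $\int h_+\,d\rho \le \int e^{h_+}\,d\nu + \mathrm{KL}(\rho\Vert\nu) - 1 < \infty$ directly, without appealing to the result being proved. With that adjustment your bookkeeping is complete.
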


\paragraph{(Generalized) Bayesian inference via Donsker-Varadhan.}
\autoref{additive_DV_general} provides an information-theoretic characterization of the Bayesian posterior. By setting $h=-\ell$ and $\nu=\pi$ (the prior), we obtain that the normalizing constant $Z_{\textnormal{add}} = \int \exp(-\ell)\,\mathrm{d}\pi$ (also called the \emph{evidence} in the statistical setting) satisfies
\begin{align}
\log Z_{\textnormal{add}} &
= \sup_{q\in\mathcal{P}(\Theta)} \mathrm{ELBO}(q) \, , 
\qquad \text{with} \qquad
\mathrm{ELBO}(q) 
\doteq - \mathbb{E}_{\theta\sim q}[\ell(\theta)] - \mathrm{KL}(q \Vert \pi) .
\end{align}
The ELBO (\emph{Evidence Lower Bound}) provides a lower bound on the log-evidence; the exact posterior can then be expressed as the solution of the infinite-dimensional optimization problem
\begin{align}
\label{eq:bayes_variational}
q^\star_{\textnormal{add}} = \argmin_{q\in\mathcal{P}(\Theta)} \Big\{ \mathbb{E}_{\theta\sim q}[\ell(\theta)] + \mathrm{KL}(q\Vert \pi) \Big\} .
\end{align}
Thus, Donsker and Varadhan's lemma simultaneously identifies the Bayesian posterior as the optimizer of a regularized expected loss (the negative ELBO) and expresses the log-evidence as the supremum of the ELBO over all probability measures.

\paragraph{(Generalized) Variational inference via Donsker-Varadhan.}
In practice, computing $q^\star_{\textnormal{add}}$ exactly is generally intractable. \emph{Generalized Variational Inference} (GVI) consists in restricting the optimization to a tractable family $\mathcal{Q}\subset\mathcal{P}(\Theta)$, turning the infinite-dimensional problem \eqref{eq:bayes_variational} into a simpler, typically parametric, optimization problem.  

This restriction naturally induces a gap between the true log-evidence and the ELBO achievable within $\mathcal{Q}$, which can be quantified via $\log Z_{\textnormal{add}} = \mathrm{ELBO}(q) + \mathrm{KL}(q \,\Vert\, q^\star_{\textnormal{add}})$, for all $q\in\mathcal{Q}$. Maximizing the ELBO within $\mathcal{Q}$ therefore corresponds exactly to minimizing the KL divergence to the exact posterior restricted to the chosen family. In this sense, GVI can be interpreted both as the best tractable approximation (in the reverse KL sense) of the exact posterior within $\mathcal{Q}$, and as the optimal information-processing rule within $\mathcal{Q}$.

\section{Possibilistic Inference and a Maxitive Donsker-Varadhan Analogue}
\label{sec:possibilistic-vi}


In this section, we establish a maxitive analogue of the Donsker–Varadhan variational formula and use it to derive a possibilistic inference framework. We first observe that many standard regularizers naturally attain their minimum at $0$, so that the prior $\pi = \exp(- R)$ is directly a possibility function whereas $\pi$ could be improper as a probability distribution if $\exp(-R)$ is not integrable. This leads to the following formula for the (generalized) posterior $g^{\star}_{\textnormal{max}}$ in possibility theory:
\begin{equation}
\label{def:gibbs:max}
g^{\star}_{\textnormal{max}}(\theta) \;\doteq\; \dfrac{\exp(-\ell(\theta))\pi(\theta)}{\sup_{\theta'} \exp(-\ell(\theta')) \pi(\theta')} = \dfrac{\exp(-\loss(\theta))}{\sup_{\theta'} \exp(-\loss(\theta'))} \, .
\end{equation}

The function $\pi(\theta)$ now denotes a prior possibility function over $\Theta$, typically $\pi(\theta)=\exp(-R(\theta))$.  
The normalizing constant in the maxitive posterior is defined as
\begin{equation}
Z_{\textnormal{max}} \doteq \sup_{\theta'\in\Theta} \exp(-\ell(\theta')) \pi(\theta') \, ,
\end{equation}
and is referred to as the maxitive marginal likelihood. 
Contrary to its probabilistic analogue $Z_{\textnormal{add}}$, which measures the overall fit of the model to the data, the quantity $Z_{\textnormal{max}}$ quantifies the \emph{consistency} between the prior and the likelihood. This notion is central in robust inference and can be used, for instance, to detect outliers \citep{houssineau2022robust}. 


\begin{theorem}[Maxitive Donsker-Varadhan formula]
\label{thm:maxitive_DV}
Let $\pi:\Theta\to[0,1]$ be a possibility function taking positive values. For any function $\ell:\Theta\to\mathbb{R}$, we have
\begin{subequations}
\begin{align}
\label{eq:dv:max:supinf}
\log \sup_{\theta \in \Theta} e^{-\ell(\theta)} \pi(\theta)
&= \sup_{g\in\mathcal{F}(\Theta)} \inf_{\theta \in \Theta}
\left\{ -\ell(\theta) - \log\!\left(\frac{g(\theta)}{\pi(\theta)}\right) \right\},\\
\label{eq:dv:max:infsup}
&= \inf_{g\in\mathcal{F}(\Theta)} \sup_{\theta \in \Theta}
\left\{ -\ell(\theta) - \log\!\left(\frac{g(\theta)}{\pi(\theta)}\right) \right\} \, .
\end{align}
\end{subequations}

Moreover, the supremum in \eqref{eq:dv:max:supinf} is achieved by any possibility function lower-bounding the posterior $g^{\star}_{\textnormal{max}}$, that is
\begin{equation}
\label{eq:gibbs:max:supinf}
\argmax_{g\in\mathcal{F}(\Theta)}
\inf_{\theta\in\Theta}
\left\{ -\ell(\theta) - \log\!\left(\frac{g(\theta)}{\pi(\theta)}\right) \right\}
= \big\{ g \in \mathcal{F}(\Theta) : g \preceq g^{\star}_{\textnormal{max}} \big\}.
\end{equation}
Similarly, the infimum in \eqref{eq:dv:max:infsup} is achieved by any possibility function upper-bounding $g^{\star}_{\textnormal{max}}$:
\begin{equation}
\label{eq:gibbs:max:infsup}
\argmin_{g\in\mathcal{F}(\Theta)}
\sup_{\theta\in\Theta}
\left\{ -\ell(\theta) - \log\!\left(\frac{g(\theta)}{\pi(\theta)}\right) \right\}
= \big\{ g \in \mathcal{F}(\Theta) : g^{\star}_{\textnormal{max}} \preceq g \big\}.
\end{equation}
\end{theorem}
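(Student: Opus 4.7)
My proof would hinge on a simple but crucial pointwise identity. Taking logs in the definition \eqref{def:gibbs:max} of $g^{\star}_{\textnormal{max}}$ yields
\begin{equation}
-\ell(\theta) - \log\!\left(\frac{g^{\star}_{\textnormal{max}}(\theta)}{\pi(\theta)}\right) \;=\; \log Z_{\textnormal{max}}, \qquad \forall \theta \in \Theta,
\end{equation}
so the DV objective is \emph{constant} in $\theta$ when $g = g^{\star}_{\textnormal{max}}$, mirroring the classical fact that Gibbs measures flatten the DV integrand. Substituting this identity into the objective for a generic $g \in \mathcal{F}(\Theta)$ gives
\begin{equation}
-\ell(\theta) - \log\!\left(\frac{g(\theta)}{\pi(\theta)}\right) \;=\; \log Z_{\textnormal{max}} + \log\!\left(\frac{g^{\star}_{\textnormal{max}}(\theta)}{g(\theta)}\right),
\end{equation}
which reduces both equalities in the theorem to showing
$\sup_{g} \inf_{\theta} \log(g^{\star}_{\textnormal{max}}/g) = 0 = \inf_{g} \sup_{\theta} \log(g^{\star}_{\textnormal{max}}/g)$, with the extremizer sets of the shapes claimed in \eqref{eq:gibbs:max:supinf}--\eqref{eq:gibbs:max:infsup}.

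\textbf{Sup--inf side.} If $g \preceq g^{\star}_{\textnormal{max}}$, then $\log(g^{\star}_{\textnormal{max}}/g) \geq 0$ pointwise, hence the inner inf is $\geq 0$. To upgrade to equality I would use the normalization $\sup_{\theta} g(\theta) = 1$: any sequence $\theta_n$ with $g(\theta_n) \to 1$ forces $g^{\star}_{\textnormal{max}}(\theta_n) \to 1$ (squeezed between $g(\theta_n)$ and $1$), so $\log(g^{\star}_{\textnormal{max}}(\theta_n)/g(\theta_n)) \to 0$. For the converse, a uniform lower bound $\log(g^{\star}_{\textnormal{max}}/g) \geq \delta > 0$ would give $g(\theta) \leq e^{-\delta} g^{\star}_{\textnormal{max}}(\theta) \leq e^{-\delta}$ for all $\theta$, contradicting $\sup_\theta g(\theta) = 1$; and $\inf \geq 0$ is itself precisely the relation $g \preceq g^{\star}_{\textnormal{max}}$. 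Combining these observations pins down both the value $\log Z_{\textnormal{max}}$ and the argmax.

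\textbf{Inf--sup side.} The argument is perfectly symmetric. For $g \succeq g^{\star}_{\textnormal{max}}$, the pointwise bound $\log(g^{\star}_{\textnormal{max}}/g) \leq 0$ combined with a sup-attaining sequence for $g^{\star}_{\textnormal{max}}$ drives the outer sup to $0$. A strictly negative sup $\leq -\delta$ would yield $g(\theta) \geq e^{\delta} g^{\star}_{\textnormal{max}}(\theta)$ uniformly, hence $\sup_\theta g(\theta) \geq e^{\delta} > 1$, a contradiction. This simultaneously identifies the minimax value and the argmin.

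\textbf{Main obstacle.} The arithmetic is light; the real subtlety is the interaction between the pointwise log-ratio bounds and the normalization constraint $\sup_{\theta} g(\theta) = 1$, and in particular obtaining equality in the inf (resp.\ sup) without needing the supremum of $g$ (resp.\ $g^{\star}_{\textnormal{max}}$) to be attained at any individual point. The convention $\infty \times 0 = 0$ then absorbs the degenerate points where $\pi(\theta) = 0$, which force $g^{\star}_{\textnormal{max}}(\theta) = 0$ and, for the argmax side, also $g(\theta) = 0$, so such points do not disturb the argument.
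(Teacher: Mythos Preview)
Your proposal is correct and follows essentially the same approach as the paper: both hinge on the constancy of the objective at $g=g^{\star}_{\textnormal{max}}$ together with the normalization constraint $\sup_\theta g(\theta)=1$ to pin down the value $\log Z_{\textnormal{max}}$ and the optimizer sets. Your upfront reduction to the log-ratio $\log(g^{\star}_{\textnormal{max}}/g)$ packages these ingredients a bit more compactly than the paper's step-by-step treatment, but the underlying mechanism---including the approximate-maximizer sequence when the sup is not attained---is identical.
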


The proof is provided in \autoref{proof:thm:maxitive_DV}. Note that if $\pi(\theta)=0$ for some values of $\theta$, then \eqref{eq:dv:max:supinf} and \eqref{eq:dv:max:infsup} still hold when just considering suprema/infima over the support $\Theta_\pi$ of $\pi$ instead of $\Theta$, and over the set $\mathcal{F}_\pi(\Theta)$ of possibility functions whose support is included in $\Theta_\pi$ instead of $\mathcal{F}(\Theta)$.


On the left-hand side of \eqref{eq:dv:max:supinf} appears the log-consistency $\log Z_{\textnormal{max}}$, while the quantity optimized on the right-hand side defines a \emph{consistency bound} (CBO).  
Two dual versions can be introduced:
\begin{align}
\underline{\mathrm{CBO}}(g)
\doteq \inf_\theta \left\{ -\ell(\theta) - \log \frac{g(\theta)}{\pi(\theta)} \right\}, 
\qquad 
\overline{\mathrm{CBO}}(g)
\doteq \sup_\theta \left\{ -\ell(\theta) - \log \frac{g(\theta)}{\pi(\theta)} \right\}.
\end{align}
These provide lower and upper bounds on the log-consistency, as shown in \autoref{thm:maxitive_DV}.  
For any $g\in\mathcal{F}(\Theta)$, we even have the decomposition
\begin{subequations}
\label{eq:min-divergence}
\begin{align}
    \log Z_{\textnormal{max}}
    &= \underline{\mathrm{CBO}}(g)
    + D_{\textnormal{max}}(g \,\|\, g^{\star}_{\textnormal{max}}) \\ 
    &= \overline{\mathrm{CBO}}(g)
    - D_{\textnormal{max}}(g^{\star}_{\textnormal{max}} \,\|\, g),
\end{align}
\end{subequations}
where the \emph{max-relative entropy} between two possibility functions is defined as
\begin{align}
\label{eq:max-rel-ent}
D_{\textnormal{max}}(g \,\|\, f)
\doteq \sup_{\theta \in \Theta} \log \frac{g(\theta)}{f(\theta)} \geq 0,
\end{align}
with the convention that the supremum is taken over all $\theta$ such that $f(\theta)>0$, and that $D_{\textnormal{max}}(g \,\|\, f)=+\infty$ whenever $\operatorname{supp}(g)\nsubseteq\operatorname{supp}(f)$.
The maxitive pseudo-divergence $D_{\max}$ has subsequently been used by \cite{ni2026possibilistic} to derive a second-order predictor for image classification.

Maximizing (resp.\ minimizing) the lower (resp.\ upper) CBO with respect to $g$ is equivalent to minimizing the max-relative entropy to the Gibbs possibility function $g^{\star}_{\textnormal{max}}$.  
Any maximizer $g$ in \eqref{eq:dv:max:supinf} satisfies $g \preceq g^{\star}_{\textnormal{max}}$, whereas any minimizer $g$ in \eqref{eq:dv:max:infsup} satisfies $g \succeq g^{\star}_{\textnormal{max}}$.  
In particular, the standard posterior $g^{\star}_{\textnormal{max}}$ is always an optimizer, and pointwise bounds all other solutions (below or above, depending on the formulation). We visualize the behaviour of max-relative entropy in \autoref{app:intuition}.

\textbf{Canonical choice of maximizer:} Among all $g$ that optimize the CBO, a natural canonical representative is the standard possibilistic posterior $g^{\star}_{\textnormal{max}}$. However, one must be careful with terminology: $g^{\star}_{\textnormal{max}}$ is an extremum of the CBO, which means it assigns the largest/smallest possibility degrees and is therefore the most/least \emph{permissive} (i.e. informative) representative of the equivalence class of solutions. Choosing $g^{\star}_{\textnormal{max}}$ is natural because it is the sup/inf-envelope of all maximizers/minimizers, stable under maxitive combination, and
\begin{enumerate*}[label=\roman*)]
\item \emph{for the lower CBO}, does not introduce any extra, arbitrary information,
\item \emph{for the upper CBO}, does not forgo any of the available information in the likelihood and/or in the prior.
\end{enumerate*}

\textbf{Recovering the posterior:} Possibilistic Bayesian inference can thus be understood as the optimization of a possibilistic analogue of the ELBO: the CBO. Unlike the probabilistic case, this optimization generally admits multiple solutions, but the standard maxitive posterior $g^{\star}_{\textnormal{max}}$ stands out as the natural choice -- it is an extremum among the optimizers and serves as the canonical representative of the updated possibility distribution.

Recall that the standard Bayesian posterior is
    $\left\{ q^{\star}_{\textnormal{add}} \right\} = \argmax_{q\in\mathcal{P}(\Theta)} \mathrm{ELBO}(q)$,
while the two characterizations of the Bayesian posterior in possibility theory are:
\begin{subequations}
\label{eq:maxELBO}
\begin{align}
\label{eq:maxLowerELBO}
g^{\star}_{\textnormal{max}} & = \max\Big( \argmax_{g\in\mathcal{F}(\Theta)} \underline{\mathrm{CBO}}(g) \Big), \\
\label{eq:minUpperELBO}
& = \min\Big( \argmin_{g\in\mathcal{F}(\Theta)} \overline{\mathrm{CBO}}(g) \Big).
\end{align}
\end{subequations}
The additional $\max$/$\min$ operations in \eqref{eq:maxLowerELBO} and \eqref{eq:minUpperELBO} add a layer of complexity to the recovery of the posterior possibility function. However, it holds that $g^{\star}_{\textnormal{max}}$ is the only element in
\begin{align}
\argmax_{g\in\mathcal{F}(\Theta)} \underline{\mathrm{CBO}}(g) \cap \argmin_{g\in\mathcal{F}(\Theta)} \overline{\mathrm{CBO}}(g),
\end{align}
that is, the posterior possibility function is the unique element selected simultaneously by the lower and upper characterisations.


\textbf{Relationship with VI:} These formulations, \eqref{eq:maxLowerELBO} and \eqref{eq:minUpperELBO}, to recover the posterior possibility function from an optimization problem \emph{on the set of possibility functions}, provide two different schemes for defining approximations of the posterior when we restrict the optimization to be on a subset $\mathcal{G}$ of $\mathcal{F}(\Theta)$. While \eqref{eq:maxLowerELBO} is qualitatively related to standard VI and will typically underestimate the possibility when restricted to $\mathcal{G}$, \eqref{eq:minUpperELBO} provides an alternative that will instead overestimate possibilities. This latter behaviour is more in line with general statistical principles which tend to prefer pessimistic uncertainty estimates over optimistic ones. 

\textbf{Relationship with Generalized VI:} The lower and upper CBOs provide two variational approximations with opposite pointwise behaviour, the lower formulation tends to underestimate possibility degrees, whereas the upper formulation tends to overestimate them. Conceptually, this mirrors the way different probabilistic variational bounds induce different approximation biases. A more detailed comparison with generalized VI bounds is deferred to \autoref{app:GVI}.

\section{Possibilistic VI with Exponential Families}
\label{sec:ExpFam}


We develop a general framework for possibilistic variational inference based on exponential families by considering their conjugate priors. All the proofs for this section can be found in \autoref{proof:ExpFam}. We begin by defining the structure of exponential families in the possibilistic setting. Assuming that $\Theta$ is a subset of $\mathbb{R}^{d_{\theta}}$ and $\Lambda$ is a convex subset of $\mathbb{R}^{d_{\lambda}}$,   we follow the probabilistic approach and define an exponential family as a set $\mathcal{G} = \{g_{\lambda} \in \mathcal{F}(\Theta): \lambda \in \Lambda\}$ where $g_{\lambda}$ is of the form
\begin{align}
\label{eq:possExpFam}
g_{\lambda}(\theta) = \exp\big( \lambda^{\intercal} T(\theta) - A(\lambda) - B(\theta) \big) 
\end{align}
for a given base measure $B$ and sufficient statistics $T$ on $\Theta$ and the corresponding log-partition function $A$ on $\Lambda$, which ensures proper normalisation, that is
\begin{align}
A(\lambda) &= \log \sup_{\theta \in \Theta} \exp\big( \lambda^{\intercal} T(\theta) - B(\theta) \big) 
= \sup_{\theta \in \Theta} \lambda^{\intercal} T(\theta) - B(\theta).
\end{align}
More details about these general exponential families are given in \autoref{app:ExpFamInGeneral}.

\subsection{Bayesian Inference with Conjugate Family}
\label{sec:ExpFamBayes}

We consider a likelihood $p(\cdot \given  \theta)$, $\theta \in \Theta$, that is part of a regular and minimal exponential family with log-partition $A$, that is 
$p(x\given \theta) = \exp( \theta^{\intercal}T(x) - A(\theta) - B(x) )$. 
The following proposition rephrases known results on the relationship between exponential families and the Bregman divergence \citep{banerjee2005clustering} in the context of possibilistic inference.

\begin{proposition}
\label{prop:posteriorPossibilty}
The posterior possibility function under the uninformative prior on $\Theta$ is of the form
\begin{subequations}
\begin{align}
\label{eq:possConjPrior}
g_{\lambda}(\theta) & = \exp\big( \lambda^{\intercal}\theta - A^{\dagger}(\lambda) - A(\theta) \big) \\
\label{eq:PosteriorAsBregman}
& = \exp\big (-D_A(\theta \| \mode{\theta}(\lambda)) \big),
\end{align}
\end{subequations}
with $\mode{\theta}(\lambda) \doteq \possexp_{g_{\lambda}}[\bm{\theta}]$ the maximum likelihood estimator (MLE), $A^{\dagger}$ the Legendre transform of the log-partition $A$, and $D_A(\theta \| \theta') = A(\theta) - A(\theta') - \nabla_{\theta} A(\theta')^{\intercal}(\theta -\theta')$ the Bregman divergence.
\end{proposition}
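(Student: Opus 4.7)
The plan is to start from the definition of the maxitive posterior in \eqref{def:gibbs:max} specialized to the uninformative prior $\pi \equiv \bm{1}$ on $\Theta$ and the exponential-family likelihood \eqref{eq:probExFamily}, then recognise the normaliser as a Legendre transform and rewrite the exponent as a Bregman divergence.

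\textbf{Step 1: Apply the maxitive posterior formula with uninformative prior.} With $\pi = \bm{1}$, equation~\eqref{def:gibbs:max} gives
\begin{equation}
g^{\star}_{\textnormal{max}}(\theta)
= \frac{p(x\given\theta)}{\sup_{\theta'\in\Theta} p(x\given\theta')}
= \frac{\exp\bigl(\theta^{\intercal} T(x) - A(\theta) - B(x)\bigr)}{\sup_{\theta'\in\Theta} \exp\bigl(\theta'^{\intercal} T(x) - A(\theta') - B(x)\bigr)}.
\end{equation}
The term $B(x)$ cancels between numerator and denominator, so the posterior depends on the observations only through $T(x)$. Writing $\lambda \doteq T(x)$ (more generally the aggregated sufficient statistic across observations), and pulling the $\sup$ inside the exponential via monotonicity of $\exp$, I would identify the normaliser as
\begin{equation}
\log \sup_{\theta'} \exp\bigl(\lambda^{\intercal}\theta' - A(\theta')\bigr)
= \sup_{\theta'} \bigl\{\lambda^{\intercal}\theta' - A(\theta')\bigr\}
= A^{\dagger}(\lambda),
\end{equation}
the Legendre--Fenchel conjugate of $A$. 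This directly yields \eqref{eq:possConjPrior}.

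\textbf{Step 2: Identify the MLE with the Legendre conjugation.} By regularity and minimality of the family, $A$ is strictly convex and differentiable on the interior of $\Lambda$, so the supremum in $A^{\dagger}(\lambda) = \sup_{\theta} \{\lambda^{\intercal}\theta - A(\theta)\}$ is attained at the unique $\mode{\theta}(\lambda)$ satisfying $\lambda = \nabla A(\mode{\theta}(\lambda))$. This $\mode{\theta}(\lambda)$ coincides with the possibilistic expectation $\possexp_{g_{\lambda}}[\bm{\theta}] = \argmax_{\theta} g_{\lambda}(\theta)$, which is exactly the MLE for this family. Consequently,
\begin{equation}
A^{\dagger}(\lambda) = \lambda^{\intercal} \mode{\theta}(\lambda) - A(\mode{\theta}(\lambda))
\qquad\text{and}\qquad
\lambda = \nabla A(\mode{\theta}(\lambda)).
\end{equation}

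\textbf{Step 3: Rewrite the exponent as a Bregman divergence.} Substituting the two identities above into the exponent of \eqref{eq:possConjPrior},
\begin{equation}
\lambda^{\intercal}\theta - A^{\dagger}(\lambda) - A(\theta)
= -\Bigl[ A(\theta) - A(\mode{\theta}(\lambda)) - \lambda^{\intercal}(\theta - \mode{\theta}(\lambda)) \Bigr]
= -\Bigl[ A(\theta) - A(\mode{\theta}(\lambda)) - \nabla A(\mode{\theta}(\lambda))^{\intercal}(\theta - \mode{\theta}(\lambda)) \Bigr],
\end{equation}
which is precisely $-D_A(\theta \,\|\, \mode{\theta}(\lambda))$. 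This establishes \eqref{eq:PosteriorAsBregman}.

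\textbf{Main obstacle.} The calculation itself is short; the only delicate points are justifying that the supremum defining $A^{\dagger}$ is attained and that $\mode{\theta}(\lambda)$ is well-defined and equals the gradient inverse $(\nabla A)^{-1}(\lambda)$. Both follow from the regular-minimal hypothesis (open natural parameter space, strict convexity of $A$), so I would state these standing assumptions explicitly at the start and invoke the classical conjugacy/MLE correspondence from \citet{banerjee2005clustering} rather than re-proving it. The exchange of $\sup$ and $\log$ in Step~1 — which the authors flag as Fact~3 — is the one distinctively possibilistic ingredient and deserves an explicit one-line justification by monotonicity of $\exp$.
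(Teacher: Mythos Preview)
Your proposal is correct and follows essentially the same approach as the paper's own proof. The only difference is the order: the paper first derives the Bregman form \eqref{eq:PosteriorAsBregman} by writing the posterior as $\exp\bigl((\theta-\mode{\theta})^{\intercal}T(x)-(A(\theta)-A(\mode{\theta}))\bigr)$ and plugging in the first-order condition $T(x)=\nabla A(\mode{\theta})$, and only then identifies the normaliser with $A^{\dagger}(\lambda)$ to obtain \eqref{eq:possConjPrior}; you go the other way round, recognising $A^{\dagger}$ first and the Bregman divergence second, but the underlying identities and level of detail are the same.
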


Identifying the Bregman divergence in $g_{\lambda}$ allows us to leverage its properties. For instance, it is obvious from \eqref{eq:PosteriorAsBregman} that $\pi(\cdot\given  \lambda)$ is a possibility function with expected value $\mode{\theta}(\lambda)$. We can also use the known relationship between $D_A$ and the KL divergence to see that $-\log g_{\lambda}(\theta)$ is the KL divergence between the likelihood at $\theta$ and the one at the MLE. We now highlight some important properties of possibility functions of the same form as $g_{\lambda}$ in the following proposition.

\begin{proposition}
\label{prop:PropertiesOfConjugatePriors}
Let $\mathcal{G}_A(\Theta)$ be the subset of $\mathcal{F}(\Theta)$ defined as
\begin{align}
\mathcal{G}_A(\Theta) \doteq \big\{ \theta \mapsto \exp(\lambda^{\intercal}\theta - A^{\dagger}(\lambda) -A(\theta) ) : \lambda \in \Lambda \big\},
\end{align}
where $\Lambda \doteq \nabla_{\theta}A(\Theta)$ is the image of $\Theta$ by $\nabla_{\theta}A$, then, for any $g_{\lambda} \in \mathcal{G}_A(\Theta)$, it holds that
\begin{enumerate}[itemsep=.1em,parsep=.1em,topsep=.1em]
    \item The dual possibility function $f_{\theta}:\lambda \mapsto g_{\lambda}(\theta)$ is in $\mathcal{G}_{A^{\dagger}}(\Lambda)$.
    \item For any $\nu \geq 0$, $g_{\lambda}^{\nu} \in \mathcal{G}_{\nu A}(\Theta)$. Furthermore, it is a conjugate prior for the likelihood $p(\cdot \given  \theta)$, 
    and the posterior for $n$ i.i.d.\ observations $\{x_i\}_{i=1}^n$ is $g_{\lambda'}^{\nu'}$, where
    \begin{align}
        \lambda' = \frac{\nu\lambda + \sum_{i=1}^n T(x_i)}{\nu+n}, \quad \nu' = \nu + n.
    \end{align}
    That is, $\lambda$ is updated as the running mean of the sufficient statistics, and $\nu$ as the number of observations.
\end{enumerate}
\end{proposition}

The first point in \autoref{prop:PropertiesOfConjugatePriors} is a specific property of the considered class of conjugate prior possibility functions. Although it has additional properties, the possibilistic conjugate prior is often simply the renormalised version of its probabilistic counterpart. The second point is a possibilistic analogue of the result of \citet{diaconis1979conjugate}, with the additional features that the log-partition function is simply the Legendre transform of $A$ and that the $\nu$ parameter is a discount factor rather than an arbitrary parameter. This property is intuitive: $\nu$ can be interpreted as how many observations the prior is worth in terms of information content -- as in the probabilistic case; yet, the possibilistic treatment allows for considering $\nu = 0$, in which case $g_{\lambda}^{\nu} = \bm{1}$ and the prior is uninformative. 

We have the following properties of $g_{\lambda} \in \mathcal{G}_A(\Theta)$ 
as a possibility function describing the uncertain variable $\bm{\theta}$ in $\Theta$:
    (i) The expected value can be expressed as $\possexp_{g_{\lambda}}[\bm{\theta}] = \nabla_{\lambda} \leg{A}(\lambda)$ and is assumed to be the singleton $\{\mode{\theta}(\lambda)\}$, this is a weak assumption often verified by exponential families except when they become fully uninformative, i.e.\ $g_{\lambda} = \bm{1}$, and
    (ii) the precision can be expressed as
    \begin{align}
        \mathcal{I}_{g_{\lambda}}(\bm{\theta}) &\doteq \possexp_{g_{\lambda}}\sb{-\nabla^2_{\theta} \log g_{\lambda}\rb{\bm{\theta}}} 
        = \nabla_{\theta}^2 A\rb{\theta}|_{\theta = \mode{\theta}(\lambda)} = (\nabla^2_{\lambda} \leg{A}(\lambda))^{-1}.
    \end{align}

\subsection{Variational Inference with Conjugate Family}

We now demonstrate how to perform possibilistic VI with exponential family candidates of the form in \eqref{eq:possConjPrior}. For compactness, we write both consistency bounds using a sign variable
\(s \in \{+1,-1\}\). Define
\[
\mathrm{CBO}_s(g) = \mathop{\mathrm{ext}}_{\theta \in \Theta} \Bigl\{ -\ell(\theta)
- \log\frac{g(\theta)}{\pi(\theta)} \Bigr\}, \qquad \mathop{\mathrm{ext}} =
\begin{cases}
\sup, & s=+1,\\
\inf, & s=-1.
\end{cases}
\]
Thus $s=+1$ corresponds to the upper CBO, whereas $s=-1$ corresponds to the lower CBO. The associated objective is to minimize $s\,\mathrm{CBO}_s(g)$.
We 
derive a simple update rule for optimizing arbitrary minimization objectives, e.g.\ $\loss(\theta) = \ell(\theta) - \log\pi(\theta)$, for a loss function $\ell$ and a prior $\pi$.

In the following proposition, we reparameterize $g_{\lambda}$ in terms of its mode, using the form in \eqref{eq:PosteriorAsBregman}, and establish a Newton-like update for the corresponding parameter.

\begin{proposition}
\label{prop:updateUpperCBO}
Consider a candidate $g_\eta$ of the form $g_\eta(\theta)=\exp(-D_A(\theta\|\eta))$ and let $H_s(\eta) \doteq s(\nabla^2\hat\ell(\eta)-\nabla^2A(\eta))$. Assume $H_s(\eta)\succ 0$, so that the local quadratic approximation of the inner problem is concave for $s=+1$ and convex for $s=-1$. Danskin's theorem then gives
\begin{align}
    \nabla \mathrm{CBO}_s(\eta) = \nabla^2 A(\eta)\,(\eta - \theta_s),
\end{align}
where $\theta_s \doteq \mathop{\mathrm{argext}}_{\theta\in\Theta} \{-\hat\ell(\theta)-\log g_\eta(\theta)\}$. Replacing the inner objective by its 2\textsuperscript{nd}-order Taylor expansion around $\eta$ and minimizing the resulting surrogate via natural gradient descent yields the Newton-like update
\begin{align}
\label{eq:upperCBOupdate_general}
    \eta_{t+1} = \eta_t - \rho_t H_s(\eta_t)^{-1} \nabla\hat\ell(\eta_t),
\end{align}
with step size $\rho_t > 0$.
\end{proposition}

\begin{remark}
    The use of natural gradient for the update follows similar motivation from \citet{khan2023bayesian}. In fact, the update rule in \eqref{eq:upperCBOupdate_general} is equivalent to taking a mirror descent step defined in the space of the dual coordinate $\lambda=\nabla_{\eta}A(\eta)$. The natural gradient emerges naturally from the Legendre duality between the spaces of $\eta$ and $\lambda$ \citep{wainwright2008graphical}.
\end{remark}

\begin{remark}
While $H_s(\eta)\succ 0$ is necessary if $\nabla \loss\rb{\eta} \neq 0$, once the optimization converges and the equality is achieved, this constraint can be relaxed to $H_s(\eta)\succeq 0$. In particular, convergence would correspond to matching the first moment with that of the posterior as defined in \eqref{def:gibbs:max} and the second moment is matched when $\nabla^2 A(\eta^\thinstar) = \nabla^2\loss(\eta^\thinstar)$; this is in the spirit of the Laplace approximation used for fitting Gaussian candidates.
\end{remark}

\paragraph{Practical diagonal approximation.} We now specialise the signed update to a Gaussian candidate $g_\eta(\theta) = \overline{\mathrm{N}}(\theta;\eta,\vartheta^{-1}I)\), so that $\nabla^2 A(\eta)=\vartheta I$. We also use an isotropic Gaussian prior with precision $\delta$, equivalently an $L_2$ regularization term $\delta\|\eta\|^2/2$, which gives $\nabla\hat{\ell}(\eta) = \nabla\ell(\eta)+\delta\eta$ and $\nabla^2\hat{\ell}(\eta)=\nabla^2\ell(\eta)+\delta I$. For \(s\in\{+1,-1\}\), the signed curvature gap is therefore $H_s(\eta) = s(\nabla^2\ell(\eta)+(\delta-\vartheta)I )$. Substituting these expressions into the signed CBO update gives
\begin{align}
    \eta_{t+1} = \eta_t - \rho_t\,s\, \bigl(\nabla^2\ell(\eta_t) + (\delta-\vartheta) I\bigr)^{-1} \bigl(\nabla\ell(\eta_t)+\delta\eta_t\bigr).
\end{align}
For neural-network objectives, the Hessian is not formed explicitly. We therefore use the standard diagonal squared-gradient approximation $\nabla^2\ell(\eta_t) \approx \operatorname{diag}\bigl( \nabla\ell(\eta_t)\odot\nabla\ell(\eta_t) \bigr)$, and replace the stochastic gradient and squared-gradient estimates by exponential moving averages (EMAs): with 
$g_t = \operatorname{EMA}(\hat{\nabla}\ell(\eta_t))$ and $h_t = \operatorname{EMA}(
\hat{\nabla}\ell(\eta_t)\odot \hat{\nabla}\ell(\eta_t))$, where $\odot$ denotes the Hadamard product, we obtain an implementable signed update (see Alg.~\ref{alg:ucbopt} for the case $s=+1$) as
\begin{align}
    \label{eqn:ucbopt-ivon}
    \eta_{t+1} = \eta_t - \rho_t\,s\,\frac{\bar g_t+\delta\eta_t}{h_t+\delta-\vartheta},
\end{align}
with elementwise division, and where $\bar{g}_t$ the bias-corrected first moment based on $g_t$. In the case $s=+1$, the condition $\vartheta \leq \delta$ ensures that the diagonal preconditioner remains positive whenever $h_t$ is non-negative, and the limiting case $\vartheta \to 0^+$ corresponds to ignoring the curvature of the Gaussian candidate. This diagonal squared-gradient approximation follows the same heuristic as Vprop and Vadam \citep{pmlr-v80-khan18a}, while the use of exponential moving averages makes the resulting update reminiscent of Adam \citep{KingBa15} and IVON \citep{shen2024variational}. For $s=-1$, the analogous curvature condition is instead $h_t+\delta-\vartheta < 0$ elementwise, corresponding to the opposite local curvature regime associated with the lower CBO.

\section{Experiments}
\label{sec:experiments}

\begin{wrapfigure}{r}{0.55\textwidth}
\begin{minipage}{\linewidth}
\vspace{-2.43em}
\begin{algorithm}[H]
  \caption{\textbf{u}pper \textbf{CB}O \textbf{Opt}imisation (\texttt{uCBOpt})}
  \label{alg:ucbopt}
  \begin{algorithmic}[1]
    \STATE \textbf{Require:} Learning rates $\{\rho_t\}$, weight decay $\delta>0$, curvature parameter $\vartheta\in(0,\delta]$, momentum parameters $\beta_1,\beta_2\in[0,1)$, and Hessian init $h_0>0$.
    \STATE \textbf{Init:} $\eta \leftarrow$ (NN-weights), $h\leftarrow h_0$,
    $g\leftarrow 0$.
    \FOR{$t = 1,2,\ldots$}
      \STATE $\hat{g} \leftarrow \widehat{\nabla}\,\ell(\eta)$ \hfill\textcolor{Gray}{\# Stochastic gradient}
      \STATE $\hat{h} \leftarrow \hat{g} \odot \hat{g}$
      \STATE $g \leftarrow \beta_1 g + (1-\beta_1)\hat{g}$
      \STATE $h \leftarrow \beta_2 h + (1-\beta_2)\hat{h}$
      \STATE $\bar{g} \leftarrow g/(1-\beta_1^t)$
      \STATE $\eta \leftarrow \eta - \rho_t(\bar{g} + \delta \eta)/(h+\delta-\vartheta)$
    \ENDFOR
    \RETURN $\eta$
  \end{algorithmic}
\end{algorithm}
\end{minipage}
\end{wrapfigure}


In this section, we evaluate the efficacy of the CBO update rule derived in \autoref{prop:updateUpperCBO}. We consider three variants: (1) \textbf{u}pper \textbf{CB}O \textbf{Opt}imization, denoted by \texttt{uCBOpt} (Alg.~\ref{alg:ucbopt}); (2) \texttt{uCBOpt} with adaptive curvature, denoted by \texttt{uCBOpt-adapt} (Alg.~\ref{alg:ucbopt-adapt}); and (3) \textbf{l}ower \textbf{CB}O \textbf{Opt}imization with adaptive curvature, denoted by \texttt{lCBOpt-adapt} (Alg.~\ref{alg:lcbopt-adapt}). In \texttt{uCBOpt}, we use a fixed curvature parameter $\vartheta$. In \texttt{uCBOpt-adapt} and \texttt{lCBOpt-adapt}, we instead make the curvature parameter adaptive by setting $\vartheta_t=\gamma(h_t+\delta)$, where $h_t$ is the running curvature estimate and $\delta$ is the prior curvature (weight-decay). We provide additional details on the algorithms in \autoref{sec:algo}. 


We evaluate our methods by training neural networks on image classification tasks, closely following a subset of the experiments in \citet{shen2024variational}. We perform extensive hyperparameter tuning only on Fashion-MNIST, and use the hyperparameters from \citet{shen2024variational} for the larger datasets. Additional training details are provided in \autoref{sec:exp_details}. For all experiments, we run each method with three random seeds and report the mean and standard deviation. We present the main results in \autoref{tab:main-results}, while the full results, including additional metrics, final-epoch performance, and training and calibration plots, are presented in \autoref{sec:full_results}. Ablation studies are also included in \autoref{sec:ablation}.

\textbf{Baselines.} We evaluate our methods against six baseline approaches: SGD, AdamW \citep{loshchilov2018decoupled}, IVON \citep{shen2024variational}, MC Dropout (MC-D) \citep{gal16mcd}, linearized last-layer Laplace (referred to as Laplace) \citep{daxberger2021laplace}, and SWAG \citep{maddox_2019_simple}. AdamW and SGD serve as deterministic optimization baselines, whereas IVON, MC-D, Laplace, and SWAG serve as uncertainty-aware baselines; these are discussed in \autoref{app:related}. For all uncertainty-aware baselines, we report predictive averages over 64 stochastic forward passes. For IVON, we additionally report predictions obtained from the posterior mean parameters, denoted by IVON@mean. For CIFAR-100, we consider only four baselines, as our primary objective is to demonstrate that our optimizers can scale effectively to larger models and datasets.

\textbf{Experiments.} As in-domain tasks, we train LeNet on Fashion-MNIST, ResNet-20 on CIFAR-10, and DenseNet-121 on CIFAR-100. For the out-of-domain (OOD) tasks, we use EMNIST as the OOD dataset for LeNet trained on Fashion-MNIST,
SVHN for ResNet-20 trained on CIFAR-10, 
and TinyImageNet for DenseNet-121 trained on CIFAR-100.



\textbf{Results.} On Fashion-MNIST, the \texttt{CBOpt} variants achieve the strongest in-domain performance overall. In particular, \texttt{lCBOpt-adapt} obtains the best accuracy (0.913) and NLL (0.252), while \texttt{uCBOpt-adapt} performs comparably. For OOD detection on EMNIST, \texttt{uCBOpt} also matches MC Dropout for the lowest FPR@95. On CIFAR-10, the \texttt{CBOpt} variants remain competitive with the strongest baselines, reaching accuracies around 0.910, comparable to SGD and Laplace, and achieving OOD detection performance close to SWAG and MC Dropout, although they do not surpass them. It is worth noting that MC Dropout uses additional dropout layers, which may provide extra regularization, and that all the uncertainty-aware baselines require multiple stochastic forward passes or sampled models at inference time. In contrast, our method does not require such sampling at test time. Nevertheless, as shown in \autoref{sec:mcdrop-cbopt}, combining \texttt{CBOpt} with MC Dropout preserves the in-domain performance of MC Dropout while improving OOD performance. The benefits of \texttt{CBOpt} become more pronounced on CIFAR-100. In this larger-scale setting, all three \texttt{CBOpt} variants outperform the baselines on both in-domain and out-of-domain tasks, with the exception of ECE, where IVON and MC dropout achieve better calibration through multiple stochastic forward passes at test time. These results suggest that \texttt{CBOpt} scales effectively to larger models and datasets. Overall, the \texttt{CBOpt} family consistently matches or improves upon IVON, the most directly comparable variational baseline, across both in-domain and OOD metrics, suggesting that the maxitive formulation can provide competitive uncertainty estimates without sacrificing predictive accuracy.


\begin{table*}[t!]
\caption{Experimental results on in-domain and out-of-domain tasks over three runs. The best result and all results within one standard deviation of the best-performing method are bolded.}
\label{tab:main-results}
\centering
\setlength{\tabcolsep}{3pt}
\renewcommand{\arraystretch}{1.12}
\resizebox{\linewidth}{!}{
\begin{tabular}{p{3cm}lcccccc}
\toprule
& & \multicolumn{3}{c}{In-domain} & \multicolumn{3}{c}{Out-of-domain} \\
\cmidrule(lr){3-5} \cmidrule(lr){6-8}
Dataset & Optimizer
& Acc. $\uparrow$
& NLL $\downarrow$
& ECE $\downarrow$
& FPR@95 $\downarrow$
& AUROC $\uparrow$
& AUPR-Out $\uparrow$ \\
\midrule

\multirow[c]{10}{=}{
\begin{tabular}[t]{@{}l@{}}
\textbf{In:} Fashion-MNIST\\
\textbf{Out:} EMNIST
\end{tabular}
}
& AdamW
& $0.900_{\pm 0.005}$ & $0.294_{\pm 0.007}$ & $0.026_{\pm 0.004}$
& $0.321_{\pm 0.008}$ & $0.745_{\pm 0.009}$ & $0.627_{\pm 0.022}$ \\

& SGD
& $0.896_{\pm 0.005}$ & $0.293_{\pm 0.005}$ & $0.022_{\pm 0.004}$
& $0.297_{\pm 0.020}$ & $0.767_{\pm 0.022}$ & $0.845_{\pm 0.012}$ \\

& IVON@mean
& $0.907_{\pm 0.001}$ & $0.267_{\pm 0.005}$ & $0.018_{\pm 0.008}$
& $0.266_{\pm 0.024}$ & $0.787_{\pm 0.029}$ & $0.861_{\pm 0.019}$ \\

& IVON
& $\bm{0.911}_{\pm 0.003}$ & $\bm{0.258}_{\pm 0.001}$ & $\bm{0.013}_{\pm 0.004}$
& $\bm{0.249}_{\pm 0.024}$ & $0.804_{\pm 0.030}$ & $0.873_{\pm 0.020}$ \\

& MC-D
& $0.897_{\pm 0.003}$ & $0.281_{\pm 0.007}$ & $\bm{0.013}_{\pm 0.003}$
& $\bm{0.244}_{\pm 0.011}$ & $\bm{0.821}_{\pm 0.008}$ & $\bm{0.885}_{\pm 0.007}$ \\

& Laplace
& $0.895_{\pm 0.004}$ & $0.295_{\pm 0.009}$ & $0.024_{\pm 0.003}$
& $0.296_{\pm 0.047}$ & $0.770_{\pm 0.039}$ & $0.848_{\pm 0.031}$ \\

& SWAG
& $0.900_{\pm 0.003}$ & $0.342_{\pm 0.009}$ & $0.043_{\pm 0.003}$
& $0.378_{\pm 0.005}$ & $0.731_{\pm 0.005}$ & $0.820_{\pm 0.002}$ \\

& \graycell{\texttt{uCBOpt}}
& \graycell{}$0.907_{\pm 0.002}$
& \graycell{}$0.270_{\pm 0.002}$
& \graycell{}$0.023_{\pm 0.000}$
& \graycell{}$\bm{0.245}_{\pm 0.010}$
& \graycell{}$0.810_{\pm 0.012}$
& \graycell{}$\bm{0.879}_{\pm 0.010}$ \\

\rowcolor{gray!10} \cellcolor{white}
& \texttt{uCBOpt-adapt}
& $\bm{0.910}_{\pm 0.001}$ & $\bm{0.256}_{\pm 0.003}$ & $\bm{0.016}_{\pm 0.001}$
& $0.272_{\pm 0.011}$ & $0.773_{\pm 0.015}$ & $0.857_{\pm 0.009}$ \\

\rowcolor{gray!10} \cellcolor{white}
& \texttt{lCBOpt-adapt}
& $\bm{0.913}_{\pm 0.002}$ & $\bm{0.252}_{\pm 0.004}$ & $0.020_{\pm 0.000}$
& $0.269_{\pm 0.015}$ & $0.784_{\pm 0.019}$ & $0.859_{\pm 0.011}$ \\

\cmidrule{1-8}

\multirow[c]{10}{=}{
\begin{tabular}[c]{@{}l@{}}
\textbf{In:} CIFAR-10\\
\textbf{Out:} SVHN
\end{tabular}
}
& AdamW
& $0.872_{\pm 0.001}$ & $0.406_{\pm 0.002}$ & $0.042_{\pm 0.003}$
& $0.271_{\pm 0.033}$ & $0.820_{\pm 0.031}$ & $0.881_{\pm 0.021}$ \\

& SGD
& $0.909_{\pm 0.003}$ & $0.308_{\pm 0.011}$ & $0.039_{\pm 0.008}$
& $\bm{0.207}_{\pm 0.019}$ & $0.864_{\pm 0.013}$ & $\bm{0.918}_{\pm 0.011}$ \\

& IVON@mean
& $0.892_{\pm 0.003}$ & $0.358_{\pm 0.011}$ & $0.045_{\pm 0.008}$
& $0.255_{\pm 0.013}$ & $0.830_{\pm 0.011}$ & $0.890_{\pm 0.010}$ \\

& IVON
& $0.899_{\pm 0.004}$ & $0.317_{\pm 0.004}$ & $0.028_{\pm 0.005}$
& $0.252_{\pm 0.020}$ & $0.836_{\pm 0.014}$ & $0.894_{\pm 0.012}$ \\

& MC-D
& $\bm{0.923}_{\pm 0.002}$ & $\bm{0.230}_{\pm 0.006}$ & $\bm{0.010}_{\pm 0.002}$
& $0.230_{\pm 0.030}$ & $0.847_{\pm 0.024}$ & $0.902_{\pm 0.017}$ \\

& Laplace
& $0.909_{\pm 0.001}$ & $0.303_{\pm 0.008}$ & $0.035_{\pm 0.006}$ 
& $0.230_{\pm 0.021}$ & $0.852_{\pm 0.019}$ & $0.909_{\pm 0.013}$ \\

& SWAG
& $0.918_{\pm 0.001}$ & $0.251_{\pm 0.004}$ & $\bm{0.009}_{\pm 0.000}$ 
& $\bm{0.197}_{\pm 0.012}$ & $\bm{0.879}_{\pm 0.007}$ & $\bm{0.924}_{\pm 0.007}$ \\

& \graycell{}\texttt{uCBOpt}
& \graycell{}$0.910_{\pm 0.004}$
& \graycell{}$0.296_{\pm 0.009}$
& \graycell{}$0.038_{\pm 0.001}$
& \graycell{}$\bm{0.208}_{\pm 0.016}$
& \graycell{}$0.869_{\pm 0.012}$
& \graycell{}$\bm{0.918}_{\pm 0.010}$ \\

\rowcolor{gray!10} \cellcolor{white}
& \texttt{uCBOpt-adapt}
& $0.909_{\pm 0.005}$ & $0.300_{\pm 0.016}$ & $0.038_{\pm 0.004}$
& $0.210_{\pm 0.016}$ & $0.866_{\pm 0.010}$ & $0.915_{\pm 0.007}$ \\

\rowcolor{gray!10} \cellcolor{white}
& \texttt{lCBOpt-adapt}
& $0.909_{\pm 0.005}$ & $0.299_{\pm 0.002}$ & $0.037_{\pm 0.002}$
& $0.213_{\pm 0.012}$ & $0.861_{\pm 0.006}$ & $0.913_{\pm 0.003}$ \\

\cmidrule{1-8}

\multirow[c]{7}{=}{
\begin{tabular}[t]{@{}l@{}}
\textbf{In:} CIFAR-100\\
\textbf{Out:} TinyImageNet
\end{tabular}
}

& SGD
& $0.649_{\pm 0.004}$ & $1.334_{\pm 0.006}$ & $0.096_{\pm 0.006}$
& $0.357_{\pm 0.005}$ & $0.720_{\pm 0.008}$ & $0.684_{\pm 0.006}$ \\

& IVON@mean
& $0.599_{\pm 0.001}$ & $1.521_{\pm 0.008}$ & $0.092_{\pm 0.006}$
& $0.368_{\pm 0.006}$ & $0.702_{\pm 0.008}$ & $0.663_{\pm 0.007}$ \\

& IVON
& $0.607_{\pm 0.002}$ & $1.458_{\pm 0.012}$ & $\bm{0.071}_{\pm 0.004}$
& $0.368_{\pm 0.006}$ & $0.702_{\pm 0.008}$ & $0.663_{\pm 0.007}$ \\

& MC-D
& $0.649_{\pm 0.016}$ & $1.298_{\pm 0.016}$ & $\bm{0.074}_{\pm 0.020}$
& $0.353_{\pm 0.008}$ & $0.718_{\pm 0.008}$ & $0.681_{\pm 0.009}$ \\

& \graycell{\texttt{uCBOpt}}
& \graycell{}$\bm{0.671}_{\pm 0.014}$
& \graycell{}$\bm{1.231}_{\pm 0.005}$
& \graycell{}$0.095_{\pm 0.014}$
& \graycell{}$\bm{0.344}_{\pm 0.004}$
& \graycell{}$\bm{0.727}_{\pm 0.003}$
& \graycell{}$0.693_{\pm 0.003}$ \\

\rowcolor{gray!10} \cellcolor{white}
& \texttt{uCBOpt-adapt}
& $0.655_{\pm 0.011}$ & $1.298_{\pm 0.011}$ & $0.096_{\pm 0.021}$
& $\bm{0.343}_{\pm 0.004}$ & $\bm{0.729}_{\pm 0.002}$ & $\bm{0.696}_{\pm 0.001}$ \\

\rowcolor{gray!10} \cellcolor{white}
& \texttt{lCBOpt-adapt}
& $\bm{0.674}_{\pm 0.007}$ & $1.268_{\pm 0.018}$ & $0.110_{\pm 0.002}$
& $\bm{0.346}_{\pm 0.004}$ & $\bm{0.729}_{\pm 0.005}$ & $0.692_{\pm 0.006}$ \\

\bottomrule
\end{tabular}
}
\vskip -0.1in
\end{table*}

\section{Discussion}
\label{sec:discussion}

Possibilistic Bayesian inference is already an optimization problem, which seems to limit the interest in the development of possibilistic VI at first sight. On the contrary, we have shown that there is a surprising amount of subtleties to be understood and of insights to be gained when exploring the properties of possibilistic VI in general as well as specifically through the lens of a possibilistic version of exponential families and conjugate priors. And indeed, even when closely following existing treatments of VI and its applications, many differences emerge, such as regularities in conjugate families and a new route leading to novel optimization methods.

In particular, the \texttt{CBOpt} family of optimizers developed from possibilistic VI demonstrates that this framework is not merely a reformulation, but can lead to practically useful algorithms. From a practical viewpoint, the scalar parameter $\vartheta$ can be interpreted as modifying the effective curvature or regularization strength appearing in the denominator of the update. In the adaptive variants, this contribution changes throughout training. Our theory therefore separates this effect from the standard weight-decay term, providing additional flexibility in the update. Empirically, this flexibility leads to competitive performance across in-domain and out-of-domain tasks, suggesting that possibilistic VI offers not only a conceptually distinct perspective on uncertainty representation, but also a practical foundation for developing scalable uncertainty-aware learning algorithms.

\paragraph{Limitations.} The Newton-like update of \autoref{prop:updateUpperCBO} is based on a second-order Taylor expansion of the inner extremum problem and on the curvature condition $H_s \succ 0$, which couples the admissible range of the candidate precision $\vartheta$ to the local geometry of $\hat{\ell}$. For the lower CBO this condition is non-trivial to maintain pointwise and underlies the need for adaptive variants. Our exponential-family treatment further assumes that $A$ is proper, closed, and convex with $A^{\dagger\dagger} = A$, and that $\mathbb{E}^\star_{g_\lambda}[\theta]$ is a singleton, excluding the fully uninformative limit. Empirically, the framework is instantiated only for isotropic Gaussian candidates with a diagonal empirical-Fisher surrogate, leaving the anisotropic and non-Gaussian members of the construction in \autoref{sec:ExpFam} untested.



\bibliographystyle{apalike}
\bibliography{bibliography}

\newpage
\onecolumn
\appendix

\section*{Supplementary Material}
\addcontentsline{toc}{section}{Supplementary Material}

\startcontents[appendix]
\printcontents[appendix]{}{1}{\section*{Table of Contents}}

\newpage

\section{Related Work}
\label{app:related}

In this section, we position our work with respect to current literature. 

\textbf{Possibilistic Variational Inference.} Our work is closest to \citet{cella2025vi-im}, which proposes a ``variational-like'' Monte-Carlo--driven strategy for approximating the $\alpha$-cut of an Inferential Model's \citep{martin2013im} possibility contour with a $100(1-\alpha)\%$ credible set of a candidate family; see \citet{martin2026imreview} for a review on IMs. Their framework ensures that the resulting approximations inherit the finite-sample validity properties of IMs \citep{martin2022valid}. In contrast, we take a more standard approach, framing approximate inference as minimizing divergence from the Bayes posterior. Specifically, we derive a maxitive analogue of the Donsker-Varadhan formulation, which enables direct optimization over a chosen candidate family, analogous to probabilistic VI. Importantly, the optimization problem is well-suited for modern autograd engines designed for large-scale compositional models, like neural networks.

\textbf{Epistemic Uncertainty in Deep Learning.} Neural networks are notoriously susceptible to overconfidence, presenting a significant hurdle for their deployment in safety-critical applications. Standard models often assign high predictive certainty to incorrect outputs, particularly when encountering out-of-distribution (OOD) data \citep{zhang2025open}. Accordingly, \emph{Epistemic Uncertainty} (EU) quantification, which seeks to represent the model's lack of knowledge rather than the inherent noise in the data, has garnered attention from the research community. While Bayesian neural networks offer a theoretically grounded approach to this problem \citep{blundell2015weight-uncertainty, krueger2017bayesian, rudner2022tractable, gal16mcd, mobiny2021dropconnect, lakshminarayanan2017simple}, their high computational complexity has spurred the development of more efficient second-order predictive models. These methods, such as Prior Networks \citep{malinin2018predictive}, PostNet \citep{charpentier2020posterior}, and Natural Posterior Networks \citep{charpentier2022natural}, bypass parameter-space inference by directly modelling distributions over predictions.

A prominent subset of these second-order approaches is Evidential Deep Learning (EDL) \citep{sensoy2018evidential}. EDL draws upon Subjective Logic \citep{josang2016subjective} and the Dempster-Shafer Theory \citep{dempster1968generalization, shafer1976mathematical} to treat predictions as ``opinions'' supported by evidence. While various iterations have sought to refine this framework \citep{iedl,fedl}, many EDL variants rely on heuristic objectives that lack a rigorous probabilistic derivation \citep{bengs2022pitfalls}. Consequently, establishing a principled connection between these evidential objectives and formal epistemic uncertainty remains an active area of research.

This gap has motivated recent efforts to ground second-order prediction in possibility theory rather than in subjective logic or belief functions. Concurrent work by \citet{ni2026possibilistic} takes this direction by leveraging the maxitive pseudo-divergence and the maxitive Bayes posterior under the uninformative possibilistic prior, both introduced in this paper, to derive a Dirichlet-based loss for image classification. Their formulation specializes the present framework to the probability simplex, projecting parameter-space possibility onto prediction space via the supremum-based change of variable, and yields a closed-form inner maximizer that is competitive with state-of-the-art EDL methods. Their work and ours are complementary: where they take selected components and apply them to a specific second-order prediction problem, we develop the underlying variational principle, including the maxitive Donsker-Varadhan duality, the dual CBO formulations, and the exponential-family treatment, and study its consequences for general possibilistic inference and learning.

\textbf{Imprecise probability.} There exists a formal correspondence between possibility measures and credal sets: a given possibility measure characterizes a unique, closed, and convex set of probability measures that it upper bounds. This establishes possibilistic modelling as a mathematically rigorous instance of imprecise probability theory, a framework that has recently attracted growing interest in machine learning for principled epistemic uncertainty quantification, including credal generalization bounds \citep{caprio2024credall}, credal-set neural networks for out-of-distribution detection \citep{wang2024credal}, imprecise Bayesian deep learning \citep{caprio2024credalb}, likelihood-based credal prediction \citep{lohr2025credal}, and metrics over imprecise probabilistic objects \citep{chau2025integral}. Accordingly, our maxitive-VI formulation (\autoref{thm:maxitive_DV}) opens up a complementary line of research within this broader programme, providing a variational principle through which the comprehensive set of tools from imprecise probability theory can be applied to epistemic uncertainty quantification in deep learning.

\textbf{Computational Efficiency.} Bayesian deep learning provides a principled framework for uncertainty quantification, yet its practical application is often bottlenecked by the high computational cost of marginalizing over the parameters. Exact inference remains intractable for modern neural networks, necessitating the use of approximations that balance statistical rigour with hardware constraints. Here, we discuss a few popular strategies that we benchmark against:

\begin{itemize}
    \item Monte-Carlo Dropout (MC-D) \citep{gal16mcd} interprets the use of dropout at test time as a form of VI, simulating a model ensemble. It is highly efficient during training as it uses multiple forward passes of standard dropout, but has been know to provide unreliable uncertainty estimates \citep{verdoja2020notesmcd}.
    \item Last-layer Laplace approximation (Laplace) \citep{daxberger2021laplace} fits a Gaussian approximation to the posterior over the final-layer weights after maximum \textit{a posteriori} (MAP) training. It is computationally efficient and often improves uncertainty estimation, but its posterior uncertainty is restricted to the last layer.
    \item Stochastic Weight Averaging Gaussian (SWAG) \citep{maddox_2019_simple} fits a low-rank Gaussian distribution to the trajectory of iterates generated by SGD. This is computationally efficient because it captures weight-space statistics during a standard training run.
    \item Improved Variational Online Newton (IVON) \citep{shen2024variational} is a second-order variational optimizer that approximates a Gaussian posterior. It is computationally efficient, matching the complexity of first-order optimizers, while providing well-calibrated uncertainty estimates.
\end{itemize}

In comparison to these approaches, our \texttt{CBOpt} models the network weights using a possibility function. Similar to the diagonal Gaussian approximations used in SWAG-diagonal and IVON, we adopt a mean-field approximation for performing VI, which is computationally efficient and enables scaling our methodology to large-scale learning problems.

\section{Notations}
\label{app:notation}

In \autoref{tab:notations}, we present some of the notations used in this work.

\begingroup
\renewcommand{\arraystretch}{1.3}   
\setlength{\tabcolsep}{7pt}
\begin{table}[ht]
\caption{Summary of key notations}
\centering
\begin{tabularx}{0.8\linewidth}{@{} >{$}l<{$} L @{}}
\toprule
\text{Notation} & \text{Description} \\
\midrule
\possexp_{f} & Possibilistic expectation (mode) of $f$ \\
\mathcal{I}_f & Possibilistic precision of $f$ \\
A^{\dagger} & Legendre transform of $A$ \\
\loss & Regularized loss: $\ell(\theta) - \log \pi(\theta)$ \\
\overline{\mathrm{CBO}} & Upper consistency bound \\
\underline{\mathrm{CBO}} & Lower consistency bound \\
\mode{\theta}(\lambda) &  Mode / expected value of $g_\lambda$, that is $\possexp_{g_{\lambda}}[\bm{\theta}]$\\
\bottomrule
\label{tab:notations}
\end{tabularx}
\end{table}
\endgroup

\section{More on Possibility Theory}
\label{app:possibilityTheory}

To define epistemic uncertainty formally, we consider a sample space $\Omega$ whose elements characterise all the possible values of the relevant unknown quantities. Instead of equipping $\Omega$ with a probabilistic structure, we simply describe an unknown parameter $\theta_0$ in set $\Theta$ via a (deterministic) \emph{uncertain variable} $\bm{\theta} : \Omega \to \Theta$. If an element $\omega \in \Omega$ were the correct one, then $\bm{\theta}(\omega)$ would be true value of the parameter $\theta_0$. To \emph{describe} the available information about $\bm{\theta}$, we define a \emph{possibility function} $f_{\bm{\theta}}$ (a.k.a.\ possibility distribution) on $\Theta$ as a non-negative function that verifies $\sup_{\theta \in \Theta} f_{\bm{\theta}}(\theta)=1$. The possibility of an event $\bm{\theta} \in A$ for some $A \subseteq \Theta$ is then $\sup_{\theta \in A} f_{\bm{\theta}}(\theta)$.

\paragraph{Values and support.} Since $f_{\bm{\theta}}$ is non-negative and $\sup_{\theta\in\Theta} f_{\bm{\theta}}(\theta) = 1$, the values of any possibility function lie in $[0,1]$, with the upper bound attained somewhere on $\Theta$. The support of $f$ is the set $\operatorname{supp}(f_{\bm{\theta}}) \doteq \{\theta \in \Theta : f_{\bm{\theta}}(\theta) > 0\}$ of values that are not deemed impossible. The convention $f_{\bm{\theta}}(\theta) > 0$ everywhere is sometimes convenient and is assumed implicitly when, for instance, the ratio of two possibility functions is considered, as in the definition of the max-relative entropy in~\eqref{eq:max-rel-ent}.

\paragraph{Necessity.} Each possibility function $f_\theta$ comes equipped with a dual necessity measure, defined for any $A \subseteq \Theta$ by
\begin{equation}
    N_{\bm{\theta}}(A) \doteq 1 - \sup_{\theta \in A^c} f_{\bm{\theta}}(\theta).
    \label{eq:necessity}
\end{equation}
Whereas $\sup_{\theta \in A} f_{\bm{\theta}}(\theta)$ quantifies how plausible the event $\theta \in A$ is, $N_{\bm{\theta}}(A)$ quantifies how strongly the available information rules out its complement. An event with low necessity is not necessarily implausible, only insufficiently supported by the available information, a key distinction for the conservative treatment of epistemic uncertainty in possibility theory.

\paragraph{Independence.} Two uncertain variables $\bm{\theta}$ and $\bm{\psi}$ on $\Theta$ and $\Psi$ are said to be independent (or independently described) when their joint possibility function factorises as
\begin{equation}
    f_{\bm{\theta},\bm{\psi}}(\theta, \psi) = f_{\bm{\theta}}(\theta) f_{\bm{\psi}}(\psi), \qquad \forall (\theta, \psi) \in \Theta \times \Psi.
    \label{eq:independence}
\end{equation}
This convention mirrors the probabilistic notion of independence and is the one adopted throughout the paper. A weaker, more conservative alternative occasionally found in the literature replaces the product by a minimum but is not used here.

\paragraph{Marginalization and conditioning.} Possibility functions behave similar to probability mass functions, except that summation is replaced by a maximum. For instance, if $\bm{\theta}$ and another uncertain variable $\bm{\psi}$ on a set $\Psi$ are jointly described by a possibility function $f_{\bm{\theta},\bm{\psi}}$, then the \emph{marginal} possibility function describing $\bm{\theta}$ is characterised by
\begin{equation}
f_{\bm{\theta}}(\theta) =\sup_{\psi \in \Psi} f_{\bm{\theta},\bm{\psi}}(\theta,\psi), \qquad \forall \theta \in \Theta.
\end{equation}
Similarly, for a fixed $\psi \in \Psi$ satisfying $f_{\bm{\psi}}(\psi) > 0$, the \emph{conditional} possibility function of $\bm{\theta}$ given $\bm{\psi} = \psi$ is characterised by
\begin{equation}
f_{\bm{\theta}}(\theta \given \bm{\psi} = \psi) = \frac{f_{\bm{\theta},\bm{\psi}}(\theta,\psi)}{f_{\bm{\psi}}(\psi)}, \qquad \forall \theta \in \Theta. \label{eq:conditioning}
\end{equation}
As is standard with probability distributions, we will often omit which uncertain variable is being described by a possibility function and simply write, e.g. $f$ instead of $f_{\bm{\theta}}$. 

\paragraph{Bayesian inference.} Given a likelihood $L(\theta) \propto p(x \mid \theta)$, regarded as a possibility function over $\theta$, and a prior possibility function $\pi$, applying the conditioning rule~\eqref{eq:conditioning} to the joint possibility function $f_{\theta, X}(\theta, x) \propto p(x \mid \theta)\, \pi(\theta)$ directly yields
\begin{equation}
    g^\star_{\max}(\theta) = \frac{p(x \mid \theta)\, \pi(\theta)}{\sup_{\theta'} p(x \mid \theta')\, \pi(\theta')},
    \label{eq:posterior-derivation}
\end{equation}
which is the posterior used as the central object of~\eqref{def:gibbs:max}. The denominator plays the role of a marginal likelihood and is referred to in the main text as the maxitive marginal likelihood $Z_{\max}$. Unlike its probabilistic counterpart, it measures the consistency between the prior and the likelihood rather than an averaged model fit.

\paragraph{Uninformative priors.} The function $\pi \equiv \mathbf{1}$ is itself a valid possibility function and represents the complete absence of prior information, in the sense that every value of $\theta$ is deemed fully plausible. More generally, any prior of the form $\pi(\theta) = \exp(-R(\theta))$ with $R \geq 0$ is automatically a possibility function once renormalised by $\sup_\theta \exp(-R(\theta))$, regardless of whether $\int \exp(-R(\theta))\, \mathrm{d}\theta$ is finite. This contrasts with the probabilistic setting, where standard regularizers such as a constant or the indicator of an unbounded set fail to integrate and yield improper priors.

\paragraph{Connection to imprecise probability.} A possibility function $f$ can also be interpreted as an upper envelope over a family of probability distributions, namely the credal set $\{P : P(A) \leq \sup_{\theta \in A} f(\theta) \text{ for all measurable } A\}$. This perspective is not developed here, and we refer the interested reader to \citet{hieu2025decoupling} for further discussion.


\section{Proofs}
\label{app:proofs}

In this section, we present the proofs for the main theoretical results in the main text.
    
\subsection{Proof of \autoref{thm:maxitive_DV}}
\label{proof:thm:maxitive_DV}

We prove each equation of the theorem one after the other, with the notation $h = -\ell$ for conciseness.

\begin{proof}[Proof of \autoref{eq:dv:max:supinf}]
    Consider some fixed possibility function $g$. 
    
    Let us first assume that there exists some $\mode{\theta}_g\in\argsup_{\theta\in\Theta} g(\theta)$. We then have $g(\mode{\theta}_g)=1$, so that:
    \begin{subequations}
    \begin{align}
    \log \sup_{\theta \in \Theta} e^{h(\theta)} \pi(\theta) & \geq \log e^{h(\mode{\theta}_g)} \pi(\mode{\theta}_g) \\ 
    & = \log \frac{e^{h(\mode{\theta}_g)} \pi(\mode{\theta}_g)}{g(\mode{\theta}_g)} \\ 
    & \geq \inf_{\theta \in \Theta} \log \frac{e^{h(\theta)} \pi(\theta)}{g(\theta)} \\ 
    &= \inf_{\theta \in \Theta} \Big\{ h(\theta) - \log \frac{g(\theta)}{\pi(\theta)} \Big\} \, .
    \end{align}
    \end{subequations}
    Now, if $\argsup_{\theta\in\Theta} g(\theta)=\emptyset$, then one can still define a sequence $(\mode{\theta}_{g,n})_{n=1}^{\infty}\in\Theta^\mathbb{N}$ such that for any integer $n>0$, $g(\mode{\theta}_{g,n})\geq 1-1/n$. We can then write:
    \begin{subequations}
    \begin{align}
        \log \sup_{\theta \in \Theta} e^{h(\theta)} \pi(\theta) & \geq \log e^{h(\mode{\theta}_{g,n})} \pi(\mode{\theta}_{g,n}) \\ 
        & = \log \frac{e^{h(\mode{\theta}_{g,n})} \pi(\mode{\theta}_{g,n})}{1-\frac{1}{n}} + \log \left(1-\frac{1}{n}\right) \\
        & \geq \log \frac{e^{h(\mode{\theta}_{g,n})} \pi(\mode{\theta}_{g,n})}{g(\mode{\theta}_{g,n})} + \log \left(1-\frac{1}{n}\right) \\ 
        & \geq \inf_{\theta \in \Theta} \log \frac{e^{h(\theta)} \pi(\theta)}{g(\theta)} + \log \left(1-\frac{1}{n}\right) \\ 
        & = \inf_{\theta \in \Theta} \Big\{ h(\theta) - \log \frac{g(\theta)}{\pi(\theta)} \Big\} + \log \left(1-\frac{1}{n}\right) \, ,
    \end{align}
    \end{subequations}
    so by letting $n\to+\infty$, we have
    \begin{align}
    \log \sup_{\theta \in \Theta} e^{h(\theta)} \pi(\theta) \geq \inf_{\theta \in \Theta} \Big\{ h(\theta) - \log \frac{g(\theta)}{\pi(\theta)} \Big\} \, .
    \end{align}
    Consequently, the inequality above holds for any possibility function $g\in\mathcal{F}(\Theta)$, and by taking the supremum over $g\in\mathcal{F}(\Theta)$ in the right-hand side leads to:
    \begin{align}
    \log \sup_{\theta \in \Theta} e^{h(\theta)} \pi(\theta) \geq \sup_{g\in\mathcal{F}(\Theta)} \inf_{\theta \in \Theta} \Big\{ h(\theta) - \log \frac{g(\theta)}{\pi(\theta)} \Big\} \, .
    \end{align}
    Furthermore, the choice of possibility function $g(\theta)=g^{\star}_{\textnormal{max}}(\theta) = {e^{h(\theta)} \pi(\theta)}/{\sup_{\theta'} e^{h(\theta')} \pi(\theta')}$ transforms the inequality into an equality, which finally gives \autoref{eq:dv:max:supinf}:
    \begin{align}
        \log \sup_{\theta \in \Theta} e^{h(\theta)} \pi(\theta) = \sup_{g\in\mathcal{F}(\Theta)} \inf_{\theta \in \Theta} \Big\{ h(\theta) - \log \frac{g(\theta)}{\pi(\theta)} \Big\} \, .
    \end{align}
\end{proof}
  
\begin{proof}[Proof of \autoref{eq:dv:max:infsup}] 
    Any possibility function $g$ takes its values in $[0,1]$, so that for any possibility function $g$ and any parameter $\theta\in\Theta$, we have $\log g(\theta)\leq 0$. Thus, for any possibility function $g$ and any parameter $\theta$,
    \begin{align}
    \log e^{h(\theta)} \pi(\theta) \leq h(\theta) - \log \frac{g(\theta)}{\pi(\theta)} \, ,
    \end{align}
    so taking the supremum over $\theta$ in both sides leads for any $g$ to:
    \begin{align}
    \log \sup_{\theta \in \Theta} e^{h(\theta)} \pi(\theta) \leq \sup_{\theta \in \Theta} \Big\{ h(\theta) - \log \frac{g(\theta)}{\pi(\theta)} \Big\} \, ,
    \end{align}
    and taking the infimum over possibility functions $g$ in the right-hand side leads to
    \begin{align}
    \log \sup_{\theta \in \Theta} e^{h(\theta)} \pi(\theta) \leq \inf_{g\in\mathcal{F}(\Theta)} \sup_{\theta \in \Theta} \Big\{ h(\theta) - \log \frac{g(\theta)}{\pi(\theta)} \Big\} \, ,
    \end{align}
    Once again, $g=g^{\star}_{\textnormal{max}}$ transforms the inequality into an equality, which finally gives \autoref{eq:dv:max:infsup}
    \begin{align}
    \log \sup_{\theta \in \Theta} e^{h(\theta)} \pi(\theta) = \inf_{g\in\mathcal{F}(\Theta)} \sup_{\theta \in \Theta} \Big\{  h(\theta) - \log \frac{g(\theta)}{\pi(\theta)} \Big\} \, .
    \end{align}
\end{proof}

\begin{proof}[Proof of \autoref{eq:gibbs:max:supinf}]
    We have already mentioned in the proof of \autoref{eq:dv:max:supinf} that
    \begin{align}
    g^{\star}_{\textnormal{max}}(\theta) \doteq \dfrac{e^{h(\theta)} \pi(\theta)}{\sup\limits_{\theta'\in\Theta} e^{h(\theta')} \pi(\theta')} \in \argmax_{g\in\mathcal{F}(\Theta)} \inf_{\theta \in \Theta} \left\{ h(\theta) - \log\left(\frac{g(\theta)}{\pi(\theta)}\right) \right\} \, , 
    \end{align}
    so that
    \begin{align}
    \inf_{\theta \in \Theta} \left\{ h(\theta) - \log\left(\frac{g^{\star}_{\textnormal{max}}(\theta)}{\pi(\theta)}\right) \right\} = \sup_{g\in\mathcal{F}(\Theta)}\inf_{\theta \in \Theta} \left\{ h(\theta) - \log\left(\frac{g(\theta)}{\pi(\theta)}\right) \right\} \, .
    \end{align}
    Furthermore, any possibility function $g'\in\mathcal{F}(\Theta)$ such that $g'\preceq g^{\star}_{\textnormal{max}}$ satisfies by monotonicity of the logarithmic function:
    \begin{align}
    \inf_{\theta \in \Theta} \left\{ h(\theta) - \log\left(\frac{g'(\theta)}{\pi(\theta)}\right) \right\} \geq \inf_{\theta \in \Theta} \left\{ h(\theta) - \log\left(\frac{g^{\star}_{\textnormal{max}}(\theta)}{\pi(\theta)}\right) \right\} \, .
    \end{align}
    Hence, combining the two lines above, we get for any possibility function $g'\in\mathcal{F}(\Theta)$ such that $g'\preceq g^{\star}_{\textnormal{max}}$:
    \begin{align}
    \inf_{\theta \in \Theta} \left\{ h(\theta) - \log\left(\frac{g'(\theta)}{\pi(\theta)}\right) \right\} \geq \sup_{g\in\mathcal{F}(\Theta)}\inf_{\theta \in \Theta} \left\{ h(\theta) - \log\left(\frac{g(\theta)}{\pi(\theta)}\right) \right\} \, .
    \end{align}
    Since by definition of the supremum, we have the reverse inequality:
    \begin{align}
    \inf_{\theta \in \Theta} \left\{ h(\theta) - \log\left(\frac{g'(\theta)}{\pi(\theta)}\right) \right\} \leq \sup_{g\in\mathcal{F}(\Theta)}\inf_{\theta \in \Theta} \left\{ h(\theta) - \log\left(\frac{g(\theta)}{\pi(\theta)}\right) \right\} \, ,
    \end{align}
    we finally have for any possibility function $g'\in\mathcal{F}(\Theta)$ such that $g'\preceq g^{\star}_{\textnormal{max}}$:
    \begin{align}
    \inf_{\theta \in \Theta} \left\{ h(\theta) - \log\left(\frac{g'(\theta)}{\pi(\theta)}\right) \right\} = \sup_{g\in\mathcal{F}(\Theta)}\inf_{\theta \in \Theta} \left\{ h(\theta) - \log\left(\frac{g(\theta)}{\pi(\theta)}\right) \right\} \, .
    \end{align}
    This provides half of the proof, namely
    \begin{align}
    \left\{ g\in\mathcal{F}(\Theta) \, : \, g\preceq g^{\star}_{\textnormal{max}} \right\} \subset \argmax_{g\in\mathcal{F}(\Theta)} \inf_{\theta \in \Theta} \left\{ h(\theta) - \log\left(\frac{g(\theta)}{\pi(\theta)}\right) \right\} \, .
    \end{align}
    To get the equality, we still have to show that any possibility function $g$ not satisfying $g\preceq g^{\star}_{\textnormal{max}}$ does not belong to the argmax.
    To show this, let us consider some possibility function $g'\notin \left\{ g\in\mathcal{F}(\Theta) \, : \, g\preceq g^{\star}_{\textnormal{max}} \right\}$, and show that
    \begin{align}
    \inf_{\theta \in \Theta} \left\{ h(\theta) - \log\left(\frac{g'(\theta)}{\pi(\theta)}\right) \right\} < \sup_{g\in\mathcal{F}(\Theta)}\inf_{\theta \in \Theta} \left\{ h(\theta) - \log\left(\frac{g(\theta)}{\pi(\theta)}\right) \right\} \, .
    \end{align}
    To see this, recall that since $g' \not\preceq g^{\star}_{\textnormal{max}}$, there exists at least one $\theta' \in \Theta$ such that $g'(\theta') > g^{\star}_{\textnormal{max}}(\theta')$. Therefore, 
    \begin{align}
    h(\theta') - \log\!\left(\frac{g'(\theta')}{\pi(\theta')}\right) < h(\theta') - \log\!\left(\frac{g^{\star}_{\textnormal{max}}(\theta')}{\pi(\theta')}\right), 
    \end{align}
    Using the definition of the infimum, we have 
    \begin{align}
    \inf_{\theta\in\Theta}\left\{h(\theta) - \log\!\left(\frac{g'(\theta)}{\pi(\theta)}\right)\right\} \le h(\theta') - \log\!\left(\frac{g'(\theta')}{\pi(\theta')}\right) < h(\theta') - \log\!\left(\frac{g^{\star}_{\textnormal{max}}(\theta')}{\pi(\theta')}\right) . 
    \end{align}
    However, notice that by definition of $g^{\star}_{\textnormal{max}}$, the quantity in the right-hand side above does not depend on $\theta'$ since: 
    \begin{align}
    h(\theta') - \log\!\left(\frac{g^{\star}_{\textnormal{max}}(\theta')}{\pi(\theta')}\right) = h(\theta') - \log\!\left(\frac{\dfrac{e^{h(\theta')} \pi(\theta')}{\sup e^{h(\cdot)} \pi(\cdot)}}{\pi(\theta')}\right) = \log \sup_{\theta\in\Theta} e^{h(\theta)} \pi(\theta) ,
    \end{align}
    so combining the two lines above, we get
    \begin{align}
    \inf_{\theta\in\Theta}\left\{h(\theta) - \log\!\left(\frac{g'(\theta)}{\pi(\theta)}\right)\right\} < \log \sup_{\theta\in\Theta} e^{h(\theta)} \pi(\theta) ,
    \end{align}
    By using \autoref{eq:dv:max:supinf}, we can rewrite the quantity in the right-hand side:
    \begin{align}
    \inf_{\theta\in\Theta}\left\{h(\theta) - \log\!\left(\frac{g'(\theta)}{\pi(\theta)}\right)\right\} < \sup_{g\in\mathcal{F}(\Theta)} \inf_{\theta\in\Theta}\left\{h(\theta) - \log\!\left(\frac{g(\theta)}{\pi(\theta)}\right)\right\}, 
    \end{align}
    which is exactly what we wanted to show. Hence, $g'$ cannot belong to the set of maximisers. This proves that 
    \begin{align}
    \argmax_{g\in\mathcal{F}(\Theta)} \inf_{\theta\in\Theta} \left\{h(\theta) - \log\!\left(\frac{g(\theta)}{\pi(\theta)}\right)\right\} = \left\{ g\in\mathcal{F}(\Theta) : g\preceq g^{\star}_{\textnormal{max}} \right\}, 
    \end{align}
    which concludes the proof.
    \end{proof}

\begin{proof}[Proof of \autoref{eq:gibbs:max:infsup}]
    The proof of \autoref{eq:gibbs:max:infsup} is very similar to the proof of \autoref{eq:gibbs:max:supinf}, and is only provided for the sake of completeness.
    Once again, we start from the following fact mentioned in the proof \eqref{eq:dv:max:infsup}:
    \begin{align}
    g^{\star}_{\textnormal{max}}(\theta) \doteq \dfrac{e^{h(\theta)} \pi(\theta)}{\sup\limits_{\theta'\in\Theta} e^{h(\theta')} \pi(\theta')} \in \argmin_{g\in\mathcal{F}(\Theta)} \sup_{\theta \in \Theta} \left\{ h(\theta) - \log\left(\frac{g(\theta)}{\pi(\theta)}\right) \right\} \, , 
    \end{align}
    so that
    \begin{align}
    \sup_{\theta \in \Theta} \left\{ h(\theta) - \log\left(\frac{g^{\star}_{\textnormal{max}}(\theta)}{\pi(\theta)}\right) \right\} = \inf_{g\in\mathcal{F}(\Theta)}\sup_{\theta \in \Theta} \left\{ h(\theta) - \log\left(\frac{g(\theta)}{\pi(\theta)}\right) \right\} \, .
    \end{align}
    Furthermore, any possibility function $g'\in\mathcal{F}(\Theta)$ such that $g^{\star}_{\textnormal{max}} \preceq g'$ satisfies by monotonicity of the logarithmic function:
    \begin{align}
    \sup_{\theta \in \Theta} \left\{ h(\theta) - \log\left(\frac{g'(\theta)}{\pi(\theta)}\right) \right\} \leq \sup_{\theta \in \Theta} \left\{ h(\theta) - \log\left(\frac{g^{\star}_{\textnormal{max}}(\theta)}{\pi(\theta)}\right) \right\} \, .
    \end{align}
    Hence, combining the two lines above, we get for any possibility function $g'\in\mathcal{F}(\Theta)$ such that $g^{\star}_{\textnormal{max}} \preceq g'$:
    \begin{align}
    \sup_{\theta \in \Theta} \left\{ h(\theta) - \log\left(\frac{g'(\theta)}{\pi(\theta)}\right) \right\} \leq \inf_{g\in\mathcal{F}(\Theta)}\sup_{\theta \in \Theta} \left\{ h(\theta) - \log\left(\frac{g(\theta)}{\pi(\theta)}\right) \right\} \, .
    \end{align}
    Since by definition of the infimum, we have the reverse inequality:
    \begin{align}
    \sup_{\theta \in \Theta} \left\{ h(\theta) - \log\left(\frac{g'(\theta)}{\pi(\theta)}\right) \right\} \geq \inf_{g\in\mathcal{F}(\Theta)}\sup_{\theta \in \Theta} \left\{ h(\theta) - \log\left(\frac{g(\theta)}{\pi(\theta)}\right) \right\} \, ,
    \end{align}
    we finally have for any possibility function $g'\in\mathcal{F}(\Theta)$ such that $g^{\star}_{\textnormal{max}}\preceq g'$:
    \begin{align}
    \sup_{\theta \in \Theta} \left\{ h(\theta) - \log\left(\frac{g'(\theta)}{\pi(\theta)}\right) \right\} = \inf_{g\in\mathcal{F}(\Theta)}\sup_{\theta \in \Theta} \left\{ h(\theta) - \log\left(\frac{g(\theta)}{\pi(\theta)}\right) \right\} \, .
    \end{align}
    This provides half of the proof, namely
    \begin{align}
    \left\{ g\in\mathcal{F}(\Theta) \, : \, g^{\star}_{\textnormal{max}} \preceq g \right\} \subset \argmin_{g\in\mathcal{F}(\Theta)} \sup_{\theta \in \Theta} \left\{ h(\theta) - \log\left(\frac{g(\theta)}{\pi(\theta)}\right) \right\} \, .
    \end{align}
    To get the equality, we now have to show that any possibility function $g$ not satisfying $g^{\star}_{\textnormal{max}}\preceq g$ does not belong to the argmin.
    To show this, let us consider some possibility function $g'\notin \left\{ g\in\mathcal{F}(\Theta) \, : \, g^{\star}_{\textnormal{max}} \preceq g \right\}$, and show that
    \begin{align}
    \sup_{\theta \in \Theta} \left\{ h(\theta) - \log\left(\frac{g'(\theta)}{\pi(\theta)}\right) \right\} > \inf_{g\in\mathcal{F}(\Theta)}\sup_{\theta \in \Theta} \left\{ h(\theta) - \log\left(\frac{g(\theta)}{\pi(\theta)}\right) \right\} \, .
    \end{align}
    To see this, recall that since $g^{\star}_{\textnormal{max}} \not\preceq g'$, there exists at least one $\theta' \in \Theta$ such that $g'(\theta') < g^{\star}_{\textnormal{max}}(\theta')$. Therefore, 
    \begin{align}
    h(\theta') - \log\!\left(\frac{g'(\theta')}{\pi(\theta')}\right) > h(\theta') - \log\!\left(\frac{g^{\star}_{\textnormal{max}}(\theta')}{\pi(\theta')}\right), 
    \end{align}
    Using the definition of the supremum, we now have 
    \begin{align}
    \sup_{\theta\in\Theta}\left\{h(\theta) - \log\!\left(\frac{g'(\theta)}{\pi(\theta)}\right)\right\} \ge h(\theta') - \log\!\left(\frac{g'(\theta')}{\pi(\theta')}\right) > h(\theta') - \log\!\left(\frac{g^{\star}_{\textnormal{max}}(\theta')}{\pi(\theta')}\right) . 
    \end{align}
    Again, the quantity in the right-hand side above does not depend on $\theta'$: 
    \begin{align}
    h(\theta') - \log\!\left(\frac{g^{\star}_{\textnormal{max}}(\theta')}{\pi(\theta')}\right) = h(\theta') - \log\!\left(\frac{\dfrac{e^{h(\theta')} \pi(\theta')}{\sup e^{h(\cdot)} \pi(\cdot)}}{\pi(\theta')}\right) = \log \sup_{\theta\in\Theta} e^{h(\theta)} \pi(\theta) ,
    \end{align}
    so combining the two lines above leads to
    \begin{align}
    \sup_{\theta\in\Theta}\left\{h(\theta) - \log\!\left(\frac{g'(\theta)}{\pi(\theta)}\right)\right\} > \log \sup_{\theta\in\Theta} e^{h(\theta)} \pi(\theta) ,
    \end{align}
    Now using \autoref{eq:dv:max:infsup}, we can rewrite the quantity in the right-hand side:
    \begin{align}
    \sup_{\theta\in\Theta}\left\{h(\theta) - \log\!\left(\frac{g'(\theta)}{\pi(\theta)}\right)\right\} > \inf_{g\in\mathcal{F}(\Theta)} \sup_{\theta\in\Theta}\left\{h(\theta) - \log\!\left(\frac{g(\theta)}{\pi(\theta)}\right)\right\}, 
    \end{align}
    which is precisely what we wanted to show. Hence, $g'$ cannot belong to the set of minimisers. This proves that 
    \begin{align}
    \argmin_{g\in\mathcal{F}(\Theta)} \sup_{\theta\in\Theta} \left\{h(\theta) - \log\!\left(\frac{g(\theta)}{\pi(\theta)}\right)\right\} = \left\{ g\in\mathcal{F}(\Theta) : g^{\star}_{\textnormal{max}} \preceq g \right\}, 
    \end{align}
    which concludes the proof.
    \end{proof}


\subsection{Proofs of Results in \autoref{sec:ExpFam}}
\label{proof:ExpFam}


\begin{proof}[Proof of \autoref{prop:posteriorPossibilty}]
    Given the probabilistic likelihood $p(x\given \theta) = \exp( \theta^{\intercal}T(x) - A(\theta) - B(x) )$, under the uninformative prior on $\Theta$, $\pi\rb{\theta} = 1$, the posterior possibility is given by
    \begin{subequations}
    \begin{align}
        \pi\rb{\theta \given x} &= \frac{p(x\given \theta) \pi\rb{\theta}}{\max_{\theta'} p(x\given \theta') \pi\rb{\theta'}} = \frac{p(x\given \theta)}{\max_{\theta'} p(x\given \theta')} \\
        &= \exp\rb{\rb{\theta-\mode{\theta}}^\intercal T(x) - \rb{A\rb{\theta}-A\rb{\mode{\theta}}}},
    \end{align}
    \end{subequations}
    where we denote the MLE by $\mode{\theta} \doteq \argmax_{\theta} p\rb{x\given \theta} = \argmax_{\theta} \theta^{\intercal}T(x) - A(\theta)$. The first-order condition for the maximum tells us that $T(x) = \nabla_{\theta} A\rb{\theta^{\thinstar}}$. Plugging this in the equation above, we get the result in \eqref{eq:PosteriorAsBregman}:
    \begin{subequations}
    \begin{align}
        \pi\rb{\theta \given x} &= \exp\rb{\rb{\theta-\mode{\theta}}^\intercal \nabla_{\theta} A\rb{\theta^{\thinstar}} - \rb{A\rb{\theta}-A\rb{\mode{\theta}}}} \\
        &= \exp\rb{-D_A\rb{\theta \| \mode{\theta}}}.
    \end{align}
    \end{subequations}
    Next, we get the result in \eqref{eq:possConjPrior} by rewriting the posterior as
    \begin{subequations}
    \begin{align}
        \pi\rb{\theta \given x} &= \exp\rb{\theta^\intercal T(x) - A\rb{\theta} -\rb{\theta^{\thinstar\intercal} T(x) - A\rb{\mode{\theta}}}} \\
        &= \exp\rb{\theta^\intercal \lambda - A\rb{\theta} - \leg{A} \rb{\lambda}},
    \end{align}
    \end{subequations}
    where $\lambda \doteq T(x)$ and $\leg{A}=\max_{\theta}\theta^\intercal T(x) - A(\theta)=\theta^{\thinstar\intercal} T(x) - A\rb{\mode{\theta}}$ is the Legendre transform of $A$. The resulting posterior is indeed of the same form as $g_{\lambda}$ in \eqref{eq:possExpFam}.
\end{proof}

\begin{proof}[Proof of \autoref{prop:PropertiesOfConjugatePriors}]
    First, we show that $f_{\theta}: \lambda \to g_{\lambda}\rb{\theta}$ is in $\mathcal{G}_{\leg{A}}\rb{\Lambda}\doteq \big\{ \lambda \mapsto \exp(\theta^\intercal \lambda - A^{\dagger\dagger}\rb{\theta} - A^{\dagger}\rb{\lambda} ) : \theta \in \Theta \big\}$. Since $A^{\dagger\dagger}=A$ (assuming $A$ is proper, closed, and convex), we have
    \begin{subequations}
    \begin{align}
        g_{\lambda}\rb{\theta} &= \exp\rb{\lambda^\intercal\theta - \leg{A}\rb{\lambda} - A\rb{\theta}} \\
        &= \exp\rb{\theta^\intercal \lambda - A^{\dagger\dagger}\rb{\theta} - A^{\dagger}\rb{\lambda}}. 
    \end{align}
    \end{subequations}
    Therefore, the mapping $f_{\theta}: \lambda \to g_{\lambda}\rb{\theta}$, which can be written as
    \begin{align}
        f_{\theta}: \lambda \to \exp\rb{\theta^\intercal \lambda - A^{\dagger\dagger}\rb{\theta} - A^{\dagger}\rb{\lambda}},
    \end{align}
    is in $\mathcal{G}_{\leg{A}}\rb{\Lambda}$.

    Next, we show that $g_{\lambda}^{\nu} \in \mathcal{G}_{\nu A}(\Theta)$. 
    It holds that
    \begin{subequations}
    \begin{align}
        g_{\lambda}^{\nu}\rb{\theta} &= \exp\rb{\nu\lambda^\intercal\theta - \nu \leg{A}\rb{\lambda} - \nu A\rb{\theta}} \\
        &= \exp\rb{\nu\lambda^\intercal\theta - \leg{\rb{\nu A}}\rb{\nu\lambda} - \nu A\rb{\theta}},
    \end{align}
    \end{subequations}
    where the second equality follows from the following property of the convex conjugate:
    \begin{align}
        \nu\leg{A}\rb{\lambda} &= \nu\sup_{\theta} \lambda^\intercal\theta - A\rb{\theta} = \sup_{\theta} \nu\lambda^\intercal\theta - \nu A\rb{\theta} = \leg{\rb{\nu A}}\rb{\nu\lambda}.
    \end{align}
    We conclude that $g_{\lambda}^{\nu} \in \mathcal{G}_{\nu A}(\Theta)$, as required.
    
    Finally, consider the likelihood $p(x\given \theta) = \exp( \theta^{\intercal}T(x) - A(\theta) - B(x) )$, and a choice of prior $g_{\lambda, \nu} \doteq g_{\lambda}^{\nu}$, with $g_{\lambda} \in \mathcal{G}_A(\Theta)$ defined as $g_{\lambda}\rb{\theta} = \exp\rb{\lambda^{\intercal}\theta - \leg{A}\rb{\lambda} - A\rb{\theta}}$, for some $\lambda \in \Lambda$. 
    The posterior possibility is given by
    \begin{subequations}
    \begin{align}
        g_{\lambda,\nu}\rb{\theta\given x} &= \frac{p(x\given \theta) g_{\lambda}^{\nu}\rb{\theta}}{\max_{\theta'} p(x\given \theta') g_{\lambda}^{\nu}\rb{\theta'}} \\
        &= \frac{\exp\rb{\theta^\intercal\rb{T(x)+\nu\lambda} - \rb{\nu+1} A\rb{\theta}}}{\max_{\theta'} \exp\rb{\theta'^\intercal\rb{T(x)+\nu\lambda} - \rb{\nu+1} A\rb{\theta'}}} \\
        &= \exp\rb{\theta^\intercal\rb{T(x)+\nu\lambda} - \rb{\nu+1} A\rb{\theta} - \rb{\nu+1} \max_{\theta'} \rb{\theta'^\intercal\rb{\frac{T\rb{x}+\nu\lambda}{\nu+1}} - A\rb{\theta'}}} \\
        &= \exp\rb{\theta^\intercal\rb{T(x)+\nu\lambda} - \rb{\nu+1} A\rb{\theta} - \rb{\nu+1}\leg{A}\rb{\frac{T\rb{x}+\nu\lambda}{\nu+1}}} \\
        &= g_{\frac{T(x)+\nu\lambda}{\nu+1},\nu+1}\rb{\theta}.
    \end{align}
    \end{subequations}
    Therefore, $g_{\lambda,\nu}$ is a valid conjugate prior for the likelihood $p(\cdot \given \theta)$, where the hyperparameters are updated as
    \begin{align}
        \lambda' = \frac{\nu\lambda + T(x)}{\nu+1}, \quad \nu' = \nu + 1.
    \end{align}

    As a sanity check, note that with $\nu=0$, $g$ is the uninformative prior, and we recover the posterior in \autoref{prop:posteriorPossibilty}. 

    For $n$ i.i.d.\ observations $\{x_i\}_{i=1}^n$, the likelihood product is $\prod_{i=1}^n p(x_i \given \theta)\propto \exp \rb{\theta^{\intercal}\sum_{i=1}^{n}T(x_i)-nA(\theta)}$. In this case, following the same derivation as above, the posterior possibility is given by
    \begin{align}
        g_{\lambda,\nu}\rb{\theta\given x_{1:n}} = g_{\frac{\sum_{i=1}^{n}T(x_i)+\nu\lambda}{\nu+n},\nu+n}\rb{\theta},
    \end{align}
    where the hyperparameters are now updated as
    \begin{align}
        \lambda' = \frac{\nu\lambda + \sum_{i=1}^{n}T(x_i)}{\nu+n}, \quad \nu' = \nu + n.
    \end{align}
    
\end{proof}

\begin{proof}[Proof of \autoref{prop:updateUpperCBO}]
    Recall that $\overline{\mathrm{CBO}}(\eta) \doteq \sup_{\theta\in\Theta} - \loss(\theta) - \log g_{\eta}(\theta)$, where the candidate is given by
    \begin{align}
        &g_{\eta}(\theta) = \exp\rb{-D_A(\theta\|\eta)} = \exp\big( \nabla A(\eta)^\intercal(\theta -\eta) + A(\eta) - A(\theta) \big) \\
        \implies\;&\!\!- \loss(\theta) - \log g_{\eta}(\theta) = - \loss(\theta) - \nabla A(\eta)^\intercal(\theta -\eta) - A(\eta) + A(\theta)
    \end{align}
    Danskin's theorem gives us $\nabla \overline{\mathrm{CBO}}(\eta) = - \nabla^2 A(\eta) \rb{\overline{\theta} - \eta}$, where $\overline{\theta} = \argsup_{\theta\in\Theta} - \loss(\theta) - \log g_{\eta}(\theta)$. The 2\textsuperscript{nd}-order Taylor approximation of the objective around $\eta$ is given by
    \begin{align}
        - \loss(\theta) - \log g_{\eta}(\theta) \approx - \loss(\eta) - \rb{\theta-\eta}^\intercal \nabla \loss(\eta) - \frac{1}{2} \rb{\theta-\eta}^{\intercal} \rb{\nabla^2 \loss(\eta) - \nabla^2 A(\eta)} \rb{\theta-\eta}
    \end{align}
    Under this approximation, assuming $\nabla^2\loss(\eta) - \nabla^2 A(\eta) \succ 0$ (so that $\overline{\mathrm{CBO}}(\eta)$ is finite),
    \begin{align}
        \overline{\theta} \approx \eta - \rb{\nabla^2 \loss\rb{\eta} - \nabla^2 A\rb{\eta}}^{-1} \nabla \loss\rb{\eta}
    \end{align}
    Minimizing $\overline{\mathrm{CBO}}(\eta)$ using natural gradients yields a Newton-like update rule:
    \begin{subequations}
    \begin{align}
        \eta_{t+1} &= \eta_t - \rho_t \rb{\nabla^2 A\rb{\eta_t}}^{-1} \nabla \overline{\mathrm{CBO}}(\eta_t) \\
        &\approx \eta_t - \rho_t \rb{\nabla^2 \loss\rb{\eta_t} - \nabla^2 A\rb{\eta_t}}^{-1} \nabla \loss\rb{\eta_t}
    \end{align}
    \end{subequations}
    where $\rho_t>0$ is the step size.
\end{proof}

\section{Discussion on Max-Relative Entropy}

\subsection{Relationship with Generalized VI}
\label{app:GVI}

In classical VI, the optimization is typically framed through the decomposition of the ELBO, which involves the reverse KL divergence, $\mathrm{KL}\!\left(q\Vert q^{\star}_{\textnormal{add}}\right)$. This yields a lower bound on the model evidence and has well-known properties such as the mode-seeking effect. Yet, in the probabilistic literature, many alternative divergences and bound constructions have been explored, leading to a variety of upper and lower bounds on the marginal likelihood. Notable examples include the $\chi$-divergence upper bounds \citep{dieng2017variational} and variational R\'{e}nyi bounds \citep{li2016renyi}. In each of these cases, one can define dual optimization problems -- minimisation to the left or to the right of the divergence -- corresponding to lower or upper bounds 
providing a sandwiching of the true model evidence.

Remarkably, in the possibilistic framework, an analogous structure arises naturally, but with the max-relative entropy replacing the KL divergence. The lower and upper consistency bounds (CBOs) correspond to two dual optimization perspectives: maximizing the lower CBO or minimizing the upper CBO, which yield, respectively, under- and over-estimation of the posterior possibility degrees. This is conceptually analogous to probabilistic VI, where reverse or forward KL leads to mode-seeking versus mass-covering behaviours. In fact, one can interpret the upper (lower) CBO as a limiting case of a Rényi- or $\chi^n$-type bound when the divergence order tends to infinity, producing an extreme \textit{mass-covering} (resp.\ \textit{mode-seeking}) effect.




\subsection{Intuition Through Visualization}
\label{app:intuition}

Recall the definition of max-relative entropy in \autoref{eq:max-rel-ent}: $D_{\textnormal{max}}(g \,\|\, f)\doteq \sup_{\theta \in \Theta} \log \frac{g(\theta)}{f(\theta)}$.
The supremum is effectively computed over the points where $g\geq f$, and $D_{\textnormal{max}}(g \,\|\, f) = 0$ if and only if $g$ lower bounds $f$. To illustrate, we present two sinusoidal possibility functions in \autoref{fig:active-region}, and visualize the \textit{active region} for computing $D_{\textnormal{max}}(g \,\|\, f)$, which is marked in darker shades.

Looking at \autoref{eq:min-divergence}, we note that maximizing $\underline{\mathrm{CBO}}\rb{g}$ amounts to minimizing $D_{\textnormal{max}}(g \,\|\, g^\thinstar_{\max})$. Therefore, the set of maximizers consists of possibility functions lower bounding the true posterior, $g^\thinstar_{\max}$. Similarly, minimizing $\overline{\mathrm{CBO}}\rb{g}$ amounts to minimizing $D_{\textnormal{max}}(g^\thinstar_{\max} \,\|\, g)$, which is achieved by possibility functions upper bounding the true posterior. This is visualized in \autoref{fig:bayeslr-opts} for Gaussian candidates; see next subsection for a description of the inference problem.

\subsection{Example: Bayesian Logistic Regression}
\label{app:ex-bayeslr}

We perform Bayesian logistic regression for a simple problem to visualize what the confidence bounds look like for different candidate distributions, and how the true posterior compares with the optimizers of upper/lower CBO.

\begin{figure}
\centering
    \subcaptionbox{\label{fig:active-region}}{\vstretch{1.225}{\includegraphics[width=0.48\linewidth]{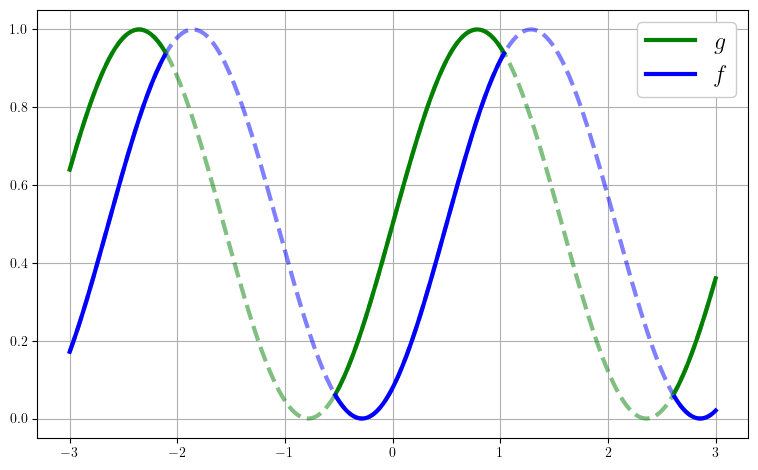}}}
    \subcaptionbox{\label{fig:bayeslr-ucbo}}{\includegraphics[width=0.48\linewidth]{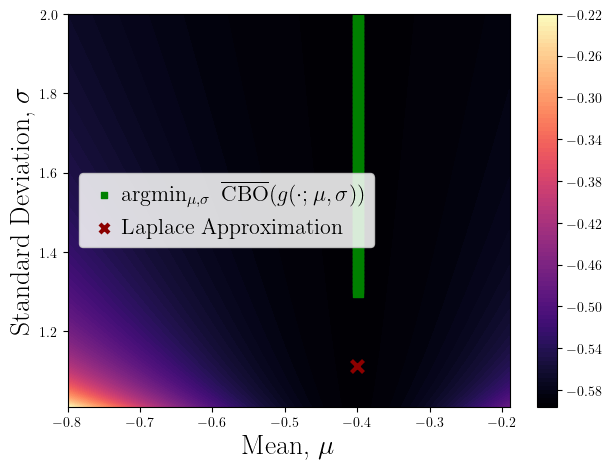}} \\ \vspace{2mm}
    \subcaptionbox{\label{fig:bayeslr-lcbo}}{\includegraphics[width=0.48\linewidth]{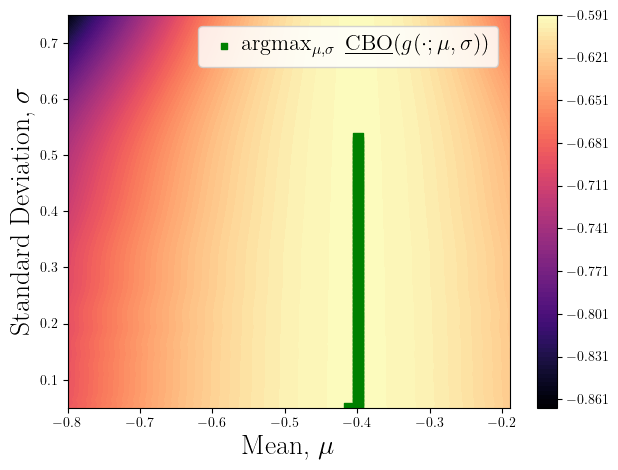}}
    \subcaptionbox{\label{fig:bayeslr-opts}}{\includegraphics[width=0.48\linewidth]{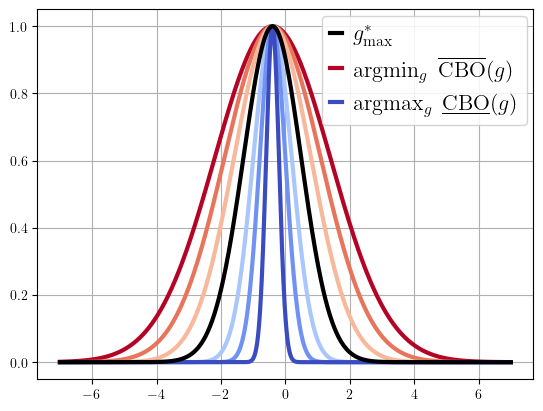}}
\caption{Building an intuition for Maxitive-VI. \textit{Top Left:} Solid lines depict active regions in the supremum computation of $D_{\textnormal{max}}(g \,\|\, f)$ for two sinusoidal possibility functions; dashed lines depict inactive regions. \textit{Top Right:} Minimizers of $\overline{\mathrm{CBO}}$ for the Bayesian LR problem described in \autoref{app:ex-bayeslr}, along with the Laplace approximation, showing consistency in the mean computation while allowing for varying precision levels. \textit{Bottom Left:} Maximizers of $\underline{\mathrm{CBO}}$. \textit{Bottom Right:} Numerically computed maxitive posterior, $g^\star_{\max}$, and the CBO optimizers, showing how they bound $g^\star_{\max}$ (from either side, depending on the formulation).}
\end{figure}

Consider an observable random Bernoulli variable, $\mathbf{y} \sim \mathrm{Bern}(\sigma(\bm{f}))$, where $\sigma$ is the sigmoid function, $\sigma(x) = (1+\exp(-x))^{-1}$. We assume for $\bm{f}$ to be a deterministic uncertain variable, and put a Gaussian possibilistic prior on it, i.e.\ $\bm{f}$ is described by $\overline{\mathrm{N}}(0,1)$. The maxitive posterior is then given by
\begin{align}
    g^\thinstar_{\max}(f) = \frac{p(y|f)\pi(f)}{\sup_{f'}p(y|f')\pi(f')} = \frac{\sigma(yf)\exp\left(-\frac{1}{2}f^2\right)}{\sup_{f'}\sigma(yf')\exp\left(-\frac{1}{2}f'^2\right)}
\end{align}

The denominator is not analytically tractable, hence, we perform VI with the family of Gaussian distributions, $\mathcal{G} = \{\overline{\mathrm{N}}(\mu,\sigma^2): \mu\in\mathbb{R}, \sigma\in\mathbb{R}^+\}$.

Say $y=-1$. In \autoref{fig:bayeslr-ucbo}, we show the minimizers of $\overline{\mathrm{CBO}}$, along with the solution computed using Laplace approximation, i.e.\ values of $\mu$ and $\sigma$ that match the first and second (possibilistic) moments of $g^\thinstar_{\max}$. We note that the minimizers match the mean of the Laplace approximation, while allowing arbitrarily large values for the precision of the approximate-posterior, each of which yields a model consistent with the data. Similarly, in \autoref{fig:bayeslr-lcbo}, we show the maximizers of $\underline{\mathrm{CBO}}$, which also match the mean, while allowing for the solution to be arbitrarily precise (lower $\sigma$). Finally, in \autoref{fig:bayeslr-opts}, we show the true maxitive posterior, computed numerically, as well as the optimizers computed using the VI formulation, showing how the minimizers of $\overline{\mathrm{CBO}}$ upper bound the true posterior, while the maximizers of $\underline{\mathrm{CBO}}$ lower bound it.

\section{Possibilistic Exponential Family}
\label{app:ExpFamInGeneral}

We start by clarifying some properties of possibilistic exponential families: 
\begin{enumerate}[label=\textbf{\;\;Property \arabic*:}, nosep,leftmargin=0pt, itemindent=20pt, align=left]
    \item There is no general correspondence between the components $T_+$ and $B_+$ of a probabilistic (canonical) exponential family and their analogue in possibility theory.
    \item Although possibility functions are not densities and, therefore, do not require the definition of a reference measure, the base measure of a possibilistic exponentially family need not be an indicator function for the support of the family.
    \item The log-partition function $A$ is simpler when compared to the expression of the log-partition $A_+$ for a probabilistic exponential family, that is
    \begin{align}
    A_+(\lambda) = \log \int \exp\big( \lambda^{\intercal} T_+(\theta) - B_+(\theta) \big) \mathrm{d} \theta,
    \end{align}
    since $\sup$ and $\log$ can be exchanged when defining $A$. 
\end{enumerate}

These properties are illustrated in the following examples. 

\begin{example}
Consider a Bernoulli-style possibility function: set $\Theta = \{0,1\}$ and define the possibility of $\bm{\theta} = 0$ as $\alpha_0$ and the possibility of $\bm{\theta} = 1$ as $\alpha_1$, with $\max\cb{\alpha_0,\alpha_1} = 1$ by construction. The corresponding possibility function is $f(\theta \given \alpha_0, \alpha_1)= \alpha_0^{1-\theta}\alpha_1^{\theta}$. This can be expressed in an exponential family form as
\begin{align}
g_{\lambda}(\theta) = \exp\big( \lambda^{\intercal}T(\theta) - A(\lambda)
\big),
\end{align}
with $T(\theta) = (1-\theta, \theta)$, $\lambda = (\lambda_1,\lambda_2) \in \Lambda = \mathbb{R}^2$ and
\begin{align}
    A(\lambda) = \sup_{\theta \in \{0,1\}} \lambda^{\intercal} T(\theta) = \max\{\lambda_1, \lambda_2\}.
\end{align}
These components differ significantly from those of a Bernoulli distribution. Although the parameters $\alpha_0$ and $\alpha_1$ are not directly related, the fact that their maximum is equal to $1$  us to express this exponential family with a single variable: let $\eta = \lambda_2 - \lambda_1$, then another possible exponential-family form for this Bernoulli-style possibility function is
\begin{align}
    g_{\eta}(\theta) = \exp(\eta\theta - A(\eta)),
\end{align}
with $A(\eta) = \max\{0, \eta\}$.
Despite its simplicity, the Bernoulli possibility function is useful to express, for instance, possibilities of detection in tracking problems, see e.g.\ \citet{tracking2020ristic}.
\end{example}

\begin{example}
Consider a Poisson-style possibility function: set $\Theta = \mathbb{N}_0$ and consider a parameter $\alpha \in \mathbb{R}^+$. The corresponding possibility function is
\begin{align}
f(\theta \given \alpha) = \frac{\alpha^{\theta - \lfloor\alpha\rfloor}\lfloor{\alpha}\rfloor!}{\theta!},
\end{align}
which can be expressed in an exponential family form as
\begin{align}
g_{\lambda}(\theta) = \exp\big( \lambda^{\intercal} \theta - A(\lambda) - B(\theta) \big).
\end{align}
with $\lambda = \log \alpha$, $B(\theta) = \log(\theta!)$, and $A(\lambda) = \lfloor e^{\lambda}\rfloor\lambda - \log\rb{\lfloor e^{\lambda}\rfloor!}$.
\end{example}

\begin{example}
Consider the univariate normal possibility function with known variance $\sigma^2$, $\overline{\mathrm{N}}(\theta;\mu,\sigma^2)$, for which we have $B(\theta) = \theta^2/(2\sigma^2)$, whereas the base measure of the corresponding univariate normal distribution is
\begin{align}
\frac{\theta^2}{2\sigma^2} + \frac{1}{2}\log (2\pi\sigma^2). 
\end{align}
\end{example}

\begin{example}
Consider the case where $\Theta = [0,1]$, $B(\theta) = 0$ and $T(\theta) = (T_1(\theta), T_2(\theta))$, with $T_1(\theta) = \theta$ and $T_2(\theta) = \bm{1}_0(\theta)$, i.e., the indicator of the point $0$. The corresponding possibilistic exponential family is of the form
\begin{align}
    g_{\lambda}(\theta) = \exp(\lambda_1 \theta + \lambda_2 \bm{1}_0(\theta) - \max\{0,\lambda_1,\lambda_2\}),
\end{align}
which takes the component $T_2$ into account. If we considered the analogous probabilistic exponential family, the obtained density would almost everywhere equal to
\begin{align}
q_{\lambda_1}(\theta) \propto \exp\bigg( \lambda_1 \theta - \log \int \exp(\lambda_1 \theta) \mathrm{d}\theta \bigg),
\end{align}
since the term $\bm{1}_0(\theta)$ only affects a single point (of measure zero). This highlights how minimal representations of exponential families can differ between the probabilistic and possibilistic case, due to the respective behaviours of integration and optimization.
\end{example}

\section{Algorithms}
\label{sec:algo}

In the main paper, we present the algorithm for \texttt{uCBOpt} (Alg.~\ref{alg:ucbopt}). Here, we present the algorithms for \texttt{uCBOpt-adapt} and \texttt{lCBOpt-adapt} in Algorithms~\ref{alg:ucbopt-adapt} and~\ref{alg:lcbopt-adapt}, respectively (main differences are highlighted in blue). Both methods track a data-plus-prior curvature estimate $\tilde{h}_t = h_t + \delta$, where $h_t$ is an exponential moving average of the squared stochastic gradients and $\delta$ is the weight-decay/prior curvature. The variable $c_t$ then forms a one-sided candidate curvature envelope of $\tilde{h}_t$: \texttt{uCBOpt-adapt} uses a growing running minimum, while \texttt{lCBOpt-adapt} uses a decayed running maximum.

In \texttt{uCBOpt-adapt}, $c_t$ tracks the data-plus-prior curvature through $c_t = \min\{\beta_3 c_{t-1}, \bar h_t\}$ with $\beta_3 > 1$. This behaves like a running minimum that can gradually increase over time when the observed curvature grows. The update divides by the residual curvature $\tilde{h}_t - \gamma c_t$, which can be interpreted as the part of the data-plus-prior sharpness that is not accounted for by the candidate curvature
$c_t$.

In \texttt{lCBOpt-adapt}, $c_t$ is instead updated as $c_t = \max\{\beta_3 c_{t-1}, \tilde{h}_t\}$ with $\beta_3 \in [0,1)$. This behaves like a running maximum that can gradually decay over time when the observed curvature decreases. The update divides by $\gamma c_t - \tilde{h}_t$, with $\gamma > 1$, which measures the gap between the scaled upper candidate curvature and the current data-plus-prior curvature estimate. Thus, the two
adaptive variants differ mainly in whether the curvature correction is constructed from a lower or upper envelope of $\tilde{h}_t$.

Below we provide some guidelines for choosing the hyperparameters in \texttt{uCBOpt}, \texttt{uCBOpt-adapt} and \texttt{lCBOpt-adapt}:

\paragraph{Learning rate schedule $\rho_t$.} We found that the learning-rate schedule used for SGD and AdamW is also a good starting point for the \texttt{CBOpt} family. In all experiments, we use a short linear warmup followed by cosine annealing to zero. For \texttt{uCBOpt}, learning rates comparable to SGD can work well on larger datasets, while smaller learning rates are generally safer for the adaptive variants \texttt{uCBOpt-adapt} and \texttt{lCBOpt-adapt}.

\paragraph{Weight decay $\delta$.} The weight decay can be chosen similarly to SGD or AdamW, since it plays the role of a prior curvature term in the data-plus-prior curvature estimate. Weight-decay values that work well for IVON also tend to work well for the \texttt{CBOpt} family.

\paragraph{Gradient momentum $\beta_1$.} We set $\beta_1=0.9$ throughout, following the standard momentum value used in SGD and Adam. We did not find it necessary to tune this parameter extensively.

\paragraph{Hessian momentum $\beta_2$.} The Hessian momentum should be set close to one so that the curvature estimate changes smoothly. We use $\beta_2=0.99999$ in all \texttt{CBOpt} experiments, following the setting commonly used in IVON. Smaller values may make the curvature estimate more responsive, but can also introduce instability, especially when combined with adaptive curvature tracking.

\paragraph{Curvature parameters $\vartheta,\gamma,\beta_3$.} For \texttt{uCBOpt}, the curvature parameter $\vartheta$ controls the fixed candidate curvature. A small value is usually preferable, often one to three orders of magnitude smaller than the weight decay. If $\vartheta$ is too small, the method behaves similarly to the base curvature update, whereas overly large values can excessively distort the effective preconditioner and degrade performance. For \texttt{uCBOpt-adapt} and \texttt{lCBOpt-adapt}, $\gamma$ controls the strength of the curvature correction. For the adaptive variants, $\gamma$ controls the strength of the curvature correction, while $\beta_3$ controls the evolution of the candidate curvature. In \texttt{uCBOpt-adapt}, $\beta_3>1$ controls the growth rate of the running-minimum candidate; in \texttt{lCBOpt-adapt}, $\beta_3<1$ controls the decay rate of the running-maximum candidate. We find that $\gamma$ values close to one work well, using $\gamma=0.9$ for \texttt{uCBOpt-adapt} and $\gamma=1.02$ for \texttt{lCBOpt-adapt}. We find $\beta_3=1.001$ for \texttt{uCBOpt-adapt} and $\beta_3=0.999$ for \texttt{lCBOpt-adapt} to be good default choices.

\paragraph{Hessian initialization $h_0$.} The Hessian initialization controls the initial scale of the curvature estimate. Smaller values allow more adaptive behaviour early in training, while larger values make the initial posterior more concentrated and can stabilize training. In our experiments, we find that $h_0 \in \{0.05, 0.1\}$ tends to work well for the \texttt{CBOpt} family.

\paragraph{Stability term $\epsilon$.} In our implementation of \texttt{uCBOpt-adapt} and \texttt{lCBOpt-adapt}, we add a small numerical stability constant $\epsilon = 10^{-8}$ to the denominator, following Adam, to prevent division by near-zero curvature estimates.

\paragraph{Batch size, training epochs.} Batch sizes and training budgets that work well for SGD and AdamW generally also work well for the \texttt{CBOpt} family. For the adaptive variants, since curvature estimates are accumulated during training, longer training can improve their stability, especially for larger models and datasets.

\begin{figure*}
\begin{minipage}{0.48\textwidth}
\hrule
\vspace{0.4em}
\captionsetup{type=algorithm}
\caption{\textbf{u}pper \textbf{CB}O Optimization with adaptive curvature (\texttt{uCBOpt-adapt})}
\label{alg:ucbopt-adapt}
\vspace{-0.2em}
\hrule
\vspace{0.4em}

\begin{algorithmic}[1]
    \STATE \textbf{Require:} Learning rates $\{\rho_t\}$, weight decay $\delta>0$,
    curvature parameters \textcolor{blue}{$\gamma\in[0,1)$ and $\beta_3>1$},
    momentum parameters $\beta_1,\beta_2\in[0,1)$, and Hessian init $h_0>0$.
    \STATE \textbf{Init:} $\eta \leftarrow$ (NN-weights), $m\leftarrow 0$,
    $h\leftarrow h_0$, \textcolor{blue}{$c\leftarrow +\infty$}.
    \FOR{$t = 1,2,\ldots$}
        \STATE $\hat{g} \leftarrow \widehat{\nabla}\ell(\eta)$
        \STATE $\hat{h} \leftarrow \hat{g}\odot\hat{g}$
        \STATE $m \leftarrow \beta_1 m + (1-\beta_1)\hat{g}$
        \STATE $h \leftarrow \beta_2 h + (1-\beta_2)\hat{h}$
        \STATE $\tilde{h} \leftarrow h+\delta$
        \STATE \textcolor{blue}{$c \leftarrow \min\{\beta_3 c,\, \tilde{h}\}$}
        \STATE $\bar{m} \leftarrow m/(1-\beta_1^t)$
        \STATE $\eta \leftarrow \eta - \rho_t(\bar{m}+\delta\eta)/\textcolor{blue}{(\tilde{h}-\gamma c)}$
    \ENDFOR
    \RETURN $\eta$
\end{algorithmic}

\vspace{0.4em}
\hrule
\end{minipage}
\hfill
\begin{minipage}{0.48\textwidth}
\hrule
\vspace{0.4em}
\captionsetup{type=algorithm}
\caption{\textbf{l}ower \textbf{CB}O Optimization with adaptive curvature (\texttt{lCBOpt-adapt})}
\label{alg:lcbopt-adapt}
\vspace{-0.2em}
\hrule
\vspace{0.4em}

\begin{algorithmic}[1]
    \STATE \textbf{Require:} Learning rates $\{\rho_t\}$, weight decay $\delta>0$,
    curvature parameters \textcolor{blue}{$\gamma>1$ and $\beta_3\in[0,1)$},
    momentum parameters $\beta_1,\beta_2\in[0,1)$, and Hessian init $h_0>0$.
    \STATE \textbf{Init:} $\eta \leftarrow$ (NN-weights), $m\leftarrow 0$,
    $h\leftarrow h_0$, \textcolor{blue}{$c\leftarrow h_0+\delta$}.
    \FOR{$t = 1,2,\ldots$}
        \STATE $\hat{g} \leftarrow \widehat{\nabla}\ell(\eta)$
        \STATE $\hat{h} \leftarrow \hat{g}\odot\hat{g}$
        \STATE $m \leftarrow \beta_1 m + (1-\beta_1)\hat{g}$
        \STATE $h \leftarrow \beta_2 h + (1-\beta_2)\hat{h}$
        \STATE $\tilde{h} \leftarrow h+\delta$
        \STATE \textcolor{blue}{$c \leftarrow \max\{\beta_3 c,\, \tilde{h}\}$}
        \STATE $\bar{m} \leftarrow m/(1-\beta_1^t)$
        \STATE $\eta \leftarrow \eta - \rho_t(\bar{m}+\delta\eta)/\textcolor{blue}{(\gamma c-\tilde{h})}$
    \ENDFOR
    \RETURN $\eta$
\end{algorithmic}

\vspace{0.4em}
\hrule
\end{minipage}
\end{figure*}

\section{Experimental Details}
\label{sec:exp_details}

\subsection{Computational Efficiency}
\label{subsec:comp_efficiency}

In \autoref{tab:runtime-memory}, we report the average runtime over the full training run: 100 epochs for Fashion-MNIST and 200 epochs for CIFAR-10 and CIFAR-100. We also report the peak allocated and reserved GPU memory. All experiments are conducted on a single NVIDIA H100 GPU, with three random seeds run in parallel. The \texttt{CBOpt} variants include \texttt{uCBOpt}, \texttt{uCBOpt-adapt}, and \texttt{lCBOpt-adapt}. The runtime and memory are similar to existing optimizers such as SGD and Adam, as \texttt{CBOpt} only requires maintaining per-parameter gradient and scalar curvature estimates, similar to the moment estimates used in Adam.

\begin{table*}[ht]
\centering
\caption{Runtime and memory usage for classification experiments.}
\label{tab:runtime-memory}
\setlength{\tabcolsep}{4pt}
\renewcommand{\arraystretch}{1.1}
\resizebox{0.85\linewidth}{!}{\begin{tabular}{llcccc}
\toprule
Dataset / Model & Metric & AdamW & SGD & IVON & \texttt{CBOpt} variants \\
\midrule

\multirow{3}{*}{\makecell[l]{Fashion-MNIST\\LeNet}}
& Runtime (seconds)          & 3.26 & 3.52 & 3.30 & 3.28 \\
& Memory allocated (GiB) & 0.07 & 0.07 & 0.07 & 0.07 \\
& Memory reserved (GiB)  & 0.09 & 0.09 & 0.09 & 0.09 \\
\midrule

\multirow{3}{*}{\makecell[l]{CIFAR-10\\ResNet-20}}
& Runtime (seconds)          & 13.7 & 13.7 & 13.8 & 13.7 \\
& Memory allocated (GiB) & 0.60 & 0.59 & 0.60 & 0.60 \\
& Memory reserved (GiB)  & 0.83 & 0.82 & 0.82 & 0.83 \\
\midrule

\multirow{3}{*}{\makecell[l]{CIFAR-100\\DenseNet-121}}
& Runtime (seconds)          & 84.5 & 83.2 & 83.5 & 86.9 \\
& Memory allocated (GiB) & 4.52 & 4.52 & 4.53 & 4.52 \\
& Memory reserved (GiB)  & 4.95 & 4.95 & 4.96 & 4.95 \\

\bottomrule
\end{tabular}}
\end{table*}

\subsection{Architectures and Datasets}
\label{subsec:arch_dataset}

\textbf{In-domain datasets.} For in-domain tasks, we consider three standard image classification benchmarks: \textbf{Fashion-MNIST}, consisting of grayscale images from 10 clothing categories; \textbf{CIFAR-10}, consisting of RGB natural images from 10 classes; and \textbf{CIFAR-100}, a more fine-grained RGB natural image dataset with 100 classes.

\textbf{Out-of-domain datasets.} For out-of-domain evaluation, we consider three datasets: \textbf{EMNIST}, consisting of grayscale images of handwritten English letters (26 classes); \textbf{SVHN}, consisting of RGB images of real-world house numbers cropped from Google Street View (10 classes); and \textbf{TinyImageNet}, consisting of RGB natural images from 200 object classes.

\textbf{Data preprocessing.}
All images were normalized channel-wise using standard normalization constants. During training, CIFAR-10 and CIFAR-100 images were augmented with random horizontal flips and random crops with padding, while Fashion-MNIST used no data augmentation.

\textbf{Architectures.}
We consider two image classification models: LeNet and ResNet-20. On Fashion-MNIST, we train a LeNet; on CIFAR-10 we train a ResNet-20; and on CIFAR-100 we train a DenseNet-121.
\begin{itemize}[leftmargin=*, topsep=-2pt, noitemsep]
\item \textbf{LeNet.} Consists of two convolutional layers with ReLU activations and max-pooling, followed by three fully connected layers. The model has 44,426 trainable parameters for Fashion-MNIST.
\item \textbf{ResNet-20.} Consists of an initial convolutional layer followed by three residual stages with channel widths 16, 32, and 64, using downsampling at the beginning of the latter two stages. Global average pooling is applied before the final linear classifier. We use Filter Response Normalization (FRN) in the convolutional blocks, which normalizes each filter response using its spatial second moment and applies learned affine and threshold parameters. The model has 274,042 trainable parameters for CIFAR-10.
\item \textbf{DenseNet-121.} Consists of an initial convolutional layer followed by four dense blocks with 6, 12, 24, and 16 bottleneck layers, respectively, using transition layers between blocks for downsampling and channel compression. Each bottleneck uses FRN-normalized $1 \times 1$ and $3 \times 3$ convolutions, and concatenates its output with previous feature maps to promote feature reuse. Global average pooling is applied before the final linear classifier. We use growth rate 12 and compression factor 0.5. The model has 1,050,928 trainable parameters for CIFAR-100.
\end{itemize}
For MC Dropout variants, we use the same backbone architectures but keep dropout active at inference time to obtain stochastic predictions. In LeNet, dropout is applied after the first two fully connected hidden layers. In ResNet-20, it is applied within the residual blocks after the FRN-normalized convolutional responses and residual addition. In DenseNet-121, dropout is applied inside the dense bottleneck layers after the FRN-normalized convolutional transformations and before concatenating the newly produced feature maps with the accumulated dense-block features.

\subsection{Hyperparameters}

Across all experiments, we use a 90/10 train-validation split. Each model is trained for a fixed number of epochs, and test performance is reported using the checkpoint with the lowest validation loss. We use a batch size of 128 for training and 256 for validation and testing. All methods use a linear warmup for the first 5 epochs, followed by cosine learning-rate annealing to zero. For the baselines, hyperparameters are tuned only on Fashion-MNIST, while those for the remaining datasets follow \citet{shen2024variational}. For the \texttt{CBOpt} family, we extensively tune the curvature parameters only on Fashion-MNIST. For CIFAR-10, we tune the learning rate and, for \texttt{uCBOpt}, the curvature parameter $\vartheta$. For \texttt{uCBOpt-adapt} and \texttt{lCBOpt-adapt}, we reuse the values of $\gamma$ and $\beta_3$ that perform well on Fashion-MNIST. For CIFAR-100, we use the same hyperparameter settings as CIFAR-10.

\textbf{Fashion-MNIST.} We train all models for 100 epochs. Hyperparameters are selected based on the lowest validation loss over 30 epochs. Details of the hyperparameter selection are provided below.

\begin{itemize}[leftmargin=*, noitemsep]
\item \textbf{AdamW.} We tune the learning rate $\rho$ over a grid from $10^{-3}$ to $10^{-2}$ and weight decay $\delta$ from $10^{-5}$ to $5\cdot10^{-3}$. We use $\beta_1=0.9$ and $\beta_2=0.999$. The chosen hyperparameters are $\rho=10^{-3}$, $\delta=10^{-2}$.
\item \textbf{SGD.} We tune the learning rate $\rho$ over a grid from $10^{-3}$ to $10^{-1}$ and weight decay $\delta$ from $10^{-5}$ to $5\cdot10^{-3}$. We use momentum $0.9$. The chosen hyperparameters are $\rho=5\cdot10^{-3}$ and $\delta=10^{-5}$.
\item \textbf{IVON.} We tune the learning rate $\rho$ over a grid from $10^{-3}$ to $2\cdot10^{-1}$, weight decay $\delta$ from $10^{-5}$ to $5\cdot10^{-3}$, and initial Hessian scale over $h_0 \in \{0.05, 0.1, 0.5\}$. We use $\beta_1=0.9$, $\beta_2=0.99999$, effective sample size $54000$ (equivalent to the number of training samples) and one MC sample during training. The chosen hyperparameters are $\rho=2\cdot10^{-1}$, $\delta=2\cdot10^{-3}$, and $h_0=0.5$. We also apply the learning-rate rescaling procedure from \citet{shen2024variational}.
\item \textbf{MC-Dropout.} We use the LeNet MC-Dropout variant with dropout probability $p=0.05$, keeping dropout active at test time. The model is trained with SGD using $\rho=5\cdot10^{-3}$ and $\delta=10^{-5}$.
\item \textbf{Laplace.} We first train a MAP LeNet model using SGD with momentum $0.9$, $\rho=5\cdot10^{-3}$, and $\delta=10^{-5}$. We then fit a post-hoc last-layer Laplace approximation with a Kronecker-factored Hessian structure. Predictions use the GLM predictive approximation with a probit link and 100 samples; the prior precision is initialized to $0.54$ (corresponding to the same prior setup as SGD) and optimized by marginal likelihood.
\item \textbf{SWAG.} We first perform standard SGD training with momentum $0.9$ using cosine annealing from an initial learning rate of $5\cdot10^{-3}$ to a final learning rate of $10^{-3}$ over the first 60 epochs. We then run 40 additional SWAG epochs with a constant learning rate of $5\cdot10^{-3}$, collecting SWAG statistics every epoch and maintaining a low-rank approximation of the posterior. We use weight decay $\delta=10^{-5}$.
\item \textbf{\texttt{uCBOpt}.} We tune the learning rate $\rho$ over a grid from $10^{-3}$ to $10^{-1}$, weight decay $\delta$ from $10^{-5}$ to $5\cdot10^{-3}$, initial Hessian scale over $h_0 \in \{0.05, 0.1, 0.5, 1.0\}$, and curvature parameter $\vartheta$ over a fine-grained grid from $0$ to $10^{-4}$. We use $\beta_1=0.9$ and $\beta_2=0.99999$. The chosen hyperparemeters are $\rho=10^{-2}$, $\delta=2\cdot10^{-3}$, $h_0=0.05$, and $\vartheta=8 \cdot10^{-6}$.
\item \textbf{\texttt{uCBOpt-adapt}.} We tune the learning rate $\rho$ over a grid from $10^{-3}$ to $10^{-1}$, weight decay $\delta$ from $10^{-5}$ to $5\cdot10^{-3}$, initial Hessian scale over $h_0 \in \{0.05, 0.1, 0.2\}$, curvature scaling parameter $\gamma$ from $0.0$ to $0.95$, and curvature tracking parameter $\beta_3$ over $\{1.00001, 1.0001, 1.001, 1.01\}$. We use $\beta_1=0.9$ and $\beta_2=0.99999$. The chosen hyperparameters are $\rho=10^{-2}$, $\delta=2\cdot 10^{-3}$, $h_0=0.05$, $\gamma=0.9$, and $\beta_3=1.001$.
\item \textbf{\texttt{lCBOpt-adapt}.} We tune the learning rate $\rho$ over a grid from $10^{-3}$ to $10^{-1}$, weight decay $\delta$ from $10^{-5}$ to $5\cdot10^{-3}$, initial Hessian scale over $h_0 \in \{0.05, 0.1, 0.2\}$, curvature scaling parameter $\gamma$ from $1.0001$ to $2.0$, and curvature tracking parameter $\beta_3$ over $\{0.9, 0.99, 0.999, 0.9999, 0.99999\}$. We use $\beta_1=0.9$ and $\beta_2=0.99999$. The chosen hyperparameters are $\rho=2\cdot 10^{-3}$, $\delta=2\cdot 10^{-3}$, $h_0=0.1$, $\gamma=1.02$, and $\beta_3=0.999$.

\end{itemize}

\textbf{CIFAR-10 \& CIFAR-100.} We train all models for 200 epochs. Details of the hyperparameters are provided below.
\begin{itemize}[leftmargin=*, noitemsep]
\item \textbf{AdamW.} We use $\rho=2\cdot10^{-3}$ and $\delta=2\cdot10^{-4}$, with $\beta_1=0.9$ and $\beta_2=0.999$.
\item \textbf{SGD.} We use $\rho=10^{-1}$, momentum $0.9$, and $\delta=2\cdot10^{-4}$.
\item \textbf{IVON.} We use $\rho=0.2$, $\delta=2\cdot10^{-4}$, $h_0=0.5$, $\beta_1=0.9$, $\beta_2=0.99999$, effective sample size $45000$ (equivalent to the number of training samples) and one MC sample during training. We also apply the learning-rate rescaling procedure from \citet{shen2024variational}.
\item \textbf{MC-Dropout.} We use the ResNet-20 (for CIFAR-10) and DenseNet-121 (for CIFAR-100) MC-Dropout variant with dropout probability $p=0.05$, keeping dropout active at test time. The model is trained with SGD using momentum $0.9$, $\rho=10^{-1}$, and $\delta=2\times10^{-4}$.
\item \textbf{Laplace.} We first train MAP ResNet-20 (for CIFAR-10) and DenseNet-121 (for CIFAR-100) models using SGD with momentum $0.9$, $\rho=10^{-1}$, and weight decay $\delta=2\cdot10^{-4}$. We then fit a post-hoc last-layer Laplace approximation with a Kronecker-factored Hessian structure. Predictions use the GLM predictive approximation with a probit link and 100 samples; the prior precision is initialized to $9$ (corresponding to the same prior setup as SGD) and optimized by marginal likelihood.
\item \textbf{SWAG.} We first perform standard SGD training with momentum $0.9$ using cosine annealing from an initial learning rate of $5\cdot10^{-2}$ to a final learning rate of $10^{-2}$ over the first 160 epochs. We then run 40 additional SWAG epochs with a constant learning rate of $5\cdot10^{-3}$, collecting SWAG statistics every epoch and maintaining a low-rank approximation of the posterior. We use weight decay $\delta=2\cdot10^{-4}$.
\item \textbf{\texttt{uCBOpt}.} We use $\rho=10^{-1}$, $\delta=2\cdot10^{-4}$, $h_0=0.05$, $\vartheta=5 \cdot10^{-5}$, $\beta_1=0.9$, and $\beta_2=0.99999$.
\item \textbf{\texttt{uCBOpt-adapt}.} We use $\rho=5\cdot10^{-3}$, $\delta=2\cdot 10^{-4}$, $h_0=0.05$, $\gamma=0.9$, $\beta_1=0.9$, $\beta_2=0.99999$, and $\beta_3=1.001$.
\item \textbf{\texttt{lCBOpt-adapt}.} We use $\rho=5\cdot 10^{-3}$, $\delta=2\cdot 10^{-4}$, $h_0=0.1$, $\gamma=1.05$, $\beta_1=0.9$, $\beta_2=0.99999$, and $\beta_3=0.999$.

\end{itemize}

\subsection{Performance Metrics}
\label{sec:performance_metrics}

\begin{figure*}[t!]
\centering

\begin{subfigure}{0.48\linewidth}
    \centering
    \includegraphics[width=\linewidth]{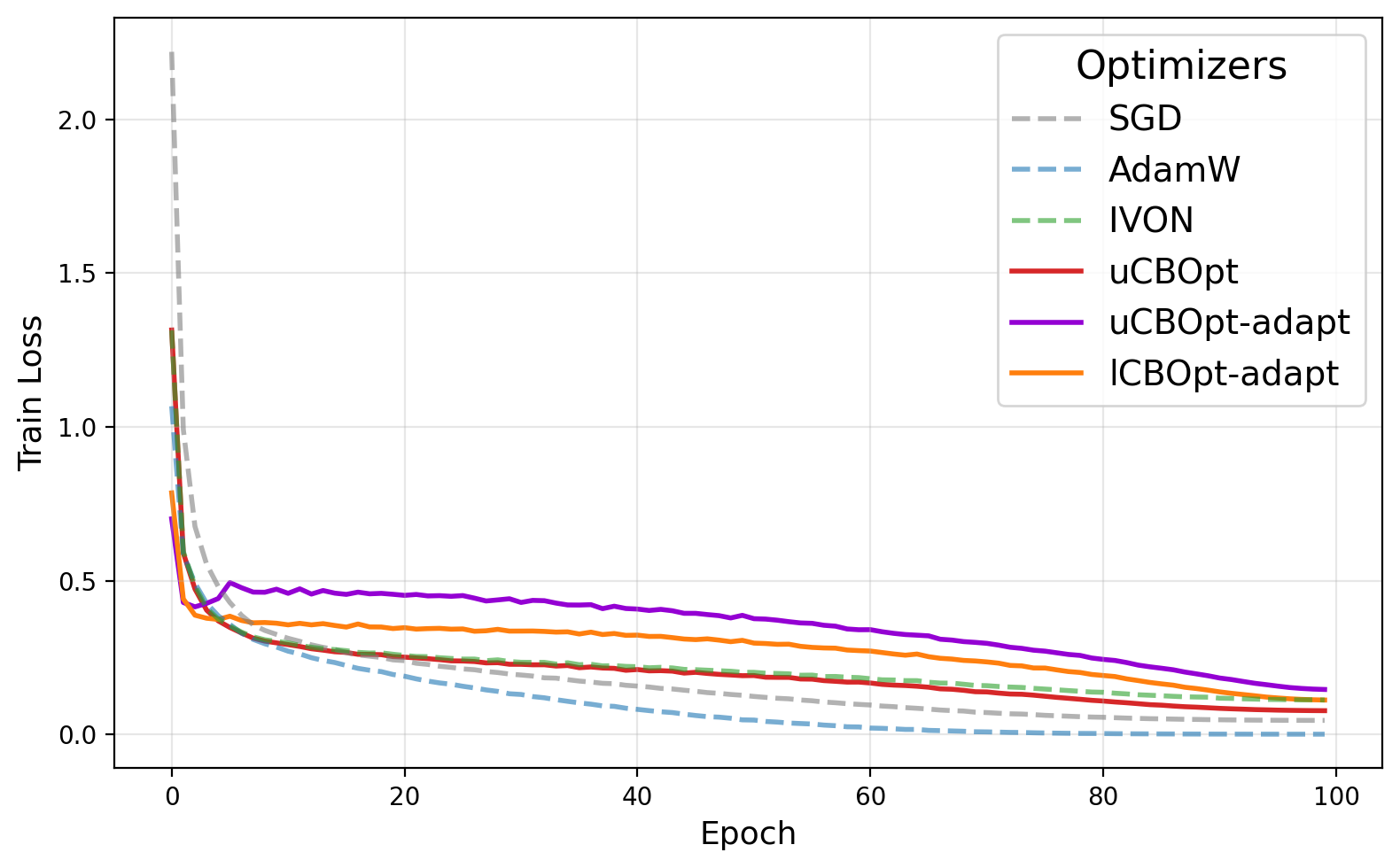}
    \caption{Training loss}
\end{subfigure}
\hfill
\begin{subfigure}{0.48\linewidth}
    \centering
    \includegraphics[width=\linewidth]{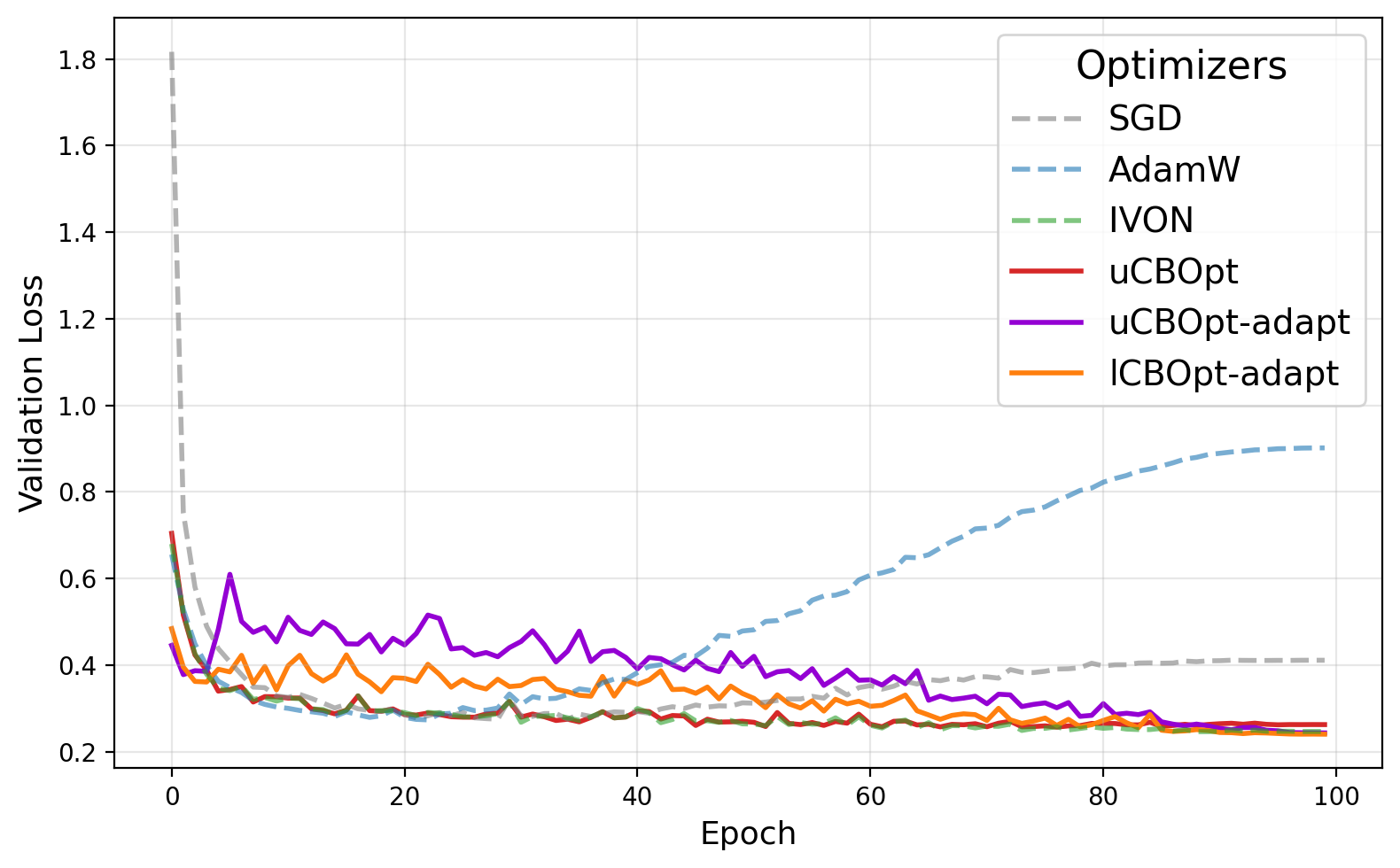}
    \caption{Validation loss}
\end{subfigure}

\vspace{0.6em}

\begin{subfigure}{0.48\linewidth}
    \centering
    \includegraphics[width=\linewidth]{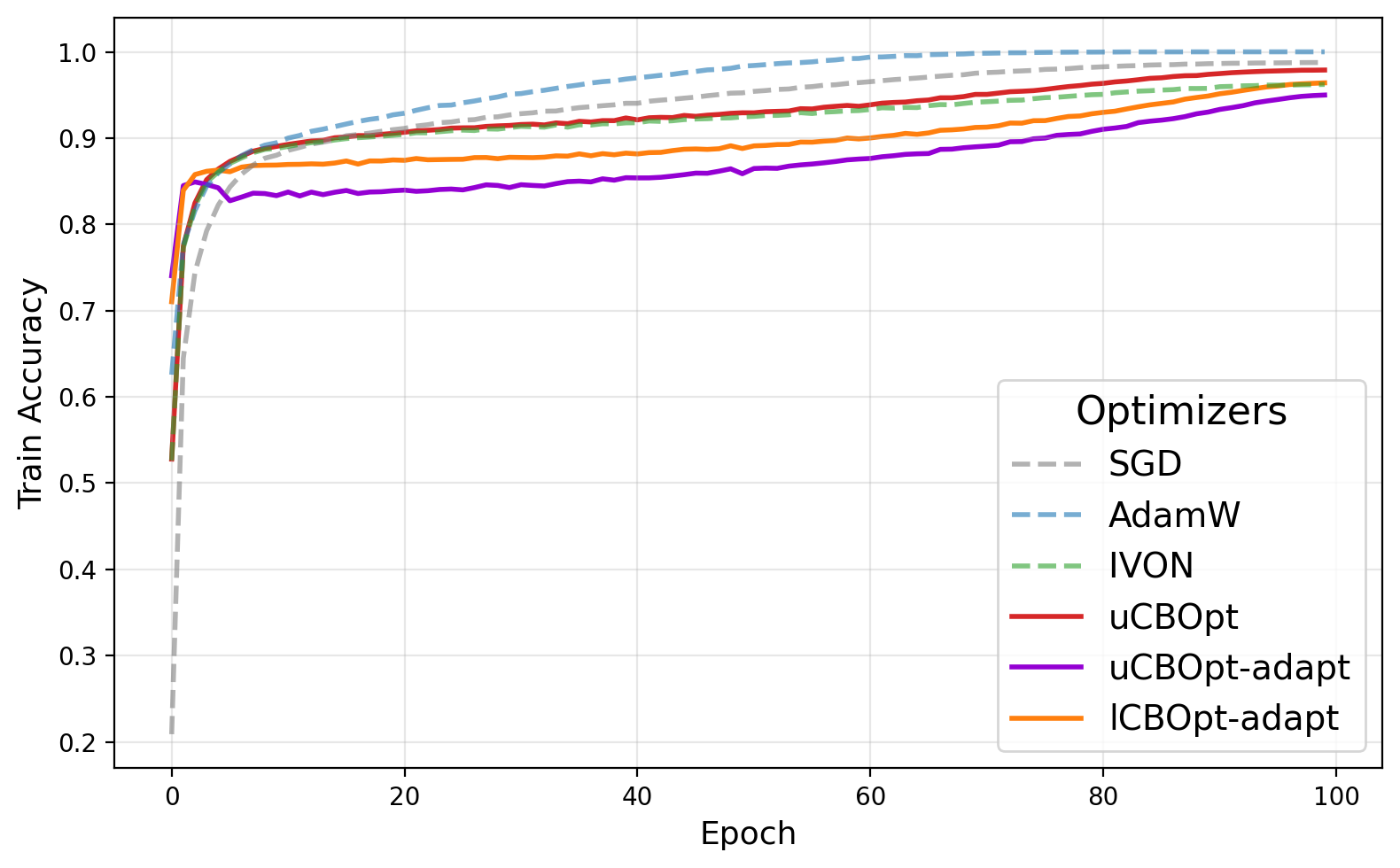}
    \caption{Training accuracy}
\end{subfigure}
\hfill
\begin{subfigure}{0.48\linewidth}
    \centering
    \includegraphics[width=\linewidth]{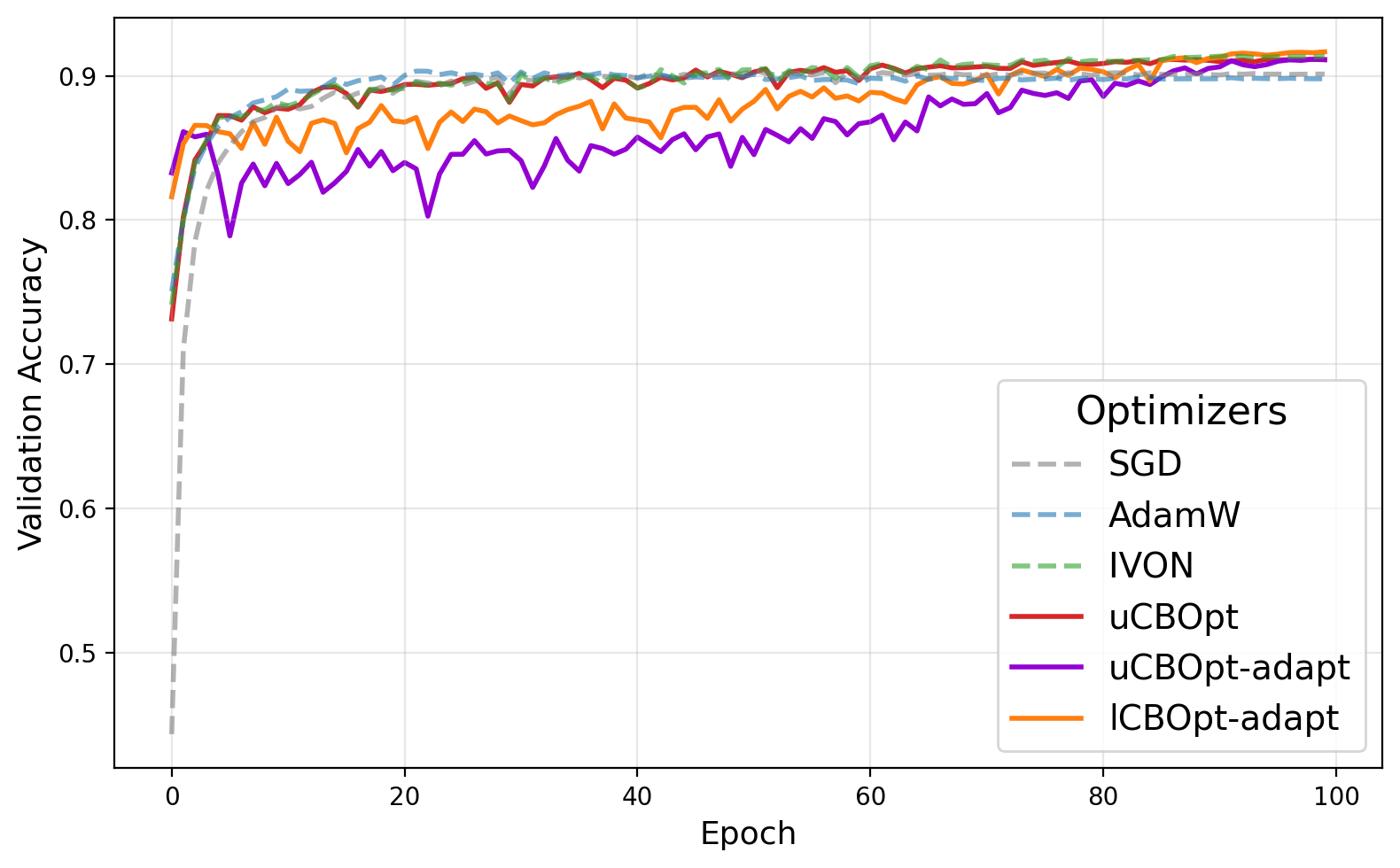}
    \caption{Validation accuracy}
\end{subfigure}

\caption{Training and validation loss and accuracy curves for all methods on the Fashion-MNIST/LeNet experiment.}
\label{fig:fmnist-lenet-training-curves}
\end{figure*}

\textbf{In-domain tasks.} For the in-domain tasks, we report metrics that capture classification performance, predictive uncertainty, and calibration quality. Classification performance is measured using top-1 (Acc.) and top-5 accuracy (Top-5 Acc.). We also report negative log likelihood (NLL) and Brier score to evaluate the quality of probabilistic predictions. Calibration is assessed using expected calibration error (ECE), while AUROC is computed using predictive confidence to measure how well the model separates correct from incorrect predictions. Although top-5 accuracy is less selective for 10-class datasets, we include it for consistency across datasets.

\textbf{Out-of-domain tasks.} For the out-of-domain tasks, we report metrics that evaluate how well the model separates in-domain from out-of-domain examples using uncertainty scores. We use the false positive rate at $95\%$ true positive rate (FPR@95) and detection error to assess threshold-based OOD detection performance. We also report the area under the receiver operating characteristic curve (AUROC), which measures how well the uncertainty scores distinguish between in-domain and out-of-domain examples across all detection thresholds. Lastly, we report the area under the precision--recall curve with in-domain examples treated as positive (AUPR-In) and with out-of-domain examples treated as positive (AUPR-Out). These metrics measure how well the model identifies in-domain and out-of-domain examples, respectively, under precision--recall trade-offs.

\section{Full Experimental Results}
\label{sec:full_results}

The complete results corresponding to \autoref{tab:main-results} are reported in \autoref{tab:in-domain-best-val} for the in-domain tasks and in \autoref{tab:ood-best-val} for the out-of-domain tasks. For completeness, we also report the final-epoch model performance in \autoref{tab:in-domain-final} and \autoref{tab:ood-final}. We note that our final-epoch results differ from those reported by \cite{shen2024variational}, mainly due to two differences in the experimental setup: (1) we use a training batch size of 128, whereas they use 50; and (2) we train on 90\% of the training set and reserve the remaining 10\% for validation, whereas they train on the full training set.

Across the experiments, \texttt{CBOpt} is most beneficial for improving in-domain predictive performance while maintaining competitive uncertainty estimates. The adaptive variants are particularly effective on Fashion-MNIST, where \texttt{uCBOpt-adapt} and \texttt{lCBOpt-adapt} obtain the best or near-best accuracy, NLL, Brier score, and AUROC. On CIFAR-10, all \texttt{CBOpt} variants remain competitive with SGD, Laplace, and IVON, although they do not outperform stronger stochastic baselines such as SWAG and MC dropout. On CIFAR-100, the advantages become clearer: fixed-\(\vartheta\) \texttt{uCBOpt} and \texttt{lCBOpt-adapt} improve over the considered baselines on several in-domain metrics, suggesting that the proposed updates scale well to more challenging classification tasks.

For OOD detection, \texttt{uCBOpt} gives the most consistent behaviour across datasets. It is competitive on Fashion-MNIST/EMNIST and CIFAR-10/SVHN, and achieves strong results on CIFAR-100/TinyImageNet. The adaptive variants are useful in some settings, particularly \texttt{uCBOpt-adapt} on CIFAR-100/TinyImageNet, but they are not uniformly better for OOD detection. Overall, the adaptive variants appear most helpful for in-domain performance, while fixed-\(\vartheta\) \texttt{uCBOpt} provides the most stable OOD performance.

We also present the training and validation loss and accuracy curves for LeNet trained on Fashion-MNIST in \autoref{fig:fmnist-lenet-training-curves}. The results show that \texttt{uCBOpt} exhibits training dynamics similar to IVON, achieving stable optimization and strong validation performance throughout training. In contrast, \texttt{uCBOpt-adapt} and \texttt{lCBOpt-adapt} converge more slowly, with higher training and validation losses in the early and middle stages. However, both adaptive variants continue to improve steadily and eventually reach validation accuracies comparable to the strongest baselines. AdamW achieves the lowest training loss and highest training accuracy, but its validation loss increases in later epochs, indicating stronger overfitting. By comparison, the \texttt{CBOpt} variants maintain more stable validation behaviour, suggesting better generalization.

We present the calibration plots of the \texttt{uCBOpt} family of optimizers in \autoref{fig:reliability-diagrams-cbopt}. Overall, the methods exhibit reasonably good calibration across all datasets, with the empirical accuracies generally following the diagonal reference line. The calibration behaviour is particularly strong on Fashion-MNIST and CIFAR-100, where the confidence and accuracy curves align closely over most confidence bins.

\begin{figure*}[h!]
\centering
\setlength{\tabcolsep}{2pt}

\begin{tabular}{ccc}
\textbf{Fashion-MNIST / LeNet} &
\textbf{CIFAR-10 / ResNet-20} &
\textbf{CIFAR-100 / DenseNet-121} \\

\begin{subfigure}{0.32\linewidth}
    \includegraphics[width=\linewidth]{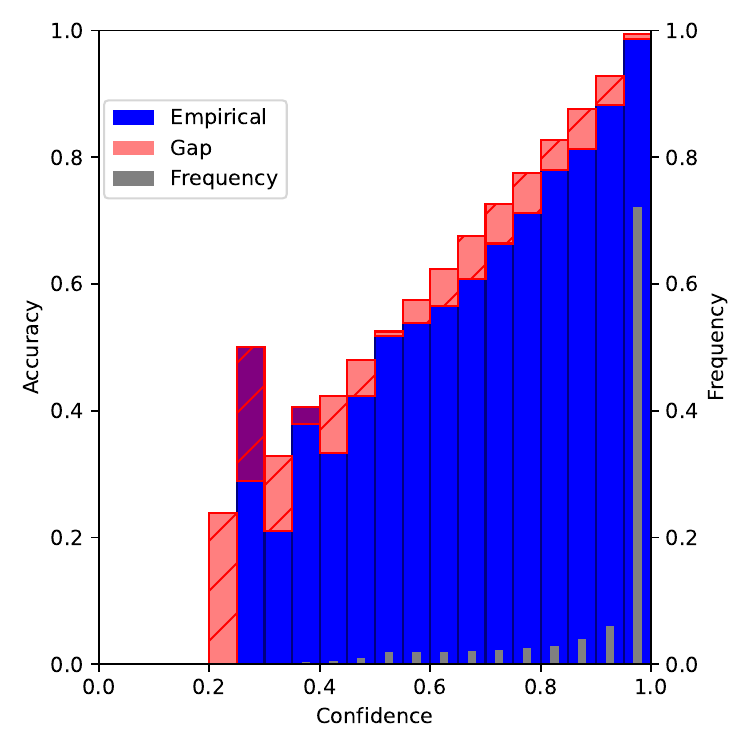}
    \caption{\texttt{uCBOpt} $(\vartheta=0)$}
\end{subfigure}
&
\begin{subfigure}{0.32\linewidth}
    \includegraphics[width=\linewidth]{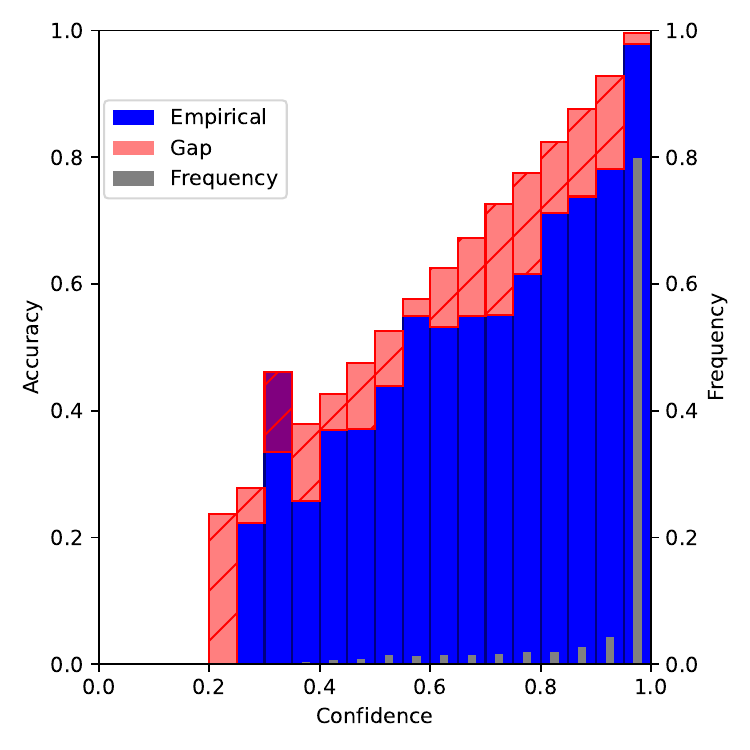}
    \caption{\texttt{uCBOpt} $(\vartheta=0)$}
\end{subfigure}
&
\begin{subfigure}{0.32\linewidth}
    \includegraphics[width=\linewidth]{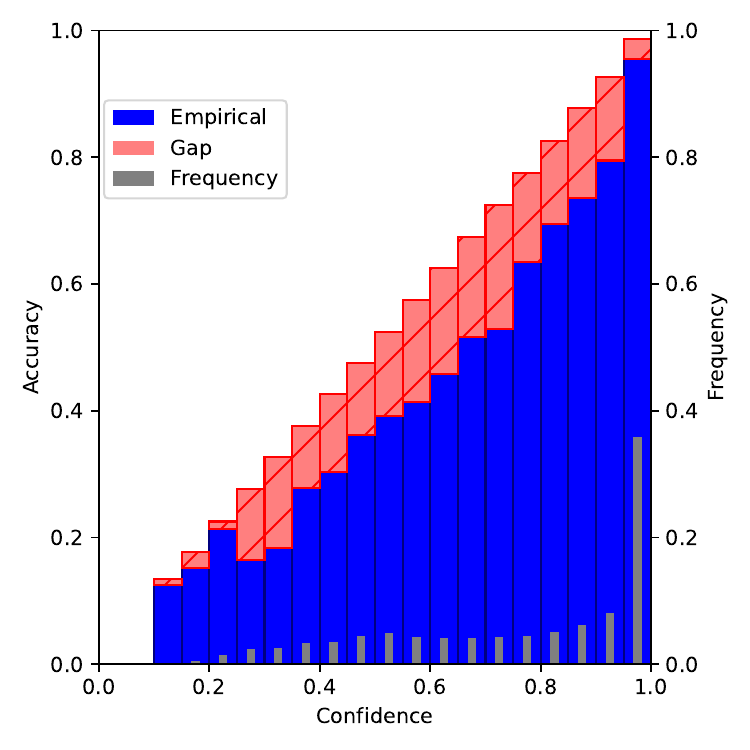}
    \caption{\texttt{uCBOpt} $(\vartheta=0)$}
\end{subfigure}
\\[-0.3em]

\begin{subfigure}{0.32\linewidth}
    \includegraphics[width=\linewidth]{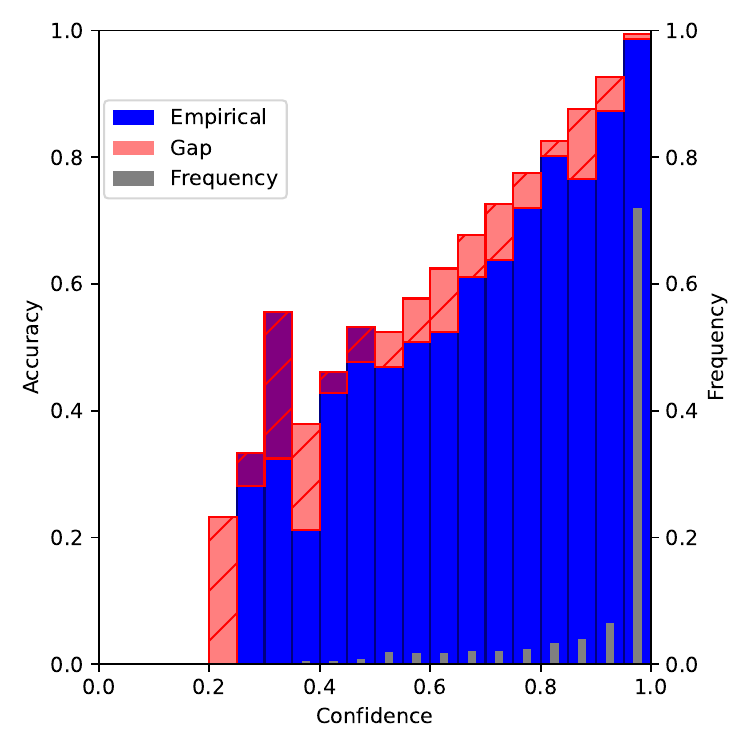}
    \caption{\texttt{uCBOpt} $(\vartheta=8\cdot10^{-6})$}
\end{subfigure}
&
\begin{subfigure}{0.32\linewidth}
    \includegraphics[width=\linewidth]{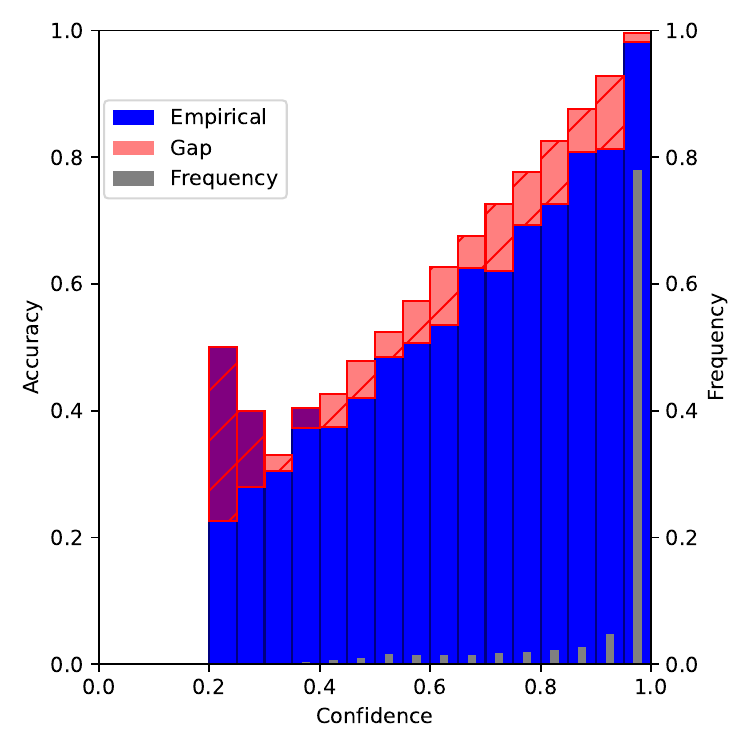}
    \caption{\texttt{uCBOpt} $(\vartheta=5\cdot10^{-5})$}
\end{subfigure}
&
\begin{subfigure}{0.32\linewidth}
    \includegraphics[width=\linewidth]{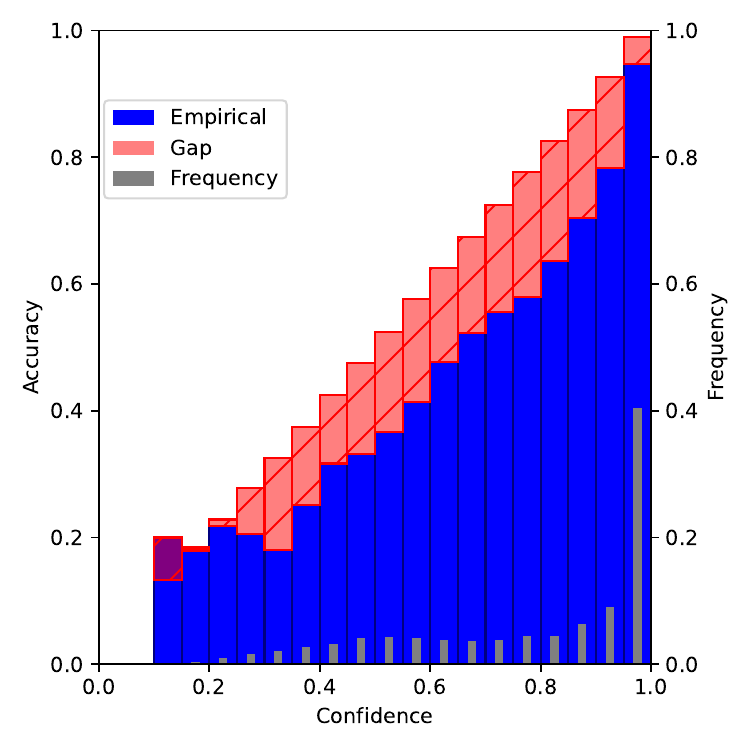}
    \caption{\texttt{uCBOpt} $(\vartheta=5\cdot10^{-5})$}
\end{subfigure}
\\[-0.3em]

\begin{subfigure}{0.32\linewidth}
    \includegraphics[width=\linewidth]{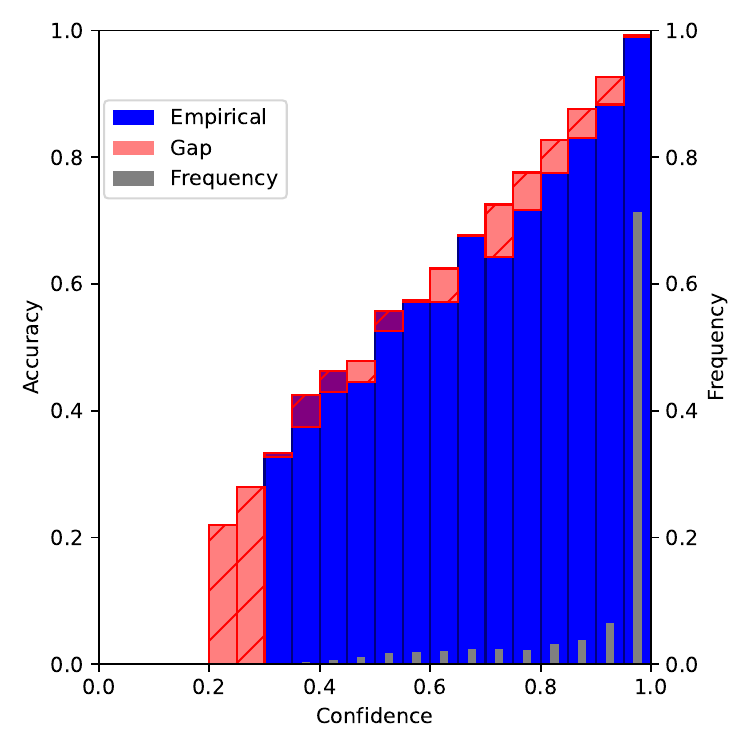}
    \caption{\texttt{uCBOpt-adapt}}
\end{subfigure}
&
\begin{subfigure}{0.32\linewidth}
    \includegraphics[width=\linewidth]{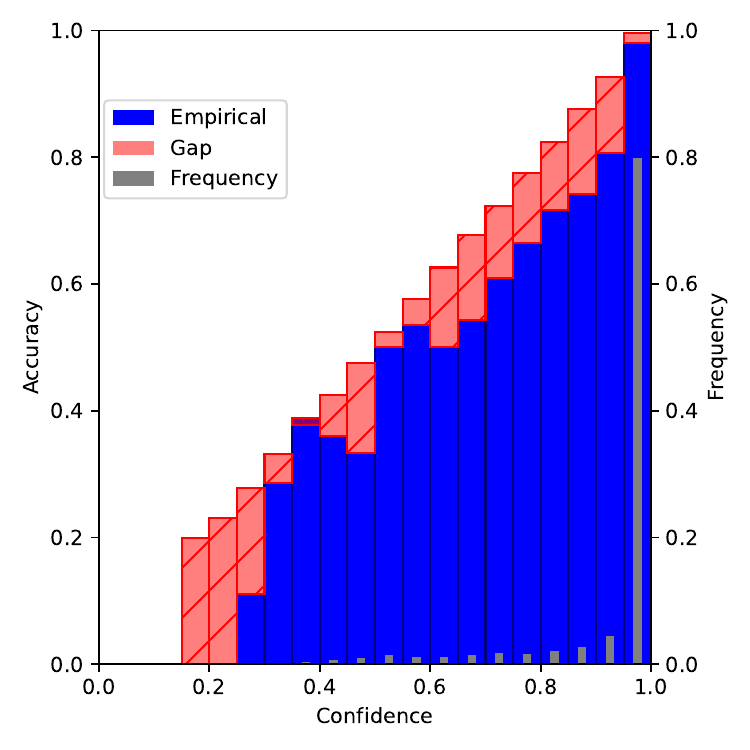}
    \caption{\texttt{uCBOpt-adapt}}
\end{subfigure}
&
\begin{subfigure}{0.32\linewidth}
    \includegraphics[width=\linewidth]{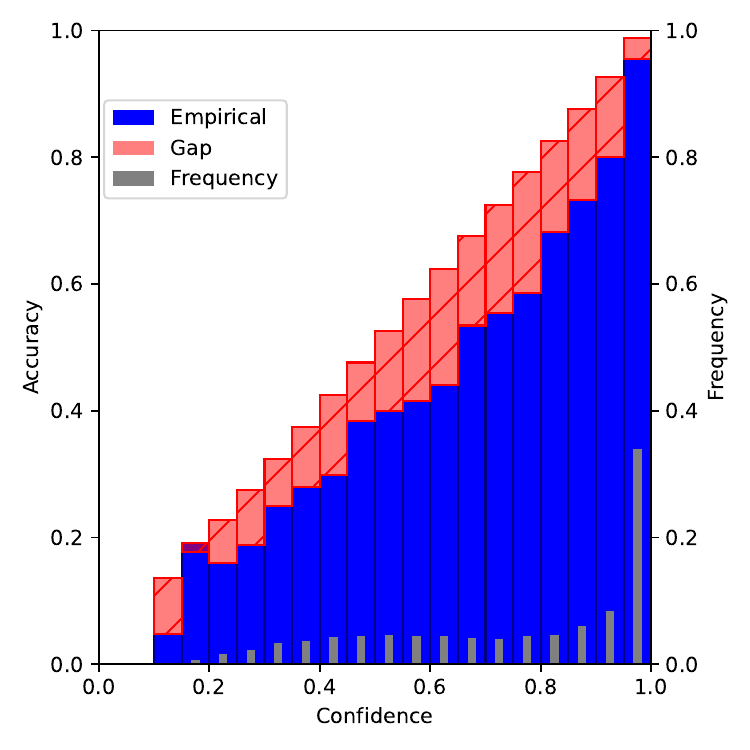}
    \caption{\texttt{uCBOpt-adapt}}
\end{subfigure}
\\[-0.3em]

\begin{subfigure}{0.32\linewidth}
    \includegraphics[width=\linewidth]{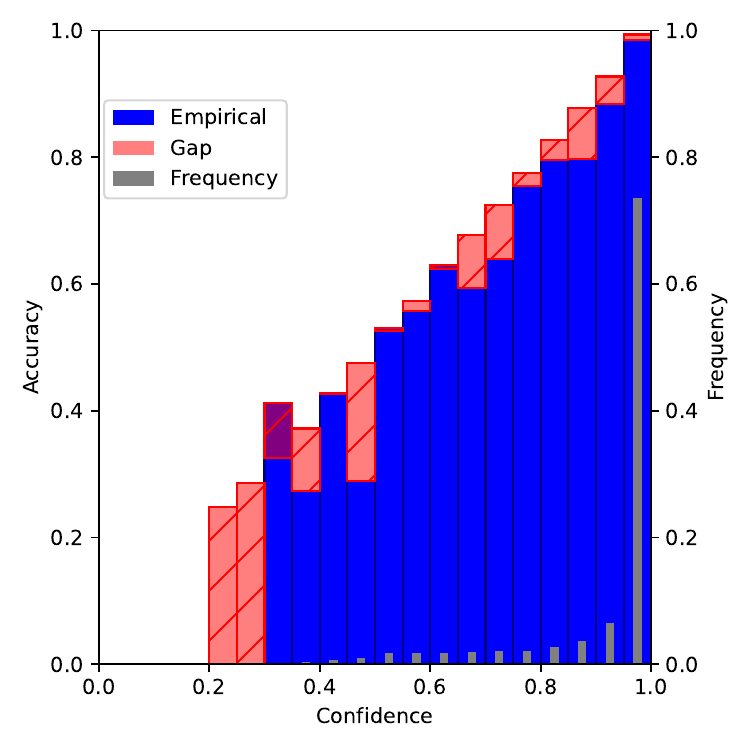}
    \caption{\texttt{lCBOpt-adapt}}
\end{subfigure}
&
\begin{subfigure}{0.32\linewidth}
    \includegraphics[width=\linewidth]{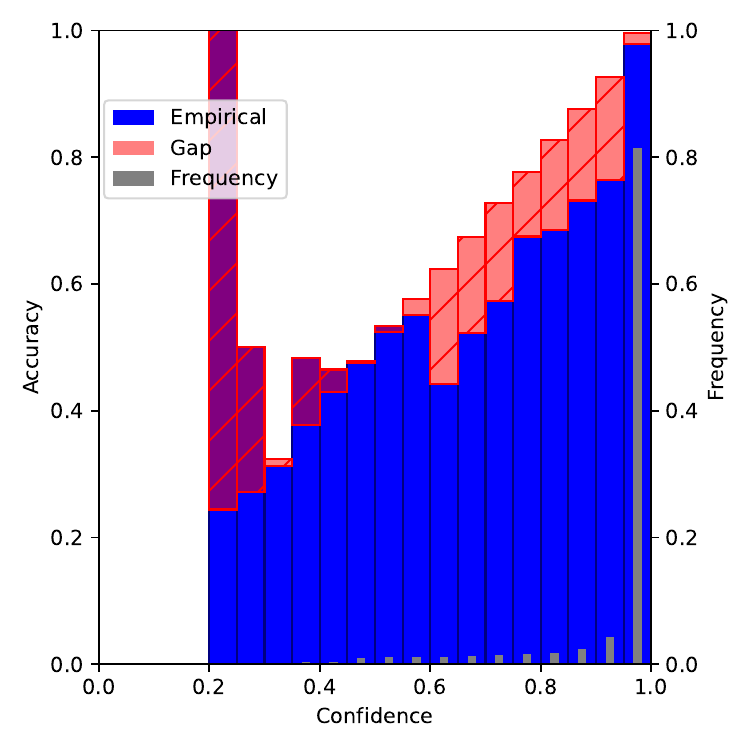}
    \caption{\texttt{lCBOpt-adapt}}
\end{subfigure}
&
\begin{subfigure}{0.32\linewidth}
    \includegraphics[width=\linewidth]{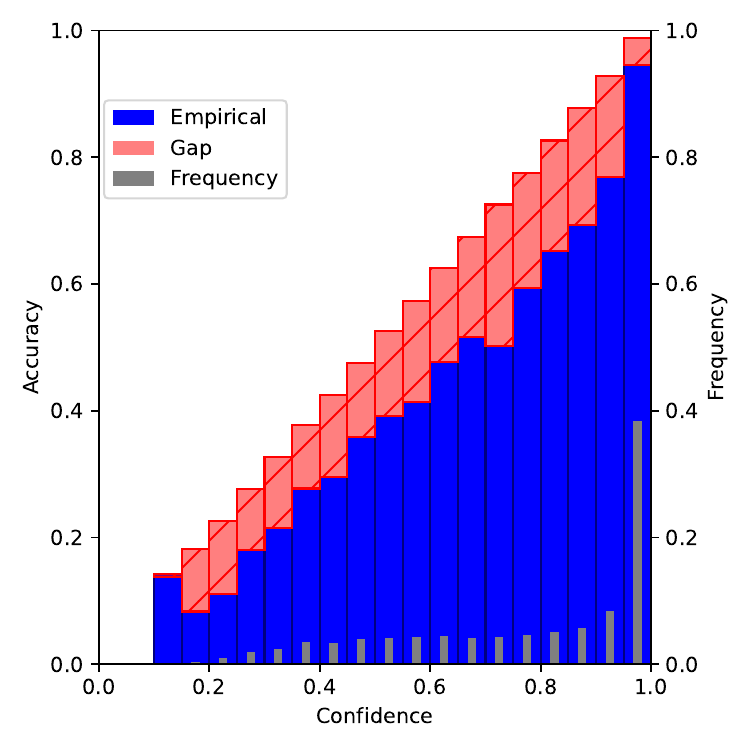}
    \caption{\texttt{lCBOpt-adapt}}
\end{subfigure}
\end{tabular}

\caption{Reliability diagrams for the \texttt{CBOpt} family.}
\label{fig:reliability-diagrams-cbopt}
\end{figure*}

\begin{table*}[h!]
\caption{Experimental results on in-domain tasks over three runs (best validation model).}
\label{tab:in-domain-best-val}
\centering
\setlength{\tabcolsep}{3pt}
        \resizebox{\linewidth}{!}{\begin{tabular}{llcccccc}
    \toprule
        Dataset & Optimizer & Acc. $\uparrow$ & Top-5 Acc. $\uparrow$ & NLL $\downarrow$ & ECE $\downarrow$ & Brier $\downarrow$ & AUROC $\uparrow$\\
        \midrule
        \multirow{11}{*}{Fashion-MNIST}
        & AdamW & $0.900_{\pm 0.005}$ & $0.998_{\pm 0.000}$ & $0.294_{\pm 0.007}$ & $0.026_{\pm 0.004}$ & $0.148_{\pm 0.004}$ & $0.901_{\pm 0.005}$ \\
        & SGD & $0.896_{\pm 0.005}$ & $0.999_{\pm 0.000}$ & $0.293_{\pm 0.005}$ & $0.022_{\pm 0.004}$ & $0.151_{\pm 0.004}$ & $0.904_{\pm 0.002}$ \\
        & IVON@mean & $0.907_{\pm 0.001}$ & $0.999_{\pm 0.000}$ & $0.267_{\pm 0.005}$ & $0.018_{\pm 0.008}$ & $0.136_{\pm 0.001}$ & $0.906_{\pm 0.002}$ \\
        & IVON & $0.911_{\pm 0.003}$ & $0.999_{\pm 0.000}$ & $0.258_{\pm 0.001}$ & $0.013_{\pm 0.004}$ & $0.132_{\pm 0.002}$ & $0.904_{\pm 0.003}$ \\
        & MC-D & $0.897_{\pm 0.003}$ & $0.999_{\pm 0.000}$ & $0.281_{\pm 0.007}$ & $0.013_{\pm 0.003}$ & $0.146_{\pm 0.004}$ & $0.906_{\pm 0.003}$ \\
        & Laplace & $0.895_{\pm 0.004}$ & $0.999_{\pm 0.001}$ & $0.295_{\pm 0.009}$ & $0.024_{\pm 0.003}$ & $0.152_{\pm 0.005}$ & $0.902_{\pm 0.003}$ \\
        & SWAG & $0.900_{\pm 0.003}$ & $0.999_{\pm 0.000}$ & $0.342_{\pm 0.009}$ & $0.043_{\pm 0.003}$ & $0.153_{\pm 0.005}$ & $0.899_{\pm 0.004}$ \\
        \rowcolor{gray!10} \cellcolor{white}& \texttt{uCBOpt} $(\vartheta\to0^{+})$ & $0.905_{\pm 0.002}$ & $0.998_{\pm 0.000}$ & $0.269_{\pm 0.004}$ & $0.020_{\pm 0.006}$ & $0.138_{\pm 0.003}$ & $0.908_{\pm 0.003}$ \\
        \rowcolor{gray!10} \cellcolor{white}& \texttt{uCBOpt} $(\vartheta=8 \cdot 10^{-6})$ & $0.907_{\pm 0.002}$ & $0.999_{\pm 0.000}$ & $0.270_{\pm 0.002}$ & $0.023_{\pm 0.000}$ & $0.137_{\pm 0.003}$ & $0.905_{\pm 0.003}$ \\
         \rowcolor{gray!10} \cellcolor{white}& \texttt{uCBOpt-adapt} & $0.910_{\pm 0.001}$ & $0.999_{\pm 0.000}$ & $0.256_{\pm 0.003}$ & $0.016_{\pm 0.001}$ & $0.132_{\pm 0.001}$ & $0.910_{\pm 0.002}$ \\
        \rowcolor{gray!10} \cellcolor{white}& \texttt{lCBOpt-adapt} & $0.913_{\pm 0.002}$ & $0.998_{\pm 0.000}$ & $0.252_{\pm 0.004}$ & $0.020_{\pm 0.000}$ & $0.129_{\pm 0.001}$ & $0.910_{\pm 0.003}$ \\
        \cmidrule{1-8}
        \multirow{11}{*}{CIFAR-10}
        & AdamW & $0.872_{\pm 0.001}$ & $0.995_{\pm 0.000}$ & $0.406_{\pm 0.002}$ & $0.042_{\pm 0.003}$ & $0.189_{\pm 0.002}$ & $0.893_{\pm 0.001}$ \\
        & SGD & $0.909_{\pm 0.003}$ & $0.997_{\pm 0.001}$ & $0.308_{\pm 0.011}$ & $0.039_{\pm 0.008}$ & $0.139_{\pm 0.002}$ & $0.914_{\pm 0.004}$ \\
        & IVON@mean & $0.892_{\pm 0.003}$ & $0.996_{\pm 0.001}$ & $0.358_{\pm 0.011}$ & $0.045_{\pm 0.008}$ & $0.162_{\pm 0.003}$ & $0.908_{\pm 0.002}$ \\
        & IVON & $0.899_{\pm 0.004}$ & $0.996_{\pm 0.001}$ & $0.317_{\pm 0.004}$ & $0.028_{\pm 0.005}$ & $0.149_{\pm 0.005}$ & $0.910_{\pm 0.006}$ \\
        & MC-D & $0.923_{\pm 0.002}$ & $0.998_{\pm 0.001}$ & $0.230_{\pm 0.006}$ & $0.010_{\pm 0.002}$ & $0.113_{\pm 0.002}$ & $0.923_{\pm 0.002}$ \\
        & Laplace & $0.909_{\pm 0.001}$ & $0.997_{\pm 0.000}$ & $0.303_{\pm 0.008}$ & $0.035_{\pm 0.006}$ & $0.139_{\pm 0.001}$ & $0.911_{\pm 0.002}$ \\
        & SWAG & $0.918_{\pm 0.001}$ & $0.997_{\pm 0.001}$ & $0.251_{\pm 0.004}$ & $0.009_{\pm 0.000}$ & $0.121_{\pm 0.002}$ & $0.920_{\pm 0.002}$ \\
        \rowcolor{gray!10} \cellcolor{white}& \texttt{uCBOpt} $(\vartheta\to0^{+})$ & $0.910_{\pm 0.004}$ & $0.996_{\pm 0.000}$ & $0.296_{\pm 0.009}$ & $0.038_{\pm 0.001}$ & $0.135_{\pm 0.005}$ & $0.918_{\pm 0.003}$ \\
        \rowcolor{gray!10} \cellcolor{white}& \texttt{uCBOpt} $(\vartheta=5\cdot10^{-5})$ & $0.907_{\pm 0.003}$ & $0.997_{\pm 0.000}$ & $0.302_{\pm 0.004}$ & $0.033_{\pm 0.004}$ & $0.140_{\pm 0.004}$ & $0.914_{\pm 0.000}$ \\
         \rowcolor{gray!10} \cellcolor{white} & \texttt{uCBOpt-adapt} & $0.909_{\pm 0.005}$ & $0.997_{\pm 0.001}$ & $0.300_{\pm 0.016}$ & $0.038_{\pm 0.004}$ & $0.138_{\pm 0.008}$ & $0.917_{\pm 0.001}$ \\
        \rowcolor{gray!10} \cellcolor{white}& \texttt{lCBOpt-adapt} & $0.909_{\pm 0.005}$ & $0.997_{\pm 0.000}$ & $0.299_{\pm 0.002}$ & $0.037_{\pm 0.002}$ & $0.138_{\pm 0.004}$ & $0.915_{\pm 0.003}$ \\
        \cmidrule{1-8}
        \multirow{8}{*}{CIFAR-100}
        & SGD & $0.649_{\pm 0.004}$ & $0.891_{\pm 0.005}$ & $1.334_{\pm 0.006}$ & $0.096_{\pm 0.006}$ & $0.483_{\pm 0.002}$ & $0.838_{\pm 0.002}$ \\
        & IVON@mean & $0.599_{\pm 0.001}$ & $0.864_{\pm 0.002}$ & $1.521_{\pm 0.008}$ & $0.092_{\pm 0.006}$ & $0.538_{\pm 0.002}$ & $0.828_{\pm 0.001}$ \\
        & IVON & $0.607_{\pm 0.002}$ & $0.870_{\pm 0.003}$ & $1.458_{\pm 0.012}$ & $0.071_{\pm 0.004}$ & $0.523_{\pm 0.002}$ & $0.830_{\pm 0.003}$ \\
        & MC-D & $0.649_{\pm 0.016}$ & $0.894_{\pm 0.003}$ & $1.298_{\pm 0.016}$ & $0.074_{\pm 0.020}$ & $0.476_{\pm 0.013}$ & $0.838_{\pm 0.006}$ \\
        & \graycell{}\texttt{uCBOpt} $(\vartheta\to0^{+})$ & \graycell{}$0.667_{\pm 0.007}$ & \graycell{}$0.905_{\pm 0.001}$ & \graycell{}$1.247_{\pm 0.001}$ & \graycell{}$0.094_{\pm 0.007}$ & \graycell{}$0.457_{\pm 0.005}$ & \graycell{}$0.845_{\pm 0.001}$\\
        \rowcolor{gray!10} \cellcolor{white}& \texttt{uCBOpt} $(\vartheta=5\cdot10^{-5})$ & $0.671_{\pm 0.014}$ & $0.906_{\pm 0.004}$ & $1.231_{\pm 0.005}$ & $0.095_{\pm 0.014}$ & $0.453_{\pm 0.009}$ & $0.847_{\pm 0.002}$ \\
         \rowcolor{gray!10} \cellcolor{white}& \texttt{uCBOpt-adapt} & $0.655_{\pm 0.011}$ & $0.897_{\pm 0.006}$ & $1.298_{\pm 0.011}$ & $0.096_{\pm 0.021}$ & $0.475_{\pm 0.008}$ & $0.841_{\pm 0.003}$ \\
        \rowcolor{gray!10} \cellcolor{white}& \texttt{lCBOpt-adapt} & $0.674_{\pm 0.007}$ & $0.907_{\pm 0.003}$ & $1.268_{\pm 0.018}$ & $0.110_{\pm 0.002}$ & $0.457_{\pm 0.008}$ & $0.845_{\pm 0.005}$ \\
    \bottomrule
    \end{tabular}}
\vskip -0.1in
\end{table*}

\begin{table*}[h!]
\caption{Experimental results on out-of-domain tasks over three runs (best validation model).}
\label{tab:ood-best-val}
\centering
\setlength{\tabcolsep}{3pt}
    \resizebox{0.92\linewidth}{!}{\begin{tabular}{llccccc}
    \toprule
        Dataset & Optimizer & FPR@95\% $\downarrow$ & Det. Err. $\downarrow$ & AUROC $\uparrow$ & AUPR-In $\uparrow$ & AUPR-Out $\uparrow$\\
        \midrule
        \multirow{11}{*}{\makecell[l]{\textbf{In:} Fashion-MNIST \\ \textbf{Out:} EMNIST}}
        & AdamW & $0.321_{\pm 0.008}$ & $0.312_{\pm 0.009}$ & $0.745_{\pm 0.009}$ & $0.627_{\pm 0.022}$ & $0.834_{\pm 0.003}$ \\
        & SGD & $0.297_{\pm 0.020}$ & $0.293_{\pm 0.018}$ & $0.767_{\pm 0.022}$ & $0.676_{\pm 0.043}$ & $0.845_{\pm 0.012}$ \\
        & IVON@mean & $0.266_{\pm 0.024}$ & $0.279_{\pm 0.026}$ & $0.787_{\pm 0.029}$ & $0.694_{\pm 0.047}$ & $0.861_{\pm 0.019}$ \\
        & IVON & $0.249_{\pm 0.024}$ & $0.265_{\pm 0.028}$ & $0.804_{\pm 0.030}$ & $0.719_{\pm 0.048}$ & $0.873_{\pm 0.020}$ \\
        & MC-D & $0.244_{\pm 0.011}$ & $0.250_{\pm 0.007}$ & $0.821_{\pm 0.008}$ & $0.752_{\pm 0.017}$ & $0.885_{\pm 0.007}$ \\
        & Laplace & $0.296_{\pm 0.047}$ & $0.290_{\pm 0.030}$ & $0.770_{\pm 0.039}$ & $0.675_{\pm 0.046}$ & $0.848_{\pm 0.031}$ \\
        & SWAG & $0.378_{\pm 0.005}$ & $0.324_{\pm 0.001}$ & $0.731_{\pm 0.005}$ & $0.601_{\pm 0.027}$ & $0.820_{\pm 0.002}$ \\
        \rowcolor{gray!10} \cellcolor{white}& \texttt{uCBOpt} $(\vartheta\to0^{+})$ & $0.262_{\pm 0.012}$ & $0.270_{\pm 0.013}$ & $0.796_{\pm 0.014}$ & $0.708_{\pm 0.023}$ & $0.867_{\pm 0.010}$ \\
        \rowcolor{gray!10} \cellcolor{white}& \texttt{uCBOpt} $(\vartheta=8 \cdot 10^{-6})$ & $0.245_{\pm 0.010}$ & $0.259_{\pm 0.007}$ & $0.810_{\pm 0.012}$ & $0.720_{\pm 0.037}$ & $0.879_{\pm 0.010}$ \\
        \rowcolor{gray!10} \cellcolor{white}& \texttt{uCBOpt-adapt} & $0.272_{\pm 0.011}$ & $0.290_{\pm 0.015}$ & $0.773_{\pm 0.015}$ & $0.655_{\pm 0.016}$ & $0.857_{\pm 0.009}$ \\
        \rowcolor{gray!10} \cellcolor{white}& \texttt{lCBOpt-adapt} & $0.269_{\pm 0.015}$ & $0.278_{\pm 0.016}$ & $0.784_{\pm 0.019}$ & $0.674_{\pm 0.049}$ & $0.859_{\pm 0.011}$ \\
        \cmidrule{1-7}
        \multirow{11}{*}{\makecell[l]{\textbf{In:} CIFAR-10 \\ \textbf{Out:} SVHN}}
        & AdamW & $0.271_{\pm 0.033}$ & $0.227_{\pm 0.027}$ & $0.820_{\pm 0.031}$ & $0.769_{\pm 0.038}$ & $0.881_{\pm 0.021}$ \\
        & SGD & $0.207_{\pm 0.019}$ & $0.195_{\pm 0.012}$ & $0.864_{\pm 0.013}$ & $0.808_{\pm 0.016}$ & $0.918_{\pm 0.011}$ \\
        & IVON@mean & $0.255_{\pm 0.013}$ & $0.221_{\pm 0.010}$ & $0.830_{\pm 0.011}$ & $0.779_{\pm 0.013}$ & $0.890_{\pm 0.010}$ \\
        & IVON & $0.252_{\pm 0.020}$ & $0.214_{\pm 0.011}$ & $0.836_{\pm 0.014}$ & $0.791_{\pm 0.016}$ & $0.894_{\pm 0.012}$ \\
        & MC-D & $0.230_{\pm 0.030}$ & $0.206_{\pm 0.021}$ & $0.847_{\pm 0.024}$ & $0.796_{\pm 0.027}$ & $0.902_{\pm 0.017}$ \\
        & Laplace & $0.230_{\pm 0.021}$ & $0.204_{\pm 0.015}$ & $0.852_{\pm 0.019}$ & $0.797_{\pm 0.021}$ & $0.909_{\pm 0.013}$ \\
        & SWAG & $0.197_{\pm 0.012}$ & $0.175_{\pm 0.004}$ & $0.879_{\pm 0.007}$ & $0.841_{\pm 0.006}$ & $0.924_{\pm 0.007}$ \\
        \rowcolor{gray!10} \cellcolor{white}& \texttt{uCBOpt} $(\vartheta\to0^{+})$& $0.208_{\pm 0.016}$ & $0.186_{\pm 0.008}$ & $0.869_{\pm 0.012}$ & $0.824_{\pm 0.013}$ & $0.918_{\pm 0.010}$ \\
        \rowcolor{gray!10} \cellcolor{white}& \texttt{uCBOpt} $(\vartheta=5\cdot10^{-5})$ & $0.221_{\pm 0.009}$ & $0.200_{\pm 0.001}$ & $0.854_{\pm 0.004}$ & $0.804_{\pm 0.001}$ & $0.908_{\pm 0.006}$ \\
         \rowcolor{gray!10} \cellcolor{white} & \texttt{uCBOpt-adapt} & $0.210_{\pm 0.016}$ & $0.187_{\pm 0.012}$ & $0.866_{\pm 0.010}$ & $0.821_{\pm 0.020}$ & $0.915_{\pm 0.007}$ \\
        \rowcolor{gray!10} \cellcolor{white}& \texttt{lCBOpt-adapt} & $0.213_{\pm 0.012}$ & $0.195_{\pm 0.006}$ & $0.861_{\pm 0.006}$ & $0.809_{\pm 0.007}$ & $0.913_{\pm 0.003}$ \\
        \cmidrule{1-7}
        \multirow{8}{*}{\makecell[l]{\textbf{In:} CIFAR-100 \\ \textbf{Out:} TinyImageNet}}
        & SGD & $0.357_{\pm 0.005}$ & $0.337_{\pm 0.006}$ & $0.720_{\pm 0.008}$ & $0.744_{\pm 0.005}$ & $0.684_{\pm 0.006}$ \\
        & IVON@mean & $0.368_{\pm 0.006}$ & $0.354_{\pm 0.006}$ & $0.702_{\pm 0.008}$ & $0.726_{\pm 0.008}$ & $0.663_{\pm 0.007}$ \\
        & IVON & $0.368_{\pm 0.006}$ & $0.354_{\pm 0.006}$ & $0.702_{\pm 0.008}$ & $0.726_{\pm 0.008}$ & $0.663_{\pm 0.007}$ \\
        & MC-D & $0.353_{\pm 0.008}$ & $0.341_{\pm 0.005}$ & $0.718_{\pm 0.008}$ & $0.740_{\pm 0.007}$ & $0.681_{\pm 0.009}$ \\
        & \graycell{}\texttt{uCBOpt} $(\vartheta\to0^{+})$ & \graycell{}$0.350_{\pm 0.009}$ & \graycell{}$0.330_{\pm 0.006}$ & \graycell{}$0.727_{\pm 0.009}$ & \graycell{}$0.751_{\pm 0.007}$ & \graycell{}$0.686_{\pm 0.011}$ \\
        \rowcolor{gray!10} \cellcolor{white}& \texttt{uCBOpt} $(\vartheta=5\cdot10^{-5})$ & $0.344_{\pm 0.004}$ & $0.331_{\pm 0.004}$ & $0.727_{\pm 0.003}$ & $0.749_{\pm 0.003}$ & $0.693_{\pm 0.003}$ \\
         \rowcolor{gray!10} \cellcolor{white}& \texttt{uCBOpt-adapt} & $0.343_{\pm 0.004}$ & $0.332_{\pm 0.003}$ & $0.729_{\pm 0.002}$ & $0.749_{\pm 0.005}$ & $0.696_{\pm 0.001}$ \\
        \rowcolor{gray!10} \cellcolor{white}& \texttt{lCBOpt-adapt} & $0.346_{\pm 0.004}$ & $0.330_{\pm 0.004}$ & $0.729_{\pm 0.005}$ & $0.754_{\pm 0.005}$ & $0.692_{\pm 0.006}$ \\
    \bottomrule
    \end{tabular}}
\vskip -0.1in
\end{table*}

\begin{table*}[h!]
\caption{Experimental results on in-domain tasks over three runs (final epoch model).}
\label{tab:in-domain-final}
\centering
\setlength{\tabcolsep}{3pt}
    \resizebox{\linewidth}{!}{\begin{tabular}{llcccccc}
    \toprule
        Dataset & Optimizer & Acc. $\uparrow$ & Top-5 Acc. $\uparrow$ & NLL $\downarrow$ & ECE $\downarrow$ & Brier $\downarrow$ & AUROC $\uparrow$\\
        \midrule
        \multirow{11}{*}{Fashion-MNIST}
        & AdamW & $0.897_{\pm 0.000}$ & $0.997_{\pm 0.001}$ & $0.981_{\pm 0.091}$ & $0.086_{\pm 0.002}$ & $0.184_{\pm 0.001}$ & $0.866_{\pm 0.010}$ \\
        & SGD & $0.899_{\pm 0.003}$ & $0.998_{\pm 0.001}$ & $0.441_{\pm 0.025}$ & $0.061_{\pm 0.004}$ & $0.163_{\pm 0.005}$ & $0.893_{\pm 0.003}$ \\
        & IVON@mean & $0.908_{\pm 0.003}$ & $0.999_{\pm 0.000}$ & $0.271_{\pm 0.007}$ & $0.026_{\pm 0.003}$ & $0.137_{\pm 0.004}$ & $0.905_{\pm 0.002}$ \\
        & IVON & $0.913_{\pm 0.001}$ & $0.999_{\pm 0.000}$ & $0.255_{\pm 0.005}$ & $0.130_{\pm 0.002}$ & $0.018_{\pm 0.002}$ & $0.906_{\pm 0.001}$ \\
        & MC-D & $0.904_{\pm 0.003}$ & $0.999_{\pm 0.000}$ & $0.298_{\pm 0.007}$ & $0.034_{\pm 0.001}$ & $0.143_{\pm 0.003}$ & $0.903_{\pm 0.001}$ \\
        & Laplace & $0.897_{\pm 0.005}$ & $0.999_{\pm 0.000}$ & $0.447_{\pm 0.010}$ & $0.064_{\pm 0.003}$ & $0.165_{\pm 0.007}$ & $0.895_{\pm 0.003}$ \\
        & SWAG & $0.900_{\pm 0.003}$ & $0.999_{\pm 0.000}$ & $0.342_{\pm 0.009}$ & $0.043_{\pm 0.003}$ & $0.153_{\pm 0.005}$ & $0.899_{\pm 0.004}$ \\
        \rowcolor{gray!10} \cellcolor{white}& \texttt{uCBOpt} $(\vartheta\to0^{+})$ & $0.909_{\pm 0.003}$ & $0.998_{\pm 0.000}$ & $0.279_{\pm 0.004}$ & $0.031_{\pm 0.004}$ & $0.136_{\pm 0.003}$ & $0.905_{\pm 0.001}$ \\
        \rowcolor{gray!10} \cellcolor{white}& \texttt{uCBOpt} $(\vartheta=8\cdot 10^{-6})$ & $0.910_{\pm 0.002}$ & $0.998_{\pm 0.001}$ & $0.278_{\pm 0.007}$ & $0.032_{\pm 0.002}$ & $0.135_{\pm 0.002}$ & $0.905_{\pm 0.003}$ \\
         \rowcolor{gray!10} \cellcolor{white}& \texttt{uCBOpt-adapt} & $0.910_{\pm 0.001}$ & $0.999_{\pm 0.000}$ & $0.256_{\pm 0.003}$ & $0.016_{\pm 0.001}$ & $0.132_{\pm 0.002}$ & $0.910_{\pm 0.002}$ \\
        \rowcolor{gray!10} \cellcolor{white}& \texttt{lCBOpt-adapt} & $0.913_{\pm 0.002}$ & $0.998_{\pm 0.000}$ & $0.252_{\pm 0.003}$ & $0.021_{\pm 0.001}$ & $0.128_{\pm 0.001}$ & $0.911_{\pm 0.005}$ \\
        \cmidrule{1-8}
        \multirow{11}{*}{CIFAR-10}
        & AdamW & $0.888_{\pm 0.001}$ & $0.995_{\pm 0.000}$ & $0.638_{\pm 0.013}$ & $0.083_{\pm 0.002}$ & $0.190_{\pm 0.001}$ & $0.903_{\pm 0.003}$ \\
        & SGD & $0.917_{\pm 0.003}$ & $0.997_{\pm 0.000}$ & $0.311_{\pm 0.014}$ & $0.044_{\pm 0.003}$ & $0.130_{\pm 0.005}$ & $0.916_{\pm 0.002}$    \\
        & IVON@mean & $0.901_{\pm 0.002}$ & $0.995_{\pm 0.000}$ & $0.390_{\pm 0.003}$ & $0.159_{\pm 0.001}$ & $0.055_{\pm 0.002}$ & $0.905_{\pm 0.004}$ \\
        & IVON & $0.910_{\pm 0.001}$ & $0.997_{\pm 0.000}$ & $0.299_{\pm 0.004}$ & $0.029_{\pm 0.002}$ & $0.135_{\pm 0.002}$ & $0.915_{\pm 0.002}$ \\
        & MC-D & $0.924_{\pm 0.001}$ & $0.998_{\pm 0.001}$ & $0.229_{\pm 0.005}$ & $0.009_{\pm 0.005}$ & $0.112_{\pm 0.002}$ & $0.923_{\pm 0.002}$ \\
        & Laplace & $0.917_{\pm 0.002}$ & $0.997_{\pm 0.000}$ & $0.304_{\pm 0.009}$ & $0.043_{\pm 0.002}$ & $0.130_{\pm 0.004}$ & $0.918_{\pm 0.004}$ \\
        & SWAG & $0.918_{\pm 0.001}$ & $0.997_{\pm 0.001}$ & $0.251_{\pm 0.004}$ & $0.009_{\pm 0.000}$ & $0.121_{\pm 0.002}$ & $0.920_{\pm 0.002}$ \\
        \rowcolor{gray!10} \cellcolor{white}& \texttt{uCBOpt} $(\vartheta\to0^{+})$ & $0.915_{\pm 0.000}$ & $0.997_{\pm 0.001}$ & $0.294_{\pm 0.005}$ & $0.040_{\pm 0.002}$ & $0.131_{\pm 0.001}$ & $0.919_{\pm 0.001}$ \\
        \rowcolor{gray!10} \cellcolor{white}& \texttt{uCBOpt} $(\vartheta=5\cdot10^{-5})$ & $0.916_{\pm 0.003}$ & $0.997_{\pm 0.000}$ & $0.299_{\pm 0.015}$ & $0.040_{\pm 0.004}$ & $0.131_{\pm 0.005}$ & $0.919_{\pm 0.002}$ \\
         \rowcolor{gray!10} \cellcolor{white} & \texttt{uCBOpt-adapt} & $0.917_{\pm 0.002}$ & $0.997_{\pm 0.001}$ & $0.310_{\pm 0.023}$ & $0.045_{\pm 0.002}$ & $0.131_{\pm 0.004}$ & $0.920_{\pm 0.006}$ \\
        \rowcolor{gray!10} \cellcolor{white}& \texttt{lCBOpt-adapt} & $0.911_{\pm 0.002}$ & $0.997_{\pm 0.000}$ & $0.301_{\pm 0.003}$ & $0.039_{\pm 0.002}$ & $0.136_{\pm 0.002}$ & $0.916_{\pm 0.004}$ \\
        \cmidrule{1-8}
        \multirow{8}{*}{CIFAR-100}
        & SGD & $0.697_{\pm 0.001}$ & $0.907_{\pm 0.005}$ & $1.365_{\pm 0.021}$ & $0.150_{\pm 0.002}$ & $0.448_{\pm 0.000}$ & $0.856_{\pm 0.002}$ \\
        & IVON@mean & $0.647_{\pm 0.008}$ & $0.878_{\pm 0.004}$ & $1.684_{\pm 0.062}$ & $0.182_{\pm 0.007}$ & $0.527_{\pm 0.012}$ & $0.836_{\pm 0.001}$ \\
        & IVON & $0.674_{\pm 0.008}$ & $0.897_{\pm 0.007}$ & $1.307_{\pm 0.048}$ & $0.093_{\pm 0.004}$ & $0.451_{\pm 0.010}$ & $0.849_{\pm 0.003}$ \\
        & MC-D & $0.699_{\pm 0.006}$ & $0.909_{\pm 0.003}$ & $1.354_{\pm 0.016}$ & $0.149_{\pm 0.003}$ & $0.447_{\pm 0.007}$ & $0.854_{\pm 0.002}$ \\
        & \graycell{}\texttt{uCBOpt} $(\vartheta\to0^{+})$ & \graycell{}$0.701_{\pm 0.003}$ & $0.914_{\pm 0.001}$ & \graycell{}$1.278_{\pm 0.007}$ & \graycell{}$0.133_{\pm 0.002}$ & \graycell{}$0.438_{\pm 0.004}$ & \graycell{}$0.848_{\pm 0.001}$ \\
        \rowcolor{gray!10} \cellcolor{white}& \texttt{uCBOpt} $(\vartheta=5\cdot10^{-5})$ & $0.696_{\pm 0.010}$ & $0.912_{\pm 0.004}$ & $1.276_{\pm 0.032}$ & $0.136_{\pm 0.005}$ & $0.441_{\pm 0.011}$ & $0.854_{\pm 0.002}$ \\
         \rowcolor{gray!10} \cellcolor{white}& \texttt{uCBOpt-adapt} & $0.705_{\pm 0.002}$ & $0.913_{\pm 0.004}$ & $1.315_{\pm 0.023}$ & $0.145_{\pm 0.000}$ & $0.438_{\pm 0.001}$ & $0.854_{\pm 0.002}$ \\
        \rowcolor{gray!10} \cellcolor{white}& \texttt{lCBOpt-adapt} & $0.689_{\pm 0.002}$ & $0.912_{\pm 0.001}$ & $1.277_{\pm 0.022}$ & $0.127_{\pm 0.004}$ & $0.446_{\pm 0.004}$ & $0.850_{\pm 0.005}$ \\
    \bottomrule
    \end{tabular}}
\vskip -0.1in
\end{table*}

\begin{table*}[h!]
\caption{Experimental results on out-of-domain tasks over three runs (final epoch model).}
\label{tab:ood-final}
\centering
\setlength{\tabcolsep}{3pt}
    \resizebox{0.92\linewidth}{!}{\begin{tabular}{llccccc}
    \toprule
        Dataset & Optimizer & FPR@95\% $\downarrow$ & Det. Err. $\downarrow$ & AUROC $\uparrow$ & AUPR-In $\uparrow$ & AUPR-Out $\uparrow$\\
        \midrule
        \multirow{11}{*}{\makecell[l]{\textbf{In:} Fashion-MNIST \\ \textbf{Out:} EMNIST}}
        & AdamW & $0.686_{\pm 0.010}$ & $0.369_{\pm 0.008}$ & $0.647_{\pm 0.007}$ & $0.429_{\pm 0.008}$ & $0.761_{\pm 0.004}$ \\
        & SGD & $0.520_{\pm 0.016}$ & $0.368_{\pm 0.017}$ & $0.674_{\pm 0.022}$ & $0.508_{\pm 0.029}$ & $0.780_{\pm 0.014}$ \\
        & IVON@mean & $0.261_{\pm 0.021}$ & $0.268_{\pm 0.021}$ & $0.796_{\pm 0.024}$ & $0.708_{\pm 0.043}$ & $0.866_{\pm 0.015}$ \\
        & IVON & $0.236_{\pm 0.018}$ & $0.249_{\pm 0.022}$ & $0.820_{\pm 0.023}$ & $0.743_{\pm 0.039}$ & $0.883_{\pm 0.014}$ \\
        & MC-D & $0.268_{\pm 0.003}$ & $0.246_{\pm 0.003}$ & $0.813_{\pm 0.002}$ & $0.734_{\pm 0.013}$ & $0.875_{\pm 0.002}$ \\
        & Laplace & $0.519_{\pm 0.006}$ & $0.363_{\pm 0.012}$ & $0.680_{\pm 0.015}$ & $0.517_{\pm 0.014}$ & $0.784_{\pm 0.011}$ \\
        & SWAG & $0.378_{\pm 0.005}$ & $0.324_{\pm 0.001}$ & $0.731_{\pm 0.005}$ & $0.601_{\pm 0.027}$ & $0.820_{\pm 0.002}$ \\
        \rowcolor{gray!10} \cellcolor{white}& \texttt{uCBOpt} $(\vartheta\to0^{+})$ & $0.266_{\pm 0.017}$ & $0.267_{\pm 0.018}$ & $0.796_{\pm 0.019}$ & $0.700_{\pm 0.024}$ & $0.865_{\pm 0.015}$ \\
        \rowcolor{gray!10} \cellcolor{white}& \texttt{uCBOpt} $(\vartheta=8 \cdot 10^{-6})$ & $0.266_{\pm 0.007}$ & $0.271_{\pm 0.002}$ & $0.794_{\pm 0.005}$ & $0.694_{\pm 0.023}$ & $0.865_{\pm 0.002}$ \\
         \rowcolor{gray!10} \cellcolor{white}& \texttt{uCBOpt-adapt} & $0.272_{\pm 0.011}$ & $0.290_{\pm 0.015}$ & $0.773_{\pm 0.015}$ & $0.655_{\pm 0.016}$ & $0.857_{\pm 0.009}$ \\
        \rowcolor{gray!10} \cellcolor{white}& \texttt{lCBOpt-adapt} & $0.269_{\pm 0.016}$ & $0.278_{\pm 0.016}$ & $0.784_{\pm 0.019}$ & $0.674_{\pm 0.049}$ & $0.859_{\pm 0.011}$ \\
        \cmidrule{1-7}
        \multirow{11}{*}{\makecell[l]{\textbf{In:} CIFAR-10 \\ \textbf{Out:} SVHN}}
        & AdamW & $0.359_{\pm 0.052}$ & $0.224_{\pm 0.024}$ & $0.830_{\pm 0.027}$ & $0.755_{\pm 0.042}$ & $0.893_{\pm 0.017}$ \\
        & SGD & $0.231_{\pm 0.007}$ & $0.200_{\pm 0.003}$ & $0.858_{\pm 0.005}$ & $0.800_{\pm 0.007}$ & $0.913_{\pm 0.005}$ \\
        & IVON@mean & $0.251_{\pm 0.015}$ & $0.213_{\pm 0.011}$ & $0.842_{\pm 0.013}$ & $0.785_{\pm 0.018}$ & $0.900_{\pm 0.008}$ \\
        & IVON & $0.223_{\pm 0.010}$ & $0.193_{\pm 0.010}$ & $0.860_{\pm 0.010}$ & $0.816_{\pm 0.014}$ & $0.911_{\pm 0.007}$ \\
        & MC-D & $0.233_{\pm 0.025}$ & $0.210_{\pm 0.020}$ & $0.844_{\pm 0.021}$ & $0.792_{\pm 0.026}$ & $0.900_{\pm 0.013}$ \\
        & Laplace & $0.244_{\pm 0.019}$ & $0.200_{\pm 0.016}$ & $0.859_{\pm 0.015}$ & $0.801_{\pm 0.022}$ & $0.915_{\pm 0.010}$ \\
        & SWAG & $0.197_{\pm 0.012}$ & $0.175_{\pm 0.004}$ & $0.879_{\pm 0.007}$ & $0.841_{\pm 0.006}$ & $0.924_{\pm 0.007}$ \\
        \rowcolor{gray!10} \cellcolor{white}& \texttt{uCBOpt} $(\vartheta\to0^{+})$& $0.212_{\pm 0.022}$ & $0.187_{\pm 0.014}$ & $0.868_{\pm 0.016}$ & $0.820_{\pm 0.022}$ & $0.918_{\pm 0.011}$ \\
        \rowcolor{gray!10} \cellcolor{white}& \texttt{uCBOpt} $(\vartheta=5\cdot10^{-5})$ & $0.219_{\pm 0.008}$ & $0.196_{\pm 0.004}$ & $0.860_{\pm 0.004}$ & $0.804_{\pm 0.006}$ & $0.914_{\pm 0.003}$ \\
         \rowcolor{gray!10} \cellcolor{white} & \texttt{uCBOpt-adapt} & $0.239_{\pm 0.033}$ & $0.204_{\pm 0.017}$ & $0.854_{\pm 0.019}$ & $0.791_{\pm 0.026}$ & $0.911_{\pm 0.012}$ \\
        \rowcolor{gray!10} \cellcolor{white}& \texttt{lCBOpt-adapt} & $0.218_{\pm 0.008}$ & $0.196_{\pm 0.005}$ & $0.859_{\pm 0.005}$ & $0.806_{\pm 0.006}$ & $0.912_{\pm 0.004}$ \\
        \cmidrule{1-7}
        \multirow{8}{*}{\makecell[l]{\textbf{In:} CIFAR-100 \\ \textbf{Out:} TinyImageNet}}
        & SGD & $0.322_{\pm 0.003}$ & $0.315_{\pm 0.003}$ & $0.744_{\pm 0.003}$ & $0.767_{\pm 0.003}$ & $0.705_{\pm 0.004}$ \\
        & IVON@mean & $0.346_{\pm 0.001}$ & $0.336_{\pm 0.002}$ & $0.718_{\pm 0.002}$ & $0.744_{\pm 0.001}$ & $0.677_{\pm 0.003}$ \\
        & IVON & $0.342_{\pm 0.006}$ & $0.321_{\pm 0.003}$ & $0.739_{\pm 0.005}$ & $0.765_{\pm 0.003}$ & $0.698_{\pm 0.005}$ \\
        & MC-D & $0.326_{\pm 0.002}$ & $0.316_{\pm 0.005}$ & $0.738_{\pm 0.005}$ & $0.762_{\pm 0.006}$ & $0.697_{\pm 0.004}$ \\
        & \graycell{}\texttt{uCBOpt} $(\vartheta\to0^{+})$ & \graycell{}$0.331_{\pm 0.004}$ & \graycell{}$0.319_{\pm 0.004}$ & \graycell{}$0.739_{\pm 0.004}$ & \graycell{}$0.762_{\pm 0.003}$ & \graycell{}$0.700_{\pm 0.004}$ \\
        \rowcolor{gray!10} \cellcolor{white}& \texttt{uCBOpt} $(\vartheta=5\cdot10^{-5})$ & $0.330_{\pm 0.005}$ & $0.318_{\pm 0.003}$ & $0.739_{\pm 0.004}$ & $0.762_{\pm 0.002}$ & $0.701_{\pm 0.003}$ \\
         \rowcolor{gray!10} \cellcolor{white}& \texttt{uCBOpt-adapt} & $0.322_{\pm 0.001}$ & $0.312_{\pm 0.004}$ & $0.746_{\pm 0.002}$ & $0.769_{\pm 0.005}$ & $0.707_{\pm 0.002}$ \\
        \rowcolor{gray!10} \cellcolor{white}& \texttt{lCBOpt-adapt} & $0.341_{\pm 0.003}$ & $0.326_{\pm 0.003}$ & $0.732_{\pm 0.002}$ & $0.757_{\pm 0.001}$ & $0.692_{\pm 0.001}$ \\
    \bottomrule
    \end{tabular}}
\vskip -0.1in
\end{table*}

\newpage

\section{Ablation Studies}
\label{sec:ablation}

\subsection{Varying $\vartheta$ in \texttt{uCBOpt}}

We perform an ablation over the fixed candidate's curvature parameter, denoted by $\vartheta$ in \texttt{uCBOpt}. The $\vartheta\to0^{+}$ configuration is practically implemented as $\vartheta = 0$. The model is a LeNet trained on Fashion-MNIST. The results are summarized in the following \autoref{tab:ablation_vartheta}.

We observe that \texttt{uCBOpt} is relatively stable across a broad range of $\vartheta$ values, with only small variations in in-domain accuracy and calibration metrics. However, the choice of $\vartheta$ has a more noticeable effect on OOD detection performance. In particular, intermediate values around $\vartheta = 8 \cdot 10^{-6}$ achieve the strongest overall OOD performance, yielding the lowest FPR@95 and highest AUROC and AUPR-Out. Very small or larger values of $\vartheta$ tend to slightly degrade OOD detection quality, suggesting that an appropriate positive curvature floor can improve uncertainty estimation without significantly affecting in-domain predictive performance.

\begin{table*}[h!]
\caption{Effect of the $\vartheta$ parameter in \texttt{uCBOpt} on performance averaged over three runs.}
\label{tab:ablation_vartheta}
\centering
\setlength{\tabcolsep}{4pt}
\resizebox{0.85\linewidth}{!}{
\begin{tabular}{lcccccc}
\toprule
& \multicolumn{3}{c}{In-domain} & \multicolumn{3}{c}{Out-of-domain} \\
\cmidrule(lr){2-4}\cmidrule(lr){5-7}
$\vartheta$
& Acc. $\uparrow$ & NLL $\downarrow$ & ECE $\downarrow$ & FPR@95 $\downarrow$ & AUROC $\uparrow$ & AUPR-Out $\uparrow$ \\
\midrule
$0^{+}$ & $0.905_{\pm 0.002}$ & $0.269_{\pm 0.004}$ & $0.020_{\pm 0.006}$ & $0.262_{\pm 0.012}$ & $0.796_{\pm 0.014}$ & $0.867_{\pm 0.010}$ \\
$1 \cdot 10^{-7}$ & $0.905_{\pm 0.002}$ & $0.267_{\pm 0.001}$ & $0.019_{\pm 0.006}$ & $0.265_{\pm 0.008}$ & $0.786_{\pm 0.011}$ & $0.278_{\pm 0.007}$ \\
$2 \cdot 10^{-7}$ & $0.905_{\pm 0.002}$ & $0.271_{\pm 0.002}$ & $0.021_{\pm 0.007}$ & $0.255_{\pm 0.008}$ & $0.799_{\pm 0.009}$ & $0.871_{\pm 0.005}$ \\
$5 \cdot 10^{-7}$ & $0.906_{\pm 0.005}$ & $0.272_{\pm 0.006}$ & $0.021_{\pm 0.004}$ & $0.265_{\pm 0.023}$ & $0.786_{\pm 0.027}$ & $0.863_{\pm 0.020}$ \\
$8 \cdot 10^{-7}$ & $0.905_{\pm 0.001}$ & $0.269_{\pm 0.005}$ & $0.021_{\pm 0.002}$ & $0.272_{\pm 0.020}$ & $0.781_{\pm 0.024}$ & $0.856_{\pm 0.019}$ \\
$1 \cdot 10^{-6}$ & $0.907_{\pm 0.002}$ & $0.270_{\pm 0.002}$ & $0.023_{\pm 0.000}$ & $0.262_{\pm 0.012}$ & $0.796_{\pm 0.015}$ & $0.868_{\pm 0.010}$ \\
$2 \cdot 10^{-6}$ & $0.906_{\pm 0.003}$ & $0.273_{\pm 0.003}$ & $0.021_{\pm 0.002}$ & $0.278_{\pm 0.028}$ & $0.773_{\pm 0.031}$ & $0.852_{\pm 0.020}$ \\
$5 \cdot 10^{-6}$ & $0.908_{\pm 0.004}$ & $0.267_{\pm 0.002}$ & $0.020_{\pm 0.006}$ & $0.259_{\pm 0.010}$ & $0.795_{\pm 0.012}$ & $0.867_{\pm 0.009}$ \\
\rowcolor{gray!10}
$8 \cdot 10^{-6}$ & $0.906_{\pm 0.003}$ & $0.263_{\pm 0.006}$ & $0.018_{\pm 0.006}$ & $0.245_{\pm 0.010}$ & $0.810_{\pm 0.012}$ & $0.879_{\pm 0.010}$ \\
$1 \cdot 10^{-5}$ & $0.907_{\pm 0.003}$ & $0.269_{\pm 0.002}$ & $0.020_{\pm 0.005}$ & $0.275_{\pm 0.006}$ & $0.777_{\pm 0.009}$ & $0.284_{\pm 0.010}$ \\
\bottomrule
\end{tabular}}
\vskip -0.1in
\end{table*}

\subsection{Varying $\gamma$ and $\beta_3$ in \texttt{uCBOpt-adapt}}

We perform an ablation over the $\gamma$ and $\beta_3$ parameters in \texttt{uCBOpt-adapt}. The model is a LeNet trained on Fashion-MNIST. The results are summarized in the following \autoref{tab:ablation_gamma_ucbo} and \autoref{tab:ablation_beta3_ucbo}.

For the $\gamma$ ablation, we observe that moderate-to-large values of $\gamma$ generally improve in-domain performance, with $\gamma = 0.9$ achieving the best overall trade-off between accuracy, NLL, and calibration. However, when $\gamma$ becomes too large ($\gamma = 0.95$), performance deteriorates substantially, indicating that overly aggressive curvature correction can destabilize optimization. We also observe that OOD performance is relatively stable across most $\gamma$ values, though the best OOD metrics are obtained at smaller $\gamma$ values.

For the $\beta_3$ ablation, the results are comparatively stable across a broad range of values, suggesting that \texttt{uCBOpt-adapt} is not highly sensitive to the precise choice of $\beta_3$. Nevertheless, $\beta_3 = 1.001$ achieves the strongest overall performance, providing the best balance between in-domain accuracy, calibration, and OOD detection metrics. Extremely large values such as $\beta_3 = 1.1$ can negatively affect calibration, as reflected by the substantially higher ECE.

\begin{table*}[h!]
\caption{Effect of the $\gamma$ parameter in \texttt{uCBOpt-adapt} on performance averaged over three runs.}
\label{tab:ablation_gamma_ucbo}
\centering
\setlength{\tabcolsep}{4pt}
\resizebox{0.85\linewidth}{!}{
\begin{tabular}{lcccccc}
\toprule
& \multicolumn{3}{c}{In-domain} & \multicolumn{3}{c}{Out-of-domain} \\
\cmidrule(lr){2-4}\cmidrule(lr){5-7}
$\gamma$
& Acc. $\uparrow$ & NLL $\downarrow$ & ECE $\downarrow$ & FPR@95 $\downarrow$ & AUROC $\uparrow$ & AUPR-Out $\uparrow$ \\
\midrule
$0.0$ & $0.905_{\pm 0.002}$ & $0.269_{\pm 0.004}$ & $0.020_{\pm 0.006}$ & $0.262_{\pm 0.012}$ & $0.796_{\pm 0.014}$ & $0.867_{\pm 0.010}$ \\
$0.05$ & $0.907_{\pm 0.002}$ & $0.268_{\pm 0.004}$ & $0.021_{\pm 0.003}$ & $0.274_{\pm 0.013}$ & $0.790_{\pm 0.013}$ & $0.862_{\pm 0.008}$ \\
$0.1$ & $0.903_{\pm 0.002}$ & $0.276_{\pm 0.004}$ & $0.022_{\pm 0.001}$ & $0.295_{\pm 0.003}$ & $0.753_{\pm 0.003}$ & $0.841_{\pm 0.004}$ \\
$0.3$ & $0.905_{\pm 0.001}$ & $0.273_{\pm 0.007}$ & $0.022_{\pm 0.004}$ & $0.273_{\pm 0.007}$ & $0.778_{\pm 0.009}$ & $0.858_{\pm 0.006}$ \\
$0.5$ & $0.904_{\pm 0.001}$ & $0.266_{\pm 0.004}$ & $0.017_{\pm 0.003}$ & $0.279_{\pm 0.009}$ & $0.769_{\pm 0.010}$ & $0.853_{\pm 0.007}$ \\
$0.7$ & $0.909_{\pm 0.002}$ & $0.265_{\pm 0.004}$ & $0.025_{\pm 0.002}$ & $0.280_{\pm 0.015}$ & $0.773_{\pm 0.016}$ & $0.855_{\pm 0.009}$ \\
\rowcolor{gray!10}
$0.9$ & $0.910_{\pm 0.001}$ & $0.256_{\pm 0.003}$ & $0.016_{\pm 0.001}$ & $0.272_{\pm 0.011}$ & $0.773_{\pm 0.015}$ & $0.857_{\pm 0.009}$ \\
$0.95$ & $0.832_{\pm 0.005}$ & $0.460_{\pm 0.011}$ & $0.017_{\pm 0.009}$ & $0.299_{\pm 0.073}$ & $0.743_{\pm 0.092}$ & $0.849_{\pm 0.056}$ \\
\bottomrule
\end{tabular}}
\vskip -0.1in
\end{table*}

\begin{table*}[h!]
\caption{Effect of the $\beta_3$ parameter in \texttt{uCBOpt-adapt} on performance averaged over three runs.}
\label{tab:ablation_beta3_ucbo}
\centering
\setlength{\tabcolsep}{4pt}
\resizebox{0.85\linewidth}{!}{
\begin{tabular}{lcccccc}
\toprule
& \multicolumn{3}{c}{In-domain} & \multicolumn{3}{c}{Out-of-domain} \\
\cmidrule(lr){2-4}\cmidrule(lr){5-7}
$\beta_3$
& Acc. $\uparrow$ & NLL $\downarrow$ & ECE $\downarrow$ & FPR@95 $\downarrow$ & AUROC $\uparrow$ & AUPR-Out $\uparrow$ \\
\midrule
$1.00001$ & $0.908_{\pm 0.002}$ & $0.256_{\pm 0.008}$ & $0.017_{\pm 0.003}$ & $0.285_{\pm 0.014}$ & $0.767_{\pm 0.019}$ & $0.851_{\pm 0.009}$ \\
$1.0001$  & $0.909_{\pm 0.002}$ & $0.256_{\pm 0.001}$ & $0.020_{\pm 0.001}$ & $0.288_{\pm 0.004}$ & $0.753_{\pm 0.006}$ & $0.847_{\pm 0.003}$ \\
\rowcolor{gray!10}
$1.001$ & $0.910_{\pm 0.001}$ & $0.256_{\pm 0.003}$ & $0.016_{\pm 0.001}$ & $0.272_{\pm 0.011}$ & $0.773_{\pm 0.015}$ & $0.857_{\pm 0.009}$ \\
$1.01$ & $0.909_{\pm 0.002}$ & $0.256_{\pm 0.003}$ & $0.018_{\pm 0.003}$ & $0.277_{\pm 0.010}$ & $0.771_{\pm 0.013}$ & $0.853_{\pm 0.007}$ \\
$1.1$ & $0.909_{\pm 0.002}$ & $0.256_{\pm 0.003}$ & $0.133_{\pm 0.002}$ & $0.277_{\pm 0.010}$ & $0.771_{\pm 0.013}$ & $0.853_{\pm 0.007}$ \\
\bottomrule
\end{tabular}}
\vskip -0.1in
\end{table*}

\subsection{Varying $\gamma$ and $\beta_3$ in \texttt{lCBOpt-adapt}}

We perform an ablation over the $\gamma$ and $\beta_3$ parameters in \texttt{lCBOpt-adapt}. The model is a LeNet trained on Fashion-MNIST. The results are summarized in the following \autoref{tab:ablation_gamma_lcbo} and \autoref{tab:ablation_beta3_lcbo}.

For the $\gamma$ ablation, we observe that \texttt{lCBOpt-adapt} is sensitive to the choice of $\gamma$. Very small values close to $1$ (e.g., $\gamma = 1.005$) lead to substantially degraded optimization performance, while moderate values between $1.02$ and $1.10$ provide consistently strong in-domain and OOD results. In particular, $\gamma = 1.02$ achieves the best overall in-domain performance, whereas slightly larger values such as $\gamma = 1.05$ and $\gamma = 1.10$ provide stronger OOD detection metrics. Larger $\gamma$ values beyond this range gradually reduce performance.

For the $\beta_3$ ablation, all metrics remain effectively identical across the tested values, indicating that \texttt{lCBOpt-adapt} is largely insensitive to $\beta_3$ in this setting. This behaviour is likely due to the large $\beta_2$ value used in the optimizer, causing the curvature estimates to evolve very slowly and limiting the influence of the $\beta_3$ parameter.

\begin{table*}[h!]
\caption{Effect of the $\gamma$ parameter in \texttt{lCBOpt-adapt} on performance averaged over three runs.}
\label{tab:ablation_gamma_lcbo}
\centering
\setlength{\tabcolsep}{4pt}
\resizebox{0.85\linewidth}{!}{
\begin{tabular}{lcccccc}
\toprule
& \multicolumn{3}{c}{In-domain} & \multicolumn{3}{c}{Out-of-domain} \\
\cmidrule(lr){2-4}\cmidrule(lr){5-7}
$\gamma$
& Acc. $\uparrow$ & NLL $\downarrow$ & ECE $\downarrow$ & FPR@95 $\downarrow$ & AUROC $\uparrow$ & AUPR-Out $\uparrow$ \\
\midrule
$1.005$ & $0.838_{\pm 0.009}$ & $0.460_{\pm 0.036}$ & $0.020_{\pm 0.005}$ & $0.299_{\pm 0.016}$ & $0.744_{\pm 0.016}$ & $0.850_{\pm 0.007}$ \\
$1.01$  & $0.909_{\pm 0.002}$ & $0.253_{\pm 0.003}$ & $0.017_{\pm 0.001}$ & $0.282_{\pm 0.015}$ & $0.767_{\pm 0.019}$ & $0.849_{\pm 0.013}$ \\
\rowcolor{gray!10}
$1.02$  & $0.913_{\pm 0.002}$ & $0.252_{\pm 0.004}$ & $0.020_{\pm 0.000}$ & $0.269_{\pm 0.015}$ & $0.784_{\pm 0.019}$ & $0.859_{\pm 0.011}$ \\
$1.03$  & $0.911_{\pm 0.001}$ & $0.262_{\pm 0.001}$ & $0.024_{\pm 0.002}$ & $0.273_{\pm 0.009}$ & $0.780_{\pm 0.010}$ & $0.857_{\pm 0.004}$ \\
$1.05$  & $0.908_{\pm 0.001}$ & $0.267_{\pm 0.007}$ & $0.026_{\pm 0.005}$ & $0.261_{\pm 0.014}$ & $0.796_{\pm 0.014}$ & $0.865_{\pm 0.009}$ \\
$1.07$  & $0.907_{\pm 0.002}$ & $0.272_{\pm 0.005}$ & $0.022_{\pm 0.005}$ & $0.282_{\pm 0.004}$ & $0.769_{\pm 0.002}$ & $0.851_{\pm 0.004}$ \\
$1.10$  & $0.906_{\pm 0.002}$ & $0.268_{\pm 0.003}$ & $0.023_{\pm 0.004}$ & $0.258_{\pm 0.012}$ & $0.795_{\pm 0.015}$ & $0.868_{\pm 0.010}$ \\
$1.20$  & $0.907_{\pm 0.002}$ & $0.272_{\pm 0.006}$ & $0.020_{\pm 0.006}$ & $0.267_{\pm 0.014}$ & $0.790_{\pm 0.019}$ & $0.862_{\pm 0.013}$ \\
$1.50$  & $0.904_{\pm 0.003}$ & $0.274_{\pm 0.005}$ & $0.019_{\pm 0.001}$ & $0.279_{\pm 0.006}$ & $0.775_{\pm 0.005}$ & $0.852_{\pm 0.004}$ \\
$2.0$   & $0.901_{\pm 0.002}$ & $0.282_{\pm 0.005}$ & $0.014_{\pm 0.002}$ & $0.283_{\pm 0.008}$ & $0.763_{\pm 0.011}$ & $0.847_{\pm 0.009}$ \\
\bottomrule
\end{tabular}}
\vskip -0.1in
\end{table*}

\begin{table*}[h!]
\caption{Effect of the $\beta_3$ parameter in \texttt{lCBOpt-adapt} on performance averaged over three runs.}
\label{tab:ablation_beta3_lcbo}
\centering
\setlength{\tabcolsep}{4pt}
\resizebox{0.85\linewidth}{!}{
\begin{tabular}{lcccccc}
\toprule
& \multicolumn{3}{c}{In-domain} & \multicolumn{3}{c}{Out-of-domain} \\
\cmidrule(lr){2-4}\cmidrule(lr){5-7}
$\beta_3$
& Acc. $\uparrow$ & NLL $\downarrow$ & ECE $\downarrow$ & FPR@95 $\downarrow$ & AUROC $\uparrow$ & AUPR-Out $\uparrow$ \\
\midrule
$0.9$ & $0.913_{\pm 0.002}$ & $0.252_{\pm 0.004}$ & $0.020_{\pm 0.000}$ & $0.269_{\pm 0.015}$ & $0.784_{\pm 0.019}$ & $0.859_{\pm 0.011}$ \\
$0.99$  & $0.913_{\pm 0.002}$ & $0.252_{\pm 0.004}$ & $0.020_{\pm 0.000}$ & $0.269_{\pm 0.015}$ & $0.784_{\pm 0.019}$ & $0.859_{\pm 0.011}$ \\
\rowcolor{gray!10}
$0.999$ & $0.913_{\pm 0.002}$ & $0.252_{\pm 0.004}$ & $0.020_{\pm 0.000}$ & $0.269_{\pm 0.015}$ & $0.784_{\pm 0.019}$ & $0.859_{\pm 0.011}$ \\
$0.9999$ & $0.913_{\pm 0.002}$ & $0.252_{\pm 0.004}$ & $0.020_{\pm 0.000}$ & $0.269_{\pm 0.015}$ & $0.784_{\pm 0.019}$ & $0.859_{\pm 0.011}$ \\
$0.99999$ & $0.913_{\pm 0.002}$ & $0.252_{\pm 0.004}$ & $0.020_{\pm 0.000}$ & $0.269_{\pm 0.015}$ & $0.784_{\pm 0.019}$ & $0.859_{\pm 0.011}$ \\
\bottomrule
\end{tabular}}
\vskip -0.1in
\end{table*}

\newpage

\section{Additional Experiment: Combining \texttt{CBOpt} with MC Dropout}
\label{sec:mcdrop-cbopt}

We further evaluate whether \texttt{CBOpt} can be combined with MC Dropout. The experiment is conducted on ResNet-20 trained on CIFAR-10, with SVHN used as the out-of-domain dataset. The results are reported in \autoref{tab:mcdrop-cbopt}.

Combining \texttt{CBOpt} with MC Dropout generally preserves the strong in-domain performance of MC Dropout while improving several OOD detection metrics (bolded results). In particular, \texttt{uCBOpt-adapt} + MC-D achieves the best in-domain accuracy, NLL, and ECE among the reported methods, suggesting that the adaptive update remains beneficial even when stochastic dropout inference is used. For OOD detection, the \texttt{CBOpt} + MC-D variants improve AUROC and AUPR-Out over standard MC Dropout, with \texttt{lCBOpt-adapt} + MC-D achieving the highest AUROC and AUPR-Out. The fixed-$\vartheta$ \texttt{uCBOpt} variant also substantially reduces FPR@95, although its AUROC improvement is more modest. Overall, these results suggest that \texttt{CBOpt} and MC Dropout are complementary: MC Dropout improves calibration via stochastic inference, whereas the \texttt{CBOpt} update can further improve OOD separability.

\begin{table*}[h!]
\caption{Experimental results for ResNet-20/CIFAR-10 with \texttt{CBOpt} combined with MC dropout (best validation model).}
\label{tab:mcdrop-cbopt}
\centering
\setlength{\tabcolsep}{4pt}
\resizebox{\linewidth}{!}{
\begin{tabular}{lcccccc}
\toprule
& \multicolumn{3}{c}{In-domain} & \multicolumn{3}{c}{Out-of-domain} \\
\cmidrule(lr){2-4}\cmidrule(lr){5-7}
Optimizer
& Acc. $\uparrow$ & NLL $\downarrow$ & ECE $\downarrow$ & FPR@95 $\downarrow$ & AUROC $\uparrow$ & AUPR-Out $\uparrow$ \\
\midrule
MC-D & $0.923_{\pm 0.002}$ & $0.230_{\pm 0.006}$ & $0.010_{\pm 0.002}$ & $0.230_{\pm 0.030}$ & $0.847_{\pm 0.024}$ & $0.902_{\pm 0.017}$ \\
\rowcolor{gray!10}
\texttt{uCBOpt} $(\vartheta\to0^{+})$ + MC-D & $0.921_{\pm 0.003}$ & $0.236_{\pm 0.005}$ & $0.010_{\pm 0.002}$ & $0.237_{\pm 0.023}$ & $0.859_{\pm 0.017}$ & $\bm{0.906}_{\pm 0.013}$ \\
\rowcolor{gray!10}
\texttt{uCBOpt} $(\vartheta=5\cdot10^{-5})$ + MC-D & $0.921_{\pm 0.005}$ & $0.239_{\pm 0.007}$ & $0.010_{\pm 0.003}$ & $\bm{0.192}_{\pm 0.019}$ & $0.854_{\pm 0.023}$ & $0.903_{\pm 0.015}$ \\
\rowcolor{gray!10}
\texttt{uCBOpt-adapt} + MC-D & $0.924_{\pm 0.002}$ & $0.228_{\pm 0.003}$ & $0.007_{\pm 0.001}$ & $0.220_{\pm 0.013}$ & $\bm{0.861}_{\pm 0.009}$ & $\bm{0.908}_{\pm 0.008}$ \\
\rowcolor{gray!10}
\texttt{lCBOpt-adapt} + MC-D & $0.918_{\pm 0.001}$ & $0.241_{\pm 0.002}$ & $0.010_{\pm 0.001}$ & $0.230_{\pm 0.012}$ & $\bm{0.867}_{\pm 0.007}$ & $\bm{0.911}_{\pm 0.007}$ \\
\bottomrule
\end{tabular}}
\vskip -0.1in
\end{table*}

\end{document}